\def\eqref#1{equation~(\ref{#1})}
\def\1{\bf{1}}
\def\fD{{\mathcal{D}}}
\def\BR{{\mathbb{R}}}
\newcommand{\E}{\mathbb{E}}
\theoremstyle{plain}
\def\Ddots{\mathinner{\mkern1mu\raise\p@
\vbox{\kern7\p@\hbox{.}}\mkern2mu
\raise4\p@\hbox{.}\mkern2mu\raise7\p@\hbox{.}\mkern1mu}}
\newcommand*{\rom}[1]{\expandafter\@slowromancap\romannumeral #1@}
\theoremstyle{plain}
\newtheorem{theorem}{Theorem}[section]
\newtheorem{proposition}[theorem]{Proposition}
\newtheorem{lemma}[theorem]{Lemma}
\newtheorem{corollary}[theorem]{Corollary}
\theoremstyle{definition}
\newtheorem{definition}[theorem]{Definition}
\newtheorem{assumption}[theorem]{Assumption}
\newtheorem{remark}[theorem]{Remark}
\title{Stability and Generalization of Nonconvex Optimization with Heavy-Tailed Noise}
\date{}
\author{Hongxu Chen \qquad Ke Wei  \qquad Xiaoming Yuan \qquad Luo Luo}
\begin{document}

\maketitle

\begin{abstract}
The empirical evidence indicates that stochastic optimization with heavy-tailed gradient noise is more appropriate to characterize the training of machine learning models than that with standard bounded gradient variance noise. Most existing works on this phenomenon focus on the  convergence of optimization errors, while the analysis for generalization bounds under the heavy-tailed gradient noise remains limited. In this paper, we develop a general framework for establishing generalization bounds under heavy-tailed noise. Specifically, we introduce a truncation argument to achieve the generalization error bound based on the algorithmic stability under the assumption of bounded $p$th centered moment with $p\in(1,2]$. Building on this framework, we further provide the stability and generalization analysis for several popular stochastic algorithms under heavy-tailed noise, including clipped and normalized stochastic gradient descent, as well as their mini-batch and momentum variants.
\end{abstract}

\section{Introduction}
Over the past decades, machine learning develops very rapidly and has achieved remarkable progress in many areas including computer vision \cite{voulodimos2018deep,szeliski2022computer}, natural language processing \cite{devlin2019bert,chowdhary2020natural}, and reinforcement learning \cite{sutton1998reinforcement,guo2025deepseek}.
In many machine learning applications, the task of training the model can be formulated as an optimization problem, where one seeks parameters that yield strong generalization performance. 
A canonical formulation is to find the parameter by minimizing the population risk as follows
\begin{align}
\label{eq_prm}
\min_{x \in \mathbb{R}^d} F(x) := \mathbb{E}_{\xi \sim \mathcal{D}} \bigl[ f(x; \xi) \bigr],
\end{align}
where $\xi$ denotes  data  that follows a distribution~$\mathcal{D}$ and $f(x; \xi)$ is the corresponding loss function. 
In practice, the explicit expression of the objective in formulation~(\ref{eq_prm}) is typically unavailable,
but can be approximated via a training dataset $S = \{ \xi_1, \dots, \xi_n \}$ of $n$ i.i.d. samples drawn from $\fD$.
This yields the empirical risk minimization problem
\begin{align}
\label{eq_erm}
\min_{x\in \mathbb{R}^d} F_S(x) := \frac{1}{n} \sum_{i=1}^n f(x; \xi_i).
\end{align}

Since the training datasets often contains a large number of samples, it is prohibitively expensive to compute the full gradient of (\ref{eq_erm}). 
Thus the stochastic optimization methods that exploit the finite-sum structure are natural alternatives. 
A representative one is stochastic gradient descent (SGD), which at each iteration uses the gradient of a single randomly sampled data point to approximate the full gradient. 
Its low per-iteration cost and simple update rule make it effective for training large models.
The analysis of SGD can be traced back to \citet{robbins1951stochastic}.
For convex problems, convergence can be guaranteed by classical results \cite{polyak1992acceleration,nemirovski2009robust}. For nonconvex problems, analyses typically rely on the magnitude of gradients \cite{ghadimi2013stochastic, bottou2018optimization}.
Moreover, many variants of SGD, including momentum methods \cite{polyak1964some, nesterov1983method, sutskever2013importance, liu2020improved}, variance reduction \cite{johnson2013accelerating,zhang2013linear,defazio2014saga,agarwal2015lower,woodworth2016tight,schmidt2017minimizing,allen2018katyusha,kovalev2020don,fang2018spider,li2021page}, and adaptive learning rates \cite{duchi2011adaptive,kingma2014adam,zeiler2012adadelta,reddi2018convergence,loshchilov2019decoupled,ivgi2023dog}, have improved theoretical bounds or empirical performance.

In real-world applications, we are more interested in the generalization measured on the population risk (\ref{eq_prm}), while optimization algorithms focus on the empirical risk  (\ref{eq_erm}). 
We use notation $A(S)$ to present the output of algorithm $A$ to solving problem (\ref{eq_erm}) with training dataset $S$. 
For convex objective, the generalization error is commonly measured by the difference of function value, i.e., $F(A(S)) - F_S(A(S))$. Specifically, \citet{bousquet2002stability} introduced the notion of uniform stability and established a approach for the analysis of generalization bounds from this perspective. 
In subsequent work, \citet{hardt2016train} analyzed  stability and generalization of SGD under the assumptions of Lipschitz and smoothness, providing  theoretical explanation for the early stopping. 
The smoothness assumption is removed by \citet{lei2020fine}. 
Later, \citet{feldman2019high} strengthened the stability-based bounds from expectation to near-optimal high-probability guarantees.
For nonconvex objective, the generalization error is usually measured by the gradient discrepancy, i.e., $\| \nabla F(A(S)) - \nabla F_S(A(S)) \|$. 
Specifically, \citet{lei2023stability} introduced the notion of uniform stability with respect to the gradient, yielding to a general
framework for the generalization analysis for nonconvex stochastic optimization algorithms. 
In addition, several recent works analyzed the stability and generalization of problems with minimax formulations \cite{lei2021stability,zhang2024generalization} and distributed settings \cite{deng2024stability,le2024improved,zhu2024stability,zeng2025stability}.

However, the existing results typically rely on the standard bounded-variance assumption. 
Recent empirical evidence suggests that, in the popular tasks such as deep neural network training \cite{simsekli2019tail,zhang2020adaptive,ahn2024linear,battash2024revisiting} and reinforcement learning \cite{garg2021proximal}, the noise of stochastic gradients follows heavy-tailed distribution.
\citet{simsekli2019tail} modeled this phenomenon using the $\alpha$-stable distribution and analyzed the dynamics of SGD through a Lévy-driven SDE. 
In optimization theory, a common way to formalize the heavy-tailed noise is the assumption of bounded $p$th centered moment, i.e., there exist  $p \in (1, 2]$ and~$\sigma_p > 0$ such that for all $x\in\BR^d$, it holds 
\begin{align} 
\label{eq:p-bcm}
\mathbb{E}_{\xi \sim \mathcal{D}} \left[ \left\| \nabla f(x; \xi) - \nabla F(x) \right\|^{p} \right] \le \sigma_p^{p}. 
\tag{$p$-BCM}
\end{align}
Under the assumption of $p$-BCM,
\citet{zhang2020adaptive} established the in-expectation convergence guarantees for clipped SGD. 
\citet{cutkosky2021high} derived the high-probability convergence guarantees accordingly.
\citet{nguyen2023improved} and \citet{liu2023breaking} further improved the high-probability analysis and developed variance-reduced clipped SGD methods \cite{liu2023breaking}.
The recent works show that normalized SGD also guarantees the convergence of stochastic nonconvex optimization under the $p$-BCM assumption. 
Specifically, \citet{hubler2025gradient} show that normalized SGD with momentum or large batch size can achieve the optimal convergence rates like the counterparts of clipped SGD. 
In addition, \citet{sun2025revisiting} combines clipping and normalization with momentum to establish the sharper convergence rates by assuming the objective has Lipschitz continuous Hessian. 
\citet{he2025complexity} further analyzed the acceleration by the high-order smoothness.
Recently, distributed stochastic optimization under heavy-tailed noise has also been investigated \cite{wang2025near,yu2025decentralized,sun2025distributed}.

The understanding of generalization analysis under the $p$-BCM assumption remains limited.
One line of works by \citet{raj2023algorithmic,raj2023general} established the algorithmic stability results for SGD under the general loss functions, while their analysis relies on continuous-time SDE arguments and Wasserstein distances. 
Recently, \citet{sun2025nonconvex} analyzed the generalization of normalized SGD under the unbounded noise and a weak gradient Lipschitz condition, which is different from commonly used $p$-BCM assumption for heavy-tailed noise.

\begin{table}[t]
    \centering
    \caption{Comparison of generalization errors for nonconvex problems measured by $\| \nabla F(A(S)) - \nabla F_S(A(S)) \|$. Here $\epsilon$ denotes the algorithmic stability parameter. Our bound holds for $p \in (1,2] $ and reduces to the result of \citet{lei2023stability} when $p=2$.}\vskip0.2cm
    \label{tab:compare}
    \begin{tabular}{ccc}
    \hline 
      Reference   & Bounds & Assumption\\
    \hline\addlinespace
       \citet{mei2018landscape}  & $O\left(\sqrt{{d\log(n)}/{n}}\right)$ & sub-Gaussian \\\addlinespace
       \citet{lei2023stability}  & $O(\epsilon + n^{-\frac{1}{2}})$ & bounded variance\\\addlinespace
       Theorem~\ref{thm:gen_error}  & $O(\epsilon + n^{-\frac{p-1}{p}})$& $p$-BCM \\\addlinespace\hline
    \end{tabular}
\end{table}

In this paper, we develop a systematic stability-based generalization analysis for stochastic nonconvex optimization under the $p$-BCM condition. We apply our framework to study two representative approaches for handling heavy-tailed noise, clipped SGD and normalized SGD, as well as their mini-batch and momentum variants.
The main contributions of this paper are summarized as follows.
\newpage
\begin{itemize}[leftmargin=0.65cm,topsep=0cm,itemsep=-0.15cm]
    \item We develop, to the best of our knowledge, the first connection between algorithmic stability and generalization error for nonconvex problems under the $p$-BCM condition. In particular, we show that the gap between population and empirical gradients can be bounded by $4\epsilon + C_p\sigma_p n^{-\frac{p-1}{p}}$, where $\epsilon$ is the algorithmic stability parameter and $C_p$ is a constant. This bound coincides with the result of \citet{lei2023stability} when $p=2$. 
    It is worth noting that our analysis is nontrivial, since we introduce a new technique of truncation argument to control heavy-tailed noise of the stochastic gradients. 
    We compare our results with existing generalization bounds for nonconvex optimization problem in Table~\ref{tab:compare}.
    \item We apply our stability-based generalization framework to analyze clipped and normalized SGD, as well as their mini-batch and momentum variants, yielding the risk bounds for nonconvex problems under heavy-tailed noise. For clipped SGD, we show that incorporating normalization and momentum does improve the stability and the generalization. 
    We also analyze the mini-batch and the momentum variants of normalized SGD to achieve sharper population risk bounds than that of clipped SGD.
\end{itemize}

\section{Related Work}

\textbf{Stability and generalization.} The generalization gap between training and testing is a central topic in statistical learning theory. Two main approaches have been developed. 
The first is uniform convergence, which primarily depends on the hypothesis class and is typically algorithm-agnostic. 
\citet{bartlett2002rademacher} defined the Rademacher complexity and used capacity measures of function classes to bound the gap between empirical and population risks.
\citet{shalev2010learnability} further discussed the relationship between stability and uniform convergence from the perspective of learnability.
In the nonconvex setting, \citet{mei2018landscape} studied uniform convergence of gradients, characterizing function class complexity via covering numbers, and \citet{foster2018uniform} further derived uniform convergence bounds using Rademacher complexity. However, uniform convergence bounds often depend on the dimension, which makes them unfavorable for high-dimensional regimes commonly encountered in machine learning.
The second line is based on algorithmic stability. The central idea is to quantify the effect of replacing a single training sample on the algorithm output and to obtain generalization bounds based on this sensitivity \cite{bousquet2002stability}. \citet{hardt2016train} established stability guarantees for stochastic gradient descent under Lipschitz and smoothness assumptions, which revealed the impact of the iterations on generalization and explained the role of early stopping. Moreover, \citet{lei2020fine} removed the smoothness assumption, and \citet{feldman2019high} derived near-optimal high-probability bounds. For nonconvex problems, stability in gradients was established in \citet{lei2023stability}. 
However, existing analyses typically require bounded variance, deriving generalization guarantees under the $p$-BCM condition remains an open problem.
While the work of \citet{raj2023algorithmic,raj2023general} considered heavy-tailed noise modeled by $\alpha$-stability, their analysis relied on continuous-time SDE arguments and applied only to stochastic gradient descent. 
Recent work \cite{dang2025algorithmic} extended this line of analysis to SGD with momentum.

\textbf{Clipped SGD.} Gradient clipping was introduced as a practical safeguard for exploding gradients in recurrent networks \cite{pascanu2013difficulty} and has since become a standard component of private learning \cite{abadi2016deep}.
In stochastic optimization, gradient clipping has often been studied under heavy-tailed gradient noise.
For convex problems, clipped mirror-descent type methods and related techniques yield non-asymptotic guarantees, including high-probability results under weak tail assumptions \cite{nazin2019algorithms,davis2021low,gorbunov2020stochastic}. For nonconvex problems, \citet{zhang2020adaptive} established convergence guarantees for clipped SGD and provided matching lower bounds, while \citet{sadiev2023high} and \citet{nguyen2023improved} developed high-probability bounds for clipped stochastic methods under unbounded variance. 
Related high-probability analyses also combined clipping with momentum and normalization in nonconvex optimization \cite{cutkosky2021high,liu2023breaking}. In addition, \citet{mai2021stability} analyzed the stability and convergence of clipped SGD for non-smooth convex functions and \citet{koloskova2023revisiting} established the convergence with a constant clipping threshold under bounded variance.

\textbf{Normalized SGD.} Gradient normalization traces back at least to \citet{nesterov2004lectures}, which analyzed the deterministic convex setting. 
\citet{hazan2015beyond} introduced normalized gradient descent in the stochastic setting and studied its convergence under quasi-convexity and local Lipschitz assumptions. 
\citet{levy2017online} analyzed the convergence of adaptive normalized gradient descent for smooth objectives. 
In the nonconvex case, \citet{cutkosky2020momentum} combined momentum with normalized SGD and showed that momentum removed the require for large batches. 
\citet{yang2023two} established lower bounds for normalized SGD in stochastic setting and compared adaptive methods without the knowledge of problem specific parameters.
Moreover, \citet{hubler2024parameter} derived nearly optimal rates for normalized SGD with momentum under relaxed smoothness. 
For heavy-tailed noise, \citet{cutkosky2021high} combined normalized SGD with gradient clipping and momentum, then provided a high-probability convergence analysis. 
\citet{liu2025nonconvex} revisited batched normalized SGD with momentum and pointed out that one can achieve the optimal rate without clipping. 
\citet{hubler2025gradient} further discussed limitations of clipping and established complexity bounds for normalized SGD with mini-batch or momentum, together with matching lower bounds.
\citet{sun2025revisiting} studied normalized SGD with clipping and obtained sharper rates under a Lipschitz continuous Hessian assumption. 
\citet{he2025complexity} analyzed momentum variants of normalized SGD and derived accelerated rates under higher order smoothness assumptions.

\section{Problem Setup}
Let $\mathcal{X} \subseteq \mathbb{R}^d$ and $\Xi$ denote the parameter space and the sample space, respectively.
Consider a loss function $f : \mathcal{X} \times \Xi \to \mathbb{R}_+$. Let $S:=\{\xi_1,\ldots,\xi_n\}$ be a dataset of $n$ i.i.d.\ samples drawn from a distribution $\mathcal{D}$ on~$\Xi$. 
We are interested in finding a parameter $x \in \mathcal{X}$ that generalizes well, as measured by the population risk
$F(x) := \mathbb{E}_{\xi \sim \mathcal{D}} \bigl[f(x;\xi)\bigr]$.
Given a training algorithm $A$ on dataset $S$, it minimizes the empirical risk
$F_S(x) := \frac{1}{n} \sum_{i=1}^n f(x;\xi_i)$,
and we denote its output by $A(S)$.
Let $\|\cdot\|$ denote the Euclidean norm, and $a \asymp b$ if $a$ and $b$ are equal up to constant factors. 
Let $\Delta:=\E_{S,A}\bigl[F_S(x_0)-F_S^*\bigr]$, where $x_0$ is the initial point of algorithms and $F_S^*:=\inf_x F_S(x)$. 
We define the clipping operator $\mathrm{clip}_{\gamma}(\cdot)$ with parameter $\gamma$ as
\begin{align}
\label{eq:clip}
\mathrm{clip}_{\gamma}(u) =
\begin{cases}
u, & \|u\| \le \gamma, \\
\gamma\cdot\dfrac{u}{\|u\|}, & \|u\| > \gamma.
\end{cases}
\end{align}

In the nonconvex setting, finding a global minimizer is NP-hard \cite{pardalos1991quadratic}, and a learning algorithm can only  guarantee to output an approximate stationary point, that is, we desire $\|\nabla F_S(A(S))\|$ to be small. 
In this case, the population risk is not an appropriate performance measure, and we instead use the population gradient norm $\|\nabla F(A(S))\|$, which can be decomposed as
\begin{equation*}
\begin{aligned}
\E_{S,A} \left[ \left\| \nabla F(A(S)) \right\| \right]
\le \underbrace{\E_{S,A} \left[ \left\| \nabla F_S(A(S)) \right\| \right]}_{\text{optimization error}} 
 + \underbrace{ \E_{S,A} \left[ \left\| \nabla F(A(S)) - \nabla F_S(A(S)) \right\| \right] }_{\text{generalization error}}&.
\end{aligned}
\end{equation*}
We will study the generalization error via algorithmic stability and derive optimization error bounds for specific algorithms under heavy-tailed noise. 

We next present the assumptions used in our analysis. First, recall the $p$-BCM assumption on heavy-tailed noise, which is stated again as follows for convenience.
\begin{assumption}
\label{ass_pbcm}
There exist $p \in (1, 2]$ and $\sigma_p > 0$ such that for all $x$, it holds
\begin{align*} 
\mathbb{E}_{\xi \sim \mathcal{D}} \left[ \left\| \nabla f(x; \xi) - \nabla F(x) \right\|^{p} \right] \le \sigma_p^{p}. 
\end{align*}
\end{assumption}

Assumption~\ref{ass_pbcm} is standard in the literature on stochastic optimization under heavy-tailed noise \cite{zhang2020adaptive,cutkosky2021high,nguyen2023improved,liu2023breaking,hubler2025gradient}.
Another way to model heavy-tailed noise is to assume an $\alpha$-stable random variable $X$ with characteristic function $\mathbb{E}\bigl[\exp(i\omega X)\bigr] = \exp\bigl(-\sigma|\omega|^{\alpha}\bigr)$, where $i$ is the imaginary unit and $\omega \in \mathbb{R}$ \cite{simsekli2019tail}. 
However, the $\alpha$-stable model can be restrictive, as it implicitly assumes that the noise is identically distributed across coordinates \cite{li2021validity,xie2021diffusion}, while Assumption~\ref{ass_pbcm} imposes a weaker moment condition and is therefore more general.
Additionally, recent work \cite{sun2025nonconvex} assumes an unbounded variance condition $\mathbb{E}_{\xi \sim \mathcal{D}}\bigl[\|\nabla f(x;\xi)-\nabla F(x)\|^2\bigr] \le c\|x\|^p+\sigma^2$,
which does not directly model heavy-tailed noise. This condition and $p$-BCM cannot be derived from each other.

\begin{assumption}
\label{ass_smooth}
For any $\xi$, the function $f(\cdot; \xi)$ is $L$-smooth, i.e., for any $x,y \in \mathbb{R}^d$, it holds
\begin{align*}
\bigl\|\nabla f(x; \xi) - \nabla f(y; \xi)\bigr\| \le L\|x-y\|.
\end{align*}
\end{assumption}
The $L$-smoothness assumption is also standard in nonconvex stochastic optimization, and it will be used in our stability analysis of clipped SGD and normalized SGD.

\section{Generalization Error Bound}
\label{sec4}
In this section, we will establish a generalization error bound based on the algorithmic stability under heavy-tailed noise.

\subsection{Algorithmic Stability}
Algorithmic stability is a central concept in statistical learning theory, measuring the sensitivity of an algorithm to the perturbation of a single sample in the training dataset.
We say $S$ and $S'$ are neighboring datasets if they differ by only a single sample.
\begin{definition}
A randomized algorithm $A$ is called \textit{$\epsilon$-uniformly-argument-stable} if for all neighboring datasets $S$ and $S'$, one has
\begin{align*}
\mathbb{E}_{A}\bigl[\|A(S)-A(S')\|\bigr]\le \epsilon.
\end{align*}
It is called \textit{$\epsilon$-uniformly-stable in gradients} if for all neighboring datasets $S$ and $S'$, one has
\begin{align*}
\sup_{\xi\in\Xi}\ \mathbb{E}_{A}\bigl[\|\nabla f(A(S);\xi)-\nabla f(A(S');\xi)\|^{2}\bigr]\le \epsilon^{2}.
\end{align*}
\end{definition}
Uniform argument stability \cite{bousquet2002stability} directly measures the change in the algorithm output under a single sample perturbation, while uniform stability in gradients \cite{lei2023stability} measures stability through the discrepancy between gradients and is more suitable for nonconvex analyses based on stationary point. 
Under the $L$-smoothness assumption, it is straightforward to show that uniform argument stability implies uniform stability in gradients.

\subsection{Generalization via Stability in Gradients}
The core idea for establishing generalization via algorithmic stability is to introduce an independent ghost sample. 
Let $S' = \{ \xi_1', \ldots, \xi_n' \}$ be drawn independently from $\mathcal{D}$.
For each $i \in [n]$, define
\begin{align*}
S^{(i)} = \{ \xi_1, \dots, \xi_{i-1}, \xi_i', \xi_{i+1}, \dots, \xi_n \}.
\end{align*}
In the convex setting, the generalization error  can be rewritten as an average of loss differences under the single replacement datasets $S^{(i)}$, that is, 
\begin{align*}
&\quad \mathbb{E}_{S,A}\bigl[F_S(A(S)) - F(A(S))\bigr] = \frac{1}{n}\sum_{i=1}^n \mathbb{E}_{S,A} \bigl[f(A(S); \xi_i) - f(A(S^{(i)}); \xi_i)\bigr],
\end{align*}
and is therefore directly bounded by the uniform stability in function values \cite{shalev2010learnability,hardt2016train}.

For nonconvex problems, a direct adaptation of the convex argument yields the population gradient bound
\begin{align*}
\mathbb{E}_{S,A}\bigl[\|\nabla F(A(S))\|\bigr]
= \frac{1}{n}\sum_{i=1}^n \mathbb{E}\Bigl[\bigl\|\mathbb{E}_{\xi_i}\bigl[\nabla f(A(S^{(i)});\xi_i)\bigr]\bigr\|\Bigr],
\end{align*}
whereas for the empirical risk one has
\begin{align*}
\|\nabla F_S(A(S))\|
= \Bigl\|\frac{1}{n}\sum_{i=1}^n \nabla f(A(S);\xi_i)\Bigr\|.
\end{align*}
Since the average over $i$ is outside the norm in the first equation but inside the norm in the second one, the convex argument cannot be directly applied to upper bound the nonconvex generalization error.
Using the error decomposition of \citet{bousquet2020sharper}, \citet{lei2023stability} showed that if the algorithm $A$ is $\epsilon$-uniformly stable in gradients, then
\begin{align*}
\E_{S,A}\bigl[\|\nabla F(A(S))-\nabla F_S(A(S))\|\bigr]\le 4\epsilon+\sigma_2 n^{-\frac{1}{2}},
\end{align*}
where $\sigma_2^2$ denotes the variance of the stochastic gradients.
This nonconvex framework has been adopted in a range of subsequent studies \cite{chen2024three,zhang2024generalization,sun2025nonconvex}.

However, the nonconvex generalization analysis in \cite{lei2023stability} relies on the bounded variance for stochastic gradients and thus does not cover heavy-tailed noise, which provides a more plausible description in neural network training \cite{simsekli2019tail,zhang2020adaptive}.
Using a truncation argument for the stochastic gradients, we establish the following nonconvex generalization error bound under heavy-tailed noise in Theorem~\ref{thm:gen_error}.
\begin{theorem}
\label{thm:gen_error}
Let $A$ be $\epsilon$-uniformly stable in gradients and Assumption~\ref{ass_pbcm} holds. Then
\begin{align*}
\E_{S,A}\bigl[\|\nabla F(A(S))-\nabla F_S(A(S))\|\bigr]
\le 4\epsilon + C_p \sigma_p n^{-\frac{p-1}{p}},
\end{align*}
where $C_p$ is a constant depending only on $p$, defined by $C_2=1$ and $C_p=\frac{p}{2(p-1)}\left(\frac{4(p-1)}{2-p}\right)^{\frac{2-p}{p}}$ for $p\in(1,2)$.
\end{theorem}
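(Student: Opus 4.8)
The plan is to follow the error-decomposition strategy of \citet{lei2023stability} (itself based on \citet{bousquet2020sharper}) but to replace the single use of the bounded-variance assumption by a truncation argument tailored to the $p$-BCM condition. First I would recall the decomposition: writing $g(x;\xi) := \nabla f(x;\xi) - \nabla F(x)$ for the centered stochastic gradient, one has for each $i\in[n]$ the identity expressing $\nabla F(A(S)) - \nabla F_S(A(S))$ in terms of the differences $\nabla f(A(S);\xi_i) - \nabla f(A(S^{(i)});\xi_i)$ together with a ``ghost-sample'' term of the form $\frac1n\sum_{i=1}^n g(A(S^{(i)});\xi_i)$, where the key point is that, conditioned on everything except $\xi_i$, the summand $g(A(S^{(i)});\xi_i)$ is mean-zero (since $A(S^{(i)})$ does not depend on $\xi_i$) and the $n$ such summands are, after this conditioning, independent. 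The stability term contributes $4\epsilon$ exactly as in the bounded-variance case, using $\epsilon$-uniform stability in gradients and Jensen. So the whole problem reduces to bounding $\E\bigl\|\frac1n\sum_{i=1}^n Z_i\bigr\|$ where the $Z_i$ are independent, mean-zero, and satisfy only $\E\|Z_i\|^p \le \sigma_p^p$ with $p\in(1,2]$.

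The core new step is to control $\E\bigl\|\frac1n\sum_i Z_i\bigr\|$ for such heavy-tailed vectors. I would split each $Z_i$ at a threshold $\tau$ (to be optimized): $Z_i = Z_i^{\le} + Z_i^{>}$ where $Z_i^{\le} = Z_i \mathbf{1}\{\|Z_i\|\le\tau\}$ and $Z_i^{>} = Z_i\mathbf{1}\{\|Z_i\|>\tau\}$. For the truncated part, after re-centering I get independent bounded vectors, so $\E\|\frac1n\sum_i (Z_i^{\le} - \E Z_i^{\le})\|^2 \le \frac1{n^2}\sum_i \E\|Z_i^{\le}\|^2 \le \frac1n \E[\|Z\|^2\mathbf{1}\{\|Z\|\le\tau\}] \le \frac{\tau^{2-p}\sigma_p^p}{n}$, using $\|Z\|^2 = \|Z\|^p\|Z\|^{2-p}\le \|Z\|^p\tau^{2-p}$ on that event; then Jensen passes from the second moment to the first. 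For the tail part, $\E\|\frac1n\sum_i Z_i^{>}\| \le \frac1n\sum_i \E\|Z_i^{>}\| = \E[\|Z\|\mathbf{1}\{\|Z\|>\tau\}] \le \tau^{1-p}\sigma_p^p$, using $\|Z\| = \|Z\|^p\|Z\|^{1-p}\le \|Z\|^p\tau^{1-p}$ there; and the centering correction $\|\E Z_i^{\le}\| = \|\E Z_i^{>}\| \le \tau^{1-p}\sigma_p^p$ is absorbed into the same bound. Collecting, $\E\|\frac1n\sum_i Z_i\| \le \sqrt{\tau^{2-p}\sigma_p^p/n} + 2\tau^{1-p}\sigma_p^p$ (up to the precise constants). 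Optimizing over $\tau$ — the natural choice is $\tau \asymp \sigma_p n^{1/p}$ so that both terms are of order $\sigma_p n^{-(p-1)/p}$ — yields the claimed rate, and carefully tracking the constants in the $\tau$-optimization is what produces the stated $C_p = \frac{p}{2(p-1)}\bigl(\frac{4(p-1)}{2-p}\bigr)^{(2-p)/p}$, with the degenerate case $p=2$ (where the ``tail'' exponent $2-p$ vanishes) handled separately to give $C_2 = 1$, recovering exactly the bound of \citet{lei2023stability}.

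A couple of technical points need care. One must verify that the $Z_i$ appearing in the ghost-sample term really are independent and mean-zero after the appropriate conditioning — this is where the structure $A(S^{(i)})$ independent of $\xi_i$, together with $\E_{\xi_i}[\nabla f(x;\xi_i)] = \nabla F(x)$ for fixed $x$, is used, and it mirrors the bounded-variance argument verbatim. One also has to confirm that $p$-BCM (Assumption~\ref{ass_pbcm}), stated as a bound on $\E\|\nabla f(x;\xi)-\nabla F(x)\|^p$ uniformly in $x$, is exactly what licenses $\E\|Z_i\|^p\le\sigma_p^p$ even though $x = A(S^{(i)})$ is itself random — this is fine because the bound is uniform in $x$ and $A(S^{(i)})\perp\xi_i$. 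Finally the optimization of $\tau$ should be stated for $p\in(1,2)$ with the endpoint $p=2$ taken as a limit or handled directly.

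I expect the main obstacle to be purely bookkeeping: getting the constant $C_p$ exactly right. The truncation idea is standard, but the exact form of $C_p$ forces one to be precise about (i) whether the centering correction is lumped with the tail term or treated separately, (ii) the constant in the Jensen/second-moment step for the truncated part, and (iii) the precise minimizer of $\tau\mapsto a\tau^{(2-p)/2}/\sqrt n + b\,\tau^{1-p}$. There is no deep difficulty, but the stated closed form of $C_p$ leaves no slack, so the calculation must be carried out with care rather than with order-of-magnitude estimates.
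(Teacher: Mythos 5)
Your truncation machinery (split at a threshold $\tau$, bound the truncated part through its second moment via $\|Z\|^2\le\tau^{2-p}\|Z\|^p$, bound the tail and the centering correction via $\|Z\|\le\tau^{1-p}\|Z\|^p$, then optimize $\tau\asymp\sigma_p n^{1/p}$) is exactly the right idea and matches the paper's Steps 2--4. However, there is a genuine gap at the heart of the argument: the ghost-sample summands $Z_i=\nabla f(A(S^{(i)});\xi_i)-\E_{\xi}[\nabla f(A(S^{(i)});\xi)]$ are \emph{not} independent, and no single conditioning makes them so. Each $Z_i$ depends on the entire dataset: $A(S^{(i)})$ is a function of $\{\xi_j\}_{j\ne i}$ and $\xi_i'$, so $Z_i$ depends on $\xi_j$ through the argument $A(S^{(i)})$ while $Z_j$ depends on $\xi_j$ as its evaluation point. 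Consequently the cross terms $\E[\langle Z_i,Z_j\rangle]$ do not vanish, and your reduction ``bound $\E\|\frac1n\sum_i Z_i\|$ for independent mean-zero heavy-tailed vectors'' does not apply. The paper's proof spends most of its effort precisely here: it introduces a \emph{second} ghost dataset $S''$, uses that $z_j^{(\tau)}(S)$ is mean-zero in $\xi_j$ while $z_i^{(\tau)}(S_j)$ does not involve $\xi_j$, rewrites the cross term as $\E[\langle z_i^{(\tau)}(S)-z_i^{(\tau)}(S_j),\,z_j^{(\tau)}(S)-z_j^{(\tau)}(S_i)\rangle]$, and bounds it by $4\epsilon^2$ using the $1$-Lipschitzness of the clipping map together with $\epsilon$-uniform stability in gradients (since $S^{(i)}$ and $S_j^{(i)}$ are neighboring). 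Summing, these cross terms contribute an extra $2n\epsilon$ inside the square root; this, added to the $2n\epsilon$ from the direct stability terms in the decomposition, is where the $4\epsilon$ comes from. Your accounting, which attributes all of $4\epsilon$ to the decomposition's stability terms and disposes of the cross terms by independence, does not correspond to a valid derivation (under your independence assumption you would in fact get only $2\epsilon$, but the assumption is false).

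Two secondary points. First, you truncate with the hard indicator $Z_i\mathbf{1}\{\|Z_i\|\le\tau\}$, whereas the paper uses the clipping operator $\mathrm{clip}_\tau$; for the diagonal and tail estimates either works, but the cross-term argument above needs the truncation map to be $1$-Lipschitz (Lemma~\ref{lem:clip_props}~(c)), which fails for the discontinuous indicator truncation, so you would need to switch to clipping (or redo the cross-term bound) to repair the proof. Second, once the cross terms are handled, your optimization of $\tau\mapsto\sqrt{\tau^{2-p}\sigma_p^p/n}+2\sigma_p^p\tau^{1-p}$ and the separate treatment of $p=2$ (letting $\tau\to\infty$) do reproduce the stated $C_p$, so the remaining bookkeeping concern you raise is not an obstacle.
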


\begin{remark}
Theorem~\ref{thm:gen_error} bounds the nonconvex generalization error in terms of uniform stability in gradients under the $p$-BCM condition, extending the bounded variance result \cite{lei2023stability}. When $p=2$, the bound coincides with that in \citet[Theorem~6]{lei2023stability}.
It is worth emphasizing that this extension is nontrivial, with the proof hinging on a truncation and tail decomposition.
Moreover, for the constant $C_p$, it is straightforward to verify that $1\le C_p\le 3$ and $\lim_{p\to 2^-} C_p = 1$.
\end{remark}

\subsection{Proof Sketch of Theorem~\ref{thm:gen_error}}

In this section, we outline the proof of Theorem~\ref{thm:gen_error}. A complete proof is provided in Appendix~\ref{app_b}.

\textbf{Step 1: Decomposition and truncation.}
Following the derivation in \citet{lei2023stability}, the generalization error can be decomposed as
\begin{align*}  
& \quad n   \mathbb{E}_{S,A} \Bigl[ \bigl\| \nabla F(A(S)) - \nabla F_S(A(S)) \bigr\| \Bigr] \\
&\le \sum_{i=1}^n \mathbb{E}_{S,A,\xi,\xi_i'} \Bigl[ \bigl\| \nabla f\bigl(A(S); \xi\bigr) - \nabla f\bigl(A(S^{(i)}); \xi\bigr) \bigr\| \Bigr] \\
&\quad + \mathbb{E}_{S,A} \Bigl[ \Bigl\| \sum_{i=1}^n \underbrace{\mathbb{E}_{\xi_i'} \Bigl[ \mathbb{E}_{\xi} \bigl[ \nabla f\bigl(A(S^{(i)}); \xi\bigr) \bigr] - \nabla f\bigl(A(S^{(i)}); \xi_i\bigr) \Bigr]}_{:= z_i(S)} \Bigr\| \Bigr] \\
&\quad + \sum_{i=1}^n \mathbb{E}_{S,A,\xi_i'} \Bigl[ \bigl\| \nabla f\bigl(A(S^{(i)}); \xi_i\bigr) - \nabla f\bigl(A(S); \xi_i\bigr) \bigr\| \Bigr].
\end{align*}
\vspace{-10pt}

The first and the third terms can be bounded by $\epsilon$-uniform stability in gradients. Consequently, it holds that
\begin{align}
\label{eq_z_left_sketch}
\begin{split}    
& \quad n   \mathbb{E}_{S,A} \Bigl[ \bigl\| \nabla F(A(S)) - \nabla F_S(A(S)) \bigr\| \Bigr] \le 2 n \epsilon + \mathbb{E}_{S,A} \Bigl[ \Bigl\| \sum_{i=1}^n z_i(S) \Bigr\| \Bigr].
\end{split}
\end{align}
The main technical challenge is to bound $\mathbb{E}\bigl[\bigl\|\sum_{i=1}^n z_i(S)\bigr\|\bigr]$. Under heavy-tailed noise, since $z_i(S)$ does not admit a bounded variance, one cannot proceed as in the existing analyses \cite{lei2023stability} by directly applying Jensen's inequality and expanding the squared norm of the sum.

To this end, we introduce the truncated stochastic gradient noise
\begin{align*}
T_{\tau}(x; \xi) := \mathrm{clip}_{\tau}\bigl(\nabla f(x; \xi) - \nabla F(x)\bigr),
\end{align*}
where $\tau > 0$ is a parameter and we will later specify its optimal choice. Noting that $T_{\tau}(x; \xi)$ is not mean-zero,  we further define the centered truncated variable
\begin{align*}
\tilde T_{\tau}(x; \xi) := T_{\tau}(x; \xi) - M_{\tau}(x),
\end{align*}
where $M_{\tau}(x) := \mathbb{E}_{\xi}\bigl[ T_{\tau}(x; \xi) \bigr]$. We then decompose $z_i(S)$ into a truncated part and a residual tail:
\begin{align*}
& z_i^{(\tau)}(S) := - \mathbb{E}_{\xi_i'} \Bigl[ \tilde T_{\tau}\bigl(A(S^{(i)}); \xi_i\bigr) \Bigr], \qquad  r_i^{(\tau)}(S) := z_i(S) - z_i^{(\tau)}(S).
\end{align*}
As a result, we have
\begin{align*}
\Bigl\| \sum_{i=1}^n z_i(S) \Bigr\|
\le
\Bigl\| \sum_{i=1}^n z_i^{(\tau)}(S) \Bigr\|
+
\Bigl\| \sum_{i=1}^n r_i^{(\tau)}(S) \Bigr\|.
\end{align*}

\textbf{Step 2: Bound $\E_{S,A}\Bigl[\Bigl\|\sum_{i=1}^n z_i^{(\tau)}(S)\Bigr\|\Bigr]$.}
By Jensen's inequality, it holds that
\begin{align*}
\begin{split}    
\mathbb{E}_{S,A}\Bigl[\Bigl\|\sum_{i=1}^n z_i^{(\tau)}(S)\Bigr\|\Bigr] &\le \sqrt{\mathbb{E}_{S,A}\Bigl[\Bigl\|\sum_{i=1}^n z_i^{(\tau)}(S)\Bigr\|^2\Bigr]} \\
&\le \sqrt{
\mathbb{E} \Bigl[\sum_{i=1}^n \bigl\| z_i^{(\tau)}(S) \bigr\|^2\Bigr]}
+
\sqrt{\mathbb{E} \Bigl[\sum_{i \ne j} \bigl\langle z_i^{(\tau)}(S), z_j^{(\tau)}(S) \bigr\rangle\Bigr]
}.
\end{split}
\end{align*}
For the diagonal terms, using
$\|\mathrm{clip}_{\tau}(u)\|^2 \le \tau^{2-p}\|u\|^p$ for all $u$ and $p\in(1,2]$, together with Assumption~\ref{ass_pbcm}, we obtain
\begin{align*}
\mathbb{E}_{S,A}\bigl[\|z_i^{(\tau)}(S)\|^2\bigr]
\le \tau^{2-p}\sigma_p^p.
\end{align*}
For the cross terms with $i\ne j$, we use the fact that $z_i^{(\tau)}(S)$ is constructed from a centered truncation and thus has zero mean. By introducing an additional perturbation of the dataset and applying $\epsilon$-uniform stability in gradients, one can derive
\begin{align*}
\mathbb{E}_{S,A}\bigl[\langle z_i^{(\tau)}(S), z_j^{(\tau)}(S) \rangle\bigr]
\le 4\epsilon^2.
\end{align*}
Combining the two bounds yields
\begin{align*}
\mathbb{E}_{S,A}\Bigl[\Bigl\|\sum_{i=1}^n z_i^{(\tau)}(S)\Bigr\|\Bigr]
\le \sqrt{n\tau^{2-p}\sigma_p^p} + 2n\epsilon.
\end{align*}

\textbf{Step 3: Bound $\E_{S,A}\Bigl[\Bigl\|\sum_{i=1}^n r_i^{(\tau)}(S)\Bigr\|\Bigr]$.}
Recall that $r_i^{(\tau)}(S)=z_i(S)-z_i^{(\tau)}(S)$. By the definition of $z_i^{(\tau)}(S)$ via the truncated noise $\tilde T_{\tau}$, the residual $r_i^{(\tau)}(S)$ involves terms of the form $u-\mathrm{clip}_{\tau}(u)$ for some vector $u$. For any $u$ and $p\in(1,2]$, one has
\begin{align*}
\|u-\mathrm{clip}_{\tau}(u)\|
\le \|u\|\mathbf{1}\{\|u\|>\tau\}
\le \frac{\|u\|^{p}}{\tau^{p-1}}.
\end{align*}
Combining this inequality with Assumption~\ref{ass_pbcm} yields
\begin{align*}
\mathbb{E}_{S,A}\bigl[\|r_i^{(\tau)}(S)\|\bigr]
\le \frac{2\sigma_p^{p}}{\tau^{p-1}}.
\end{align*}

\textbf{Step 4: Choose the optimal $\tau$.}
Combining Steps 2 and 3 yields, for any $\tau>0$, one has
\begin{align}
\label{eq_sum_z_sketch}
\frac{1}{n} \mathbb{E}_{S,A}\Bigl[\Bigl\|\sum_{i=1}^n z_i(S)\Bigr\|\Bigr]
\le \sqrt{\frac{\tau^{2-p}\sigma_p^p}{n}} + 2\epsilon + \frac{2\sigma_p^p}{\tau^{p-1}}.
\end{align}
When $p=2$, letting $\tau\to+\infty$ gives
\begin{align*}
\frac{1}{n} \mathbb{E}_{S,A}\Bigl[\Bigl\|\sum_{i=1}^n z_i(S)\Bigr\|\Bigr]
\le \frac{\sigma_2}{\sqrt{n}} + 2\epsilon.
\end{align*}
When $p\in(1,2)$, optimizing over $\tau>0$ in \eqref{eq_sum_z_sketch} yields the choice
\begin{align*}
\tau_*
= \left(\frac{4(p-1)}{2-p}\sqrt{n\sigma_p^p}\right)^{\frac{2}{p}},
\end{align*}
which gives
\begin{align*}
\frac{1}{n} \mathbb{E}_{S,A}\Bigl[\Bigl\|\sum_{i=1}^n z_i(S)\Bigr\|\Bigr]
\le C_p \sigma_p n^{-\frac{p-1}{p}} + 2\epsilon.
\end{align*}
Substituting the above bounds into \eqref{eq_z_left_sketch} completes the proof of Theorem~\ref{thm:gen_error}.

\section{Applications to Algorithms}
\label{sec5}
In this section, we apply the results in Section~\ref{sec4} and establish risk bounds for algorithms under heavy-tailed noise, focusing on two representative techniques, clipping and normalization. Specifically, we derive bounds for clipped SGD, mini-batch normalized SGD, normalized SGD with momentum, and normalized SGD with clipping and momentum.

\subsection{Clipped SGD}
Under heavy-tailed gradient noise, stochastic gradients may have large norms, leading to unstable updates and slow convergence. A common approach is gradient clipping, which replaces a large-norm stochastic gradient by a scaled version. Recall the clipping operator defined in \eqref{eq:clip}. Clipped SGD (Algorithm~\ref{alg_clipped_sgd}) uses the original stochastic gradient when its norm is at most $\gamma$, and rescales it otherwise so that the norm equals $\gamma$. This controls the effect of extreme gradients, but introduces bias in the gradient estimate.

\begin{algorithm}[ht]
\caption{Clipped SGD}
\label{alg_clipped_sgd}
\begin{algorithmic}[1]
\REQUIRE Initial point $x_0$, stepsizes $\{\eta_t\}_{t=0}^{T-1}$, clipping parameter $\gamma$
\FOR{$t = 0,1,\dots,T-1$}
  \STATE Sample $i_t$ uniformly from $[n]$
  \STATE $x_{t+1} = x_t - \eta_t   \mathrm{clip}_{\gamma}\bigl(\nabla f(x_t;\xi_{i_t})\bigr)$
\ENDFOR
\STATE \textbf{Output:} Uniformly sample from $\{x_0,x_1,\dots,x_{T-1}\}$
\end{algorithmic}
\end{algorithm}

\begin{proposition}[Stability of clipped SGD]
\label{prop_stab_clipped_sgd}
Assume $x_0=0$ and Assumption~\ref{ass_smooth} holds.  Then Algorithm~\ref{alg_clipped_sgd} is $\epsilon_{c}$-uniformly stable in gradients with
\begin{align*}
\epsilon_{c}
\le 2L\gamma\sqrt{\frac{T}{n}}\,\sum_{t=0}^{T-1}\eta_t.
\end{align*}
\end{proposition}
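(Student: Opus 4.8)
The plan is to control the second moment of the output perturbation, $\E_A\bigl[\|A(S)-A(S')\|^2\bigr]$, for neighbouring datasets $S,S'$, and then pass to stability in gradients by $L$-smoothness (Assumption~\ref{ass_smooth}), using $\|\nabla f(A(S);\xi)-\nabla f(A(S');\xi)\|\le L\|A(S)-A(S')\|$ for every $\xi$. To estimate the perturbation I would run Algorithm~\ref{alg_clipped_sgd} on $S$ and on $S'$ under a common coupling of its internal randomness: the same sampled indices $i_0,\dots,i_{T-1}$ and the same output index $R$ (uniform on $\{0,\dots,T-1\}$). Write $x_t,x_t'$ for the two iterate sequences, $g_s=\mathrm{clip}_{\gamma}(\nabla f(x_s;\xi_{i_s}))$ and $g_s'$ for the corresponding clipped gradients on $S'$, and $\delta_t=\|x_t-x_t'\|$. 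Let $i^{\ast}\in[n]$ be the index at which $S$ and $S'$ differ and let $\tau^{\ast}=\min\{t:i_t=i^{\ast}\}$, with the convention $\tau^{\ast}=T$ if this set is empty.

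Two elementary observations drive the argument. First, for every $t\le\tau^{\ast}$ both runs have only ever queried samples common to $S$ and $S'$, so by induction $x_t=x_t'$; hence $\delta_t=0$, and writing $x_t-x_t'=-\sum_{s=\tau^{\ast}}^{t-1}\eta_s(g_s-g_s')$ all summands with $s<\tau^{\ast}$ vanish. Second, since the clipping operator caps the step length, $\|g_s\|\le\gamma$ and $\|g_s'\|\le\gamma$, hence $\|g_s-g_s'\|\le 2\gamma$ for every $s$. Combining the two yields the deterministic bound $\delta_t\le 2\gamma\sum_{s=\tau^{\ast}}^{t-1}\eta_s$. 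The point I would be careful about is exactly here: one should \emph{not} try to propagate $\delta_s$ forward through the smoothness inequality $\|g_s-g_s'\|\le L\delta_s$ (valid only for $i_s\neq i^{\ast}$), because in the nonconvex regime that produces a spurious $\prod_s(1+\eta_sL)$ factor; boundedness of the clipped gradients is precisely what lets us avoid it. This is the only genuinely delicate step — the rest is bookkeeping.

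Next I would square, use $\bigl(\sum_{s=\tau^{\ast}}^{t-1}\eta_s\bigr)^2\le\bigl(\sum_{s=0}^{T-1}\eta_s\bigr)\bigl(\sum_{s=\tau^{\ast}}^{t-1}\eta_s\bigr)$ to linearize the partial sum, and take expectation over the sampling (which is where $\tau^{\ast}$ is random): $\E\bigl[\sum_{s=\tau^{\ast}}^{t-1}\eta_s\bigr]=\sum_{s=0}^{t-1}\eta_s\,\mathbb{P}(\tau^{\ast}\le s)$. Since $\mathbb{P}(\tau^{\ast}\le s)=1-(1-1/n)^{s+1}\le (s+1)/n\le T/n$, this gives $\E[\delta_t^2]\le 4\gamma^2\,(T/n)\,\bigl(\sum_{s=0}^{T-1}\eta_s\bigr)^2$ for every $t$; this averaging over when the perturbed index is first drawn is the source of the $\sqrt{T/n}$ factor.

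Finally, since $A(S)=x_R$ and $A(S')=x_R'$ under the coupling, $\E_A\bigl[\|A(S)-A(S')\|^2\bigr]=\tfrac1T\sum_{t=0}^{T-1}\E[\delta_t^2]\le 4\gamma^2(T/n)\bigl(\sum_{t=0}^{T-1}\eta_t\bigr)^2$, and composing with $L$-smoothness yields $\sup_{\xi}\E_A\bigl[\|\nabla f(A(S);\xi)-\nabla f(A(S');\xi)\|^2\bigr]\le 4L^2\gamma^2(T/n)\bigl(\sum_{t=0}^{T-1}\eta_t\bigr)^2$, i.e.\ $\epsilon_{c}\le 2L\gamma\sqrt{T/n}\,\sum_{t=0}^{T-1}\eta_t$, as claimed. (The stability argument itself needs only that the two runs start from the same data-independent point, so the hypothesis $x_0=0$ is inherited for consistency with the convergence analysis rather than used here.)
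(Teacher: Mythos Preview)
Your proposal is correct and follows essentially the same coupling-plus-bounded-steps argument as the paper. The paper's execution is marginally simpler: on the event that the differing index is ever sampled it bounds $\|A(S)-A(S')\|\le\|A(S)\|+\|A(S')\|\le 2\gamma\sum_{t}\eta_t$ (this is where $x_0=0$ is actually used) and multiplies by $\mathbb{P}(n\in I(A))\le T/n$, bypassing your $\tau^{\ast}$ and the partial-sum linearization; your observation that only a common data-independent starting point is needed is correct for your route but not for theirs.
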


\begin{theorem}[Population gradient bound of clipped SGD]
\label{thm_risk_clipped_sgd}
Consider Algorithm~\ref{alg_clipped_sgd} with a constant stepsize $\eta_t=\eta$ and clipping threshold $\gamma$.
Assume $x_0=0$, Assumptions~\ref{ass_pbcm} and \ref{ass_smooth} hold, and there exists $G>0$ such that
$\E_{i_t}\bigl[\|\nabla f(x_t;\xi_{i_t})\|^{p}\bigr]\le G^{p}$ for all $t$.
Let $A(S)$ denote the output of Algorithm~\ref{alg_clipped_sgd} on dataset $S$.
Then we have
\begin{align*}
\E_{S,A}\bigl[\|\nabla F(A(S))\|\bigr]
\le & \sqrt{\frac{2\Delta}{\eta T}} + G^{p}\gamma^{1-p} + \sqrt{L\eta G^{p}\gamma^{2-p}} + 8L\gamma \eta T\sqrt{\frac{T}{n}}
+ C_p\sigma_p n^{-\frac{p-1}{p}},
\end{align*}
where $C_p$ is the constant in Theorem~\ref{thm:gen_error}.
\end{theorem}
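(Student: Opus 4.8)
The plan is to combine the optimization/generalization decomposition of $\E_{S,A}[\|\nabla F(A(S))\|]$ recorded in Section~\ref{sec4} with two ingredients: for the generalization term, Proposition~\ref{prop_stab_clipped_sgd} fed into Theorem~\ref{thm:gen_error}; for the optimization term, a standalone descent analysis of clipped SGD run on the fixed empirical objective $F_S$. For the generalization term, observe that with $\eta_t\equiv\eta$ we have $\sum_{t=0}^{T-1}\eta_t=\eta T$, so Proposition~\ref{prop_stab_clipped_sgd} says Algorithm~\ref{alg_clipped_sgd} is $\epsilon_c$-uniformly stable in gradients with $\epsilon_c\le 2L\gamma\eta T\sqrt{T/n}$; Theorem~\ref{thm:gen_error} then yields $\E_{S,A}[\|\nabla F(A(S))-\nabla F_S(A(S))\|]\le 4\epsilon_c+C_p\sigma_p n^{-(p-1)/p}\le 8L\gamma\eta T\sqrt{T/n}+C_p\sigma_p n^{-(p-1)/p}$, which already accounts for the last two terms of the claimed bound.

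For the optimization term, fix the dataset $S$, write $g_t=\nabla f(x_t;\xi_{i_t})$ and $\hat g_t=\mathrm{clip}_\gamma(g_t)$, and note that $F_S$ is $L$-smooth and $\E_{i_t}[g_t\mid x_t]=\nabla F_S(x_t)$ since $i_t$ is uniform on $[n]$. The smoothness descent inequality gives $F_S(x_{t+1})\le F_S(x_t)-\eta\langle\nabla F_S(x_t),\hat g_t\rangle+\tfrac{L\eta^2}{2}\|\hat g_t\|^2$. Taking $\E_{i_t}[\cdot\mid x_t]$ and writing $\hat g_t=g_t-(g_t-\hat g_t)$, the linear term becomes $-\|\nabla F_S(x_t)\|^2+\langle\nabla F_S(x_t),b_t\rangle$ with clipping bias $b_t:=\E_{i_t}[g_t-\hat g_t\mid x_t]$. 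I will control $b_t$ and $\|\hat g_t\|^2$ with the two elementary clipping inequalities already used in the proof sketch of Theorem~\ref{thm:gen_error}: $\|u-\mathrm{clip}_\gamma(u)\|\le\|u\|^p/\gamma^{p-1}$ gives $\|b_t\|\le G^p\gamma^{1-p}$, and $\|\mathrm{clip}_\gamma(u)\|^2\le\gamma^{2-p}\|u\|^p$ gives $\E_{i_t}[\|\hat g_t\|^2\mid x_t]\le G^p\gamma^{2-p}$, both invoking the hypothesis $\E_{i_t}[\|g_t\|^p]\le G^p$. Young's inequality bounds the bias cross-term by $\tfrac{\eta}{2}\|\nabla F_S(x_t)\|^2+\tfrac{\eta}{2}G^{2p}\gamma^{2-2p}$, so after absorbing $\tfrac{\eta}{2}\|\nabla F_S(x_t)\|^2$ we get $\E_{i_t}[F_S(x_{t+1})\mid x_t]\le F_S(x_t)-\tfrac{\eta}{2}\|\nabla F_S(x_t)\|^2+\tfrac{\eta}{2}G^{2p}\gamma^{2-2p}+\tfrac{L\eta^2}{2}G^p\gamma^{2-p}$.

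Summing this over $t=0,\dots,T-1$, taking the full expectation over $S$ and the sampling indices, telescoping, and using $F_S(x_T)\ge F_S^*$ together with $\Delta=\E_{S,A}[F_S(x_0)-F_S^*]$, then dividing by $\eta T/2$ gives $\frac1T\sum_{t=0}^{T-1}\E[\|\nabla F_S(x_t)\|^2]\le \frac{2\Delta}{\eta T}+G^{2p}\gamma^{2-2p}+L\eta G^p\gamma^{2-p}$. Since $A(S)$ is drawn uniformly from $\{x_0,\dots,x_{T-1}\}$, Jensen's inequality gives $\E_{S,A}[\|\nabla F_S(A(S))\|]\le\sqrt{\E_{S,A}[\|\nabla F_S(A(S))\|^2]}=\sqrt{\tfrac1T\sum_t\E[\|\nabla F_S(x_t)\|^2]}$, and the subadditivity $\sqrt{a+b+c}\le\sqrt a+\sqrt b+\sqrt c$ turns the right-hand side into $\sqrt{2\Delta/(\eta T)}+G^p\gamma^{1-p}+\sqrt{L\eta G^p\gamma^{2-p}}$. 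Adding the generalization bound from the first paragraph completes the proof.

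I expect the only genuinely non-routine step to be the bias bookkeeping in the descent inequality: one must decompose $\hat g_t$ into the unbiased part plus the clipping bias, bound the bias by $G^p\gamma^{1-p}$, and then split the resulting cross-term by Young's inequality with exactly the right weight so that $\tfrac{\eta}{2}\|\nabla F_S(x_t)\|^2$ remains on the left and the leftover residual is precisely the square of the target term $G^p\gamma^{1-p}$. The rest — $L$-smoothness of $F_S$, unbiasedness of uniform sampling, the two clipping inequalities, Jensen, and subadditivity of the square root — is routine, and the heavy-tailed noise never enters the optimization part directly (it is confined to the already-proved Theorem~\ref{thm:gen_error} and to the moment hypothesis on $g_t$).
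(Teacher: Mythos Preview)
Your proposal is correct and follows essentially the same route as the paper: combine Proposition~\ref{prop_stab_clipped_sgd} with Theorem~\ref{thm:gen_error} for the generalization part, and for the optimization part run the $L$-smoothness descent inequality on $F_S$, split the clipped gradient into the unbiased stochastic gradient plus a clipping bias bounded via Lemma~\ref{lem:clip_props}(b), control the quadratic term via Lemma~\ref{lem:clip_props}(a), apply Young's inequality to the cross term, telescope, and finish with Jensen and $\sqrt{a+b+c}\le\sqrt a+\sqrt b+\sqrt c$. The only cosmetic difference is notation (the paper writes the bias as $\mathbb{E}[g_t\mid\mathcal{F}_t]-\nabla F_S(x_t)$ rather than your $b_t$), but the computations and constants are identical.
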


\begin{corollary}
\label{cor_risk_clipped_sgd_rate}
Under the assumptions of Theorem~\ref{thm_risk_clipped_sgd}, choose
$T \asymp n^{\frac{1}{3}}$, $\eta \asymp n^{-\frac{p}{3(3p-2)}}$, and $\gamma \asymp n^{\frac{1}{3(3p-2)}}$.
Then we have
\begin{align*}
\E_{S,A}\bigl[\|\nabla F(A(S))\|\bigr]
= O\Bigl(n^{-\frac{p-1}{3(3p-2)}}\Bigr).
\end{align*}
\end{corollary}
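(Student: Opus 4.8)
The plan is simply to substitute the prescribed schedule $T\asymp n^{1/3}$, $\eta\asymp n^{-p/(3(3p-2))}$, $\gamma\asymp n^{1/(3(3p-2))}$ into the five-term bound of Theorem~\ref{thm_risk_clipped_sgd} and verify that each of the five terms is $O\bigl(n^{-(p-1)/(3(3p-2))}\bigr)$. Since $\Delta$, $L$, $G$, $\sigma_p$, and $C_p$ do not depend on $n$, they only affect the hidden constant. Write $r:=\frac{p-1}{3(3p-2)}$ for the target exponent. The one computation driving everything is the exponent of $\eta T$: indeed $\eta T\asymp n^{1/3-p/(3(3p-2))}=n^{(2p-2)/(3(3p-2))}=n^{2r}$.

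With this in hand I would check the terms one at a time. The optimization term $\sqrt{2\Delta/(\eta T)}\asymp n^{-r}$ follows immediately from $\eta T\asymp n^{2r}$. The clipping-bias term $G^{p}\gamma^{1-p}\asymp n^{(1-p)/(3(3p-2))}=n^{-r}$. For $\sqrt{L\eta G^{p}\gamma^{2-p}}$, the exponent of $\eta\gamma^{2-p}$ is $\frac{-p+(2-p)}{3(3p-2)}=\frac{2-2p}{3(3p-2)}=-2r$, so the term is $\asymp n^{-r}$. For the stability term $8L\gamma\eta T\sqrt{T/n}$, the exponent of $\gamma\eta T$ is $\frac{1}{3(3p-2)}+2r=\frac{2p-1}{3(3p-2)}$, while $\sqrt{T/n}\asymp n^{-1/3}$, and $\frac{2p-1}{3(3p-2)}-\frac13=\frac{-(p-1)}{3(3p-2)}=-r$. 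Finally, the generalization term $C_p\sigma_p n^{-(p-1)/p}$ is $O(n^{-r})$ because $\frac{p-1}{p}\ge r$: after cancelling $p-1>0$ this reduces to $3(3p-2)\ge p$, i.e.\ $8p\ge 6$, which holds for all $p\in(1,2]$. Summing the five $O(n^{-r})$ bounds yields the claim.

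There is essentially no obstacle here beyond the exponent bookkeeping; the only genuine check is that the generalization term does not dominate the optimization/stability terms, which is exactly the inequality $8p\ge 6$ above and holds with room to spare. (The interesting point — why \emph{these} particular powers of $n$ — is precisely the balancing that produced the schedule: after fixing $T\asymp n^{1/3}$ one equalizes the first four terms by solving for $\eta$ and $\gamma$. But for the corollary as stated it suffices to verify, not to derive, the schedule.)
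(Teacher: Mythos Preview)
Your proposal is correct and is exactly the intended derivation: the paper does not give a separate proof of the corollary, since it follows immediately by substituting the stated schedule into the bound of Theorem~\ref{thm_risk_clipped_sgd} and checking the exponents, precisely as you do. Your bookkeeping is accurate, including the observation that $\eta T\asymp n^{2r}$ and the verification $8p\ge 6$ ensuring the $n^{-(p-1)/p}$ term is dominated.
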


\subsection{Normalized SGD}
Another widely used approach for handling heavy-tailed noise is gradient normalization. In contrast to clipping, which modifies only stochastic gradients with norms exceeding the threshold, normalized SGD normalizes the stochastic gradient at every iteration. It avoids tuning the clipping threshold and can attain optimal convergence rates, which has attracted substantial attention in recent works \cite{liu2025nonconvex,hubler2025gradient,sun2025revisiting}.
\begin{algorithm}[ht]
\caption{Mini-batch normalized SGD (NSGD-B)}
\label{alg_nsgd-b}
\begin{algorithmic}[1]
\REQUIRE Initial point $x_0$, stepsizes $\{\eta_t\}_{t=0}^{T-1}$, batch size $B$
\FOR{$t = 0,1,\dots, T-1$}
    \STATE Sample $i_t^{(1)},\dots,i_t^{(B)}$ i.i.d. uniformly from $[n]$
    \STATE $g_t = \frac{1}{B}\sum_{b=1}^B \nabla f(x_t;\xi_{i_t^{(b)}})$
    \STATE $x_{t+1} = x_t - \eta_t \cdot \dfrac{g_t}{\|g_t\|}$
\ENDFOR
\STATE \textbf{Output:} Uniformly sample from $\{x_0,x_1,\dots,x_{T-1}\}$
\end{algorithmic}
\end{algorithm}

We consider the mini-batch normalized SGD in Algorithm~\ref{alg_nsgd-b}. When $B=1$, Algorithm~\ref{alg_nsgd-b} reduces to vanilla normalized SGD.

Since normalization discards gradient magnitude,  iterates of vanilla normalized SGD can oscillate near an optimum and may even fail to converge, as observed in prior work \cite{hazan2015beyond,hubler2025gradient}. The following example illustrates this behavior. Consider the one-dimensional finite-sum problem with $n=2$ and components
\begin{align*}
f_1(x)=\log(1+{\rm e}^{x})\quad\text{and}\quad f_2(x)=\log(1+{\rm e}^{-x}).
\end{align*}
A direct calculation shows that $F_S''(x)=\frac{{\rm e}^{x}}{(1+{\rm e}^{x})^{2}}\in(0,1/4]$ for all $x$, and $F_S'(0)=0$, hence $x^*=0$ is the unique global minimizer of $F_S$.
Moreover, one has
\begin{align*}
\frac{f_1'(x)}{|f_1'(x)|}=1\quad\text{and}\quad \frac{f_2'(x)}{|f_2'(x)|}=-1.
\end{align*}
With $i_t$ sampled uniformly from $\{1,2\}$, the normalized update reduces to
\begin{align*}
x_t=x_0-\sum_{k=0}^{t-1}\eta_k \zeta_k,
\end{align*}
where $\{\zeta_t\}$ are i.i.d. Rademacher random variables with $\mathbb{P}(\zeta_t=1)=\mathbb{P}(\zeta_t=-1)=1/2$. 
If $\eta_t\equiv \eta>0$, then $\{x_t\}$ is a simple random walk and does not converge to $x^*$.

This motivates choosing $B>1$ in Algorithm~\ref{alg_nsgd-b}, where averaging over a mini-batch stabilizes the normalized direction. We next bound its stability and derive the corresponding population gradient bound.

\begin{proposition}[Stability of NSGD-B]
\label{prop_stab_nsgd-b}
Assume $x_0=0$ and Assumption~\ref{ass_smooth} holds.  Then Algorithm~\ref{alg_nsgd-b} is $\epsilon_{b}$-uniformly stable in gradients with
\begin{align*}
\epsilon_{b} \le 2L\sqrt{\frac{BT}{n}}\, \sum_{t=0}^{T-1}\eta_t.
\end{align*}
\end{proposition}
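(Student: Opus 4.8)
The plan is to follow the standard trajectory-coupling argument used to prove Proposition~\ref{prop_stab_clipped_sgd}, adapted to the normalized update. Let $S$ and $S'$ be neighboring datasets differing only in position $i_0\in[n]$, and run Algorithm~\ref{alg_nsgd-b} on both with the same randomness: the same initialization $x_0=0$ and the same i.i.d.\ index samples $i_t^{(1)},\dots,i_t^{(B)}$ at every step. Denote the two trajectories by $\{x_t\}$ and $\{x_t'\}$ and set $\delta_t:=\|x_t-x_t'\|$. At iteration $t$ the updates are $x_{t+1}=x_t-\eta_t g_t/\|g_t\|$ and $x_{t+1}'=x_t'-\eta_t g_t'/\|g_t'\|$, where $g_t,g_t'$ are the two mini-batch gradients. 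First I would show the recursion
\begin{align*}
\delta_{t+1}\le \delta_t+\eta_t\Bigl\|\frac{g_t}{\|g_t\|}-\frac{g_t'}{\|g_t'\|}\Bigr\|,
\end{align*}
and then control the normalized-direction difference. The key elementary fact is that the map $u\mapsto u/\|u\|$ is $2/\|v\|$-Lipschitz near $v$ in the sense $\|u/\|u\|-v/\|v\|\|\le 2\|u-v\|/\max(\|u\|,\|v\|)$; but since the denominator can be tiny, a cleaner route is to bound the direction difference by $2\|g_t-g_t'\|/\|g_t\|$ only on the event that the two batches \emph{contain} the swapped sample, and by $0$ otherwise — because if none of the $B$ indices at step $t$ equals $i_0$, then $g_t$ and $g_t'$ are computed from identical data and, conditioned on $\delta_t$, differ only through the argument perturbation: $\|g_t-g_t'\|\le L\delta_t$ by Assumption~\ref{ass_smooth} applied termwise and averaged.

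Concretely, I would split into two cases. On the event $E_t:=\{\exists b:\ i_t^{(b)}=i_0\}$, which has probability at most $B/n$ by a union bound, I would simply use the crude bound $\|g_t/\|g_t\|-g_t'/\|g_t'\|\|\le 2$ (each normalized vector has unit norm), so this event contributes at most $2\eta_t$ in expectation weighted by $\mathbb{P}(E_t)\le B/n$, i.e.\ $2\eta_t B/n$. On the complementary event, $g_t$ and $g_t'$ use the same samples, so $\|g_t-g_t'\|\le \frac1B\sum_b\|\nabla f(x_t;\xi_{i_t^{(b)}})-\nabla f(x_t';\xi_{i_t^{(b)}})\|\le L\delta_t$; however, turning this into a bound on the \emph{normalized} difference still needs the denominator, and this is the one genuinely delicate point. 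The way around it, which matches how the clipped-SGD bound avoids the same issue, is to not bound $\delta_{t+1}-\delta_t$ pathwise but instead to bound the \emph{total} displacement: since each step moves the iterate by exactly $\eta_t$ (the normalized step has length $\eta_t$), one has the deterministic bound $\|x_t\|\le\sum_{k<t}\eta_k$ and $\|x_t'\|\le\sum_{k<t}\eta_k$, hence $\delta_t\le 2\sum_{k<t}\eta_k$. Then on $E_t$ the contribution is $\le 2\eta_t\cdot\mathbf{1}\{E_t\}$ and taking expectation over the coupled randomness and summing gives $\mathbb{E}\|x_T-x_T'\|\le \sum_{t=0}^{T-1}2\eta_t\,\mathbb{P}(E_t)\le \frac{2B}{n}\sum_{t=0}^{T-1}\eta_t$ — wait, this is already the argument-stability bound, and it has no $\sqrt{T}$; the gradient-stability bound then follows by $L$-smoothness, giving $\epsilon_b\le L\cdot\frac{2B}{n}\sum\eta_t$.

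Reconciling this with the stated $\epsilon_b\le 2L\sqrt{BT/n}\sum_t\eta_t$ tells me the intended argument is subtler: on the "bad" event the two trajectories do \emph{not} recouple, so after the first time the swapped index appears the perturbation propagates through all later steps via the non-expansiveness-up-to-$L$ estimate, and one must track $\delta_{t+1}\le(1+\text{something})\delta_t+2\eta_t\mathbf{1}\{E_t\}$ where the multiplicative factor comes from bounding the normalized-direction Lipschitz constant by $L\delta_t/\|g_t\|$ and then lower-bounding $\|g_t\|$ — or, more likely, by using a Cauchy–Schwarz step $\mathbb{E}[\delta_T]\le\sqrt{T}\,\sqrt{\mathbb{E}\sum_t(\cdots)}$ that introduces the $\sqrt{T}$, exactly as the $\sqrt{T/n}$ in Proposition~\ref{prop_stab_clipped_sgd} arises. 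So the main obstacle is precisely the treatment of the normalized direction's sensitivity without a lower bound on $\|g_t\|$: I expect the correct handling is to observe that $\|g_t/\|g_t\|-g_t'/\|g_t'\|\|\le 2\min(1,\|g_t-g_t'\|/\max(\|g_t\|,\|g_t'\|))$ is not needed at all, and instead one bounds $\delta_T\le\sum_{t=0}^{T-1}\eta_t\cdot 2\cdot\mathbf{1}\{E_{\le t}\}$ where $E_{\le t}$ is the event that the swap index has appeared by step $t$, whose probability is $\le \min(1,tB/n)\le\sqrt{tB/n}\le\sqrt{TB/n}$, and then $\mathbb{E}\delta_T\le 2\sqrt{TB/n}\sum_t\eta_t$, from which $\epsilon_b\le L\cdot 2\sqrt{TB/n}\sum_t\eta_t$ by smoothness. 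That identification of the right probability estimate — using $\mathbb{P}(E_{\le t})\le\sqrt{TB/n}$ uniformly rather than the tighter $tB/n$ — is the crux, and everything else is the routine coupling-and-smoothness bookkeeping already present in the clipped case.
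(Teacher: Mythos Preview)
Your final paragraph lands on the paper's argument, but there is one genuine gap and one unnecessary detour.

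\textbf{The gap.} You finish by bounding $\mathbb{E}[\delta_T]$ and then write ``from which $\epsilon_b\le L\cdot 2\sqrt{TB/n}\sum_t\eta_t$ by smoothness.'' But uniform stability in gradients is a \emph{second-moment} condition: you need $\sup_\xi \mathbb{E}_A\bigl[\|\nabla f(A(S);\xi)-\nabla f(A(S');\xi)\|^2\bigr]\le\epsilon_b^2$, and smoothness only gives $\|\nabla f(A(S);\xi)-\nabla f(A(S');\xi)\|^2\le L^2\delta_T^2$. So you must control $\mathbb{E}[\delta_T^2]$, not $\mathbb{E}[\delta_T]$. A bound on the first moment does not yield a bound on the second.

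\textbf{The fix (which is the paper's proof).} You already have the right pointwise inequality: $\delta_T=0$ on the event that the swapped index never appears, and $\delta_T\le 2\sum_{t=0}^{T-1}\eta_t$ always (since each normalized step has length $\eta_t$, so $\|A(S)\|,\|A(S')\|\le\sum_t\eta_t$). Hence
\[
\delta_T^2 \le 4\Bigl(\sum_{t=0}^{T-1}\eta_t\Bigr)^2 \mathbf{1}\{n\in I(A)\},
\]
and taking expectation gives $\mathbb{E}[\delta_T^2]\le 4(\sum_t\eta_t)^2\cdot BT/n$ by the union bound $\mathbb{P}(n\in I(A))\le BT/n$. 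Combining with smoothness yields $\epsilon_b\le 2L\sqrt{BT/n}\sum_t\eta_t$ directly. The $\sqrt{BT/n}$ comes from taking the square root of the second moment --- your inequality $\min(1,p)\le\sqrt{p}$ is not needed.

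\textbf{The detour.} All the agonizing over the sensitivity of $u\mapsto u/\|u\|$ and the possibly tiny denominator $\|g_t\|$ is a red herring. The paper never tracks $\delta_{t+1}-\delta_t$ after the first bad step; it only uses the crude deterministic envelope $\|x_t\|\le\sum_{k<t}\eta_k$ and the exact coupling $A(S)=A(S')$ on $\{n\notin I(A)\}$. No recursion, no Lipschitz constant for normalization, no lower bound on $\|g_t\|$.
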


\begin{theorem}[Population gradient bound of NSGD-B]
\label{thm_risk_nsgd-b}
Consider Algorithm~\ref{alg_nsgd-b} with a constant stepsize $\eta_t=\eta$ and mini-batch size $B$.
Assume $x_0=0$, Assumptions~\ref{ass_pbcm} and \ref{ass_smooth} hold, and there exists $G>0$ such that
$\E_{i_t^{(b)}}\bigl[\|\nabla f(x_t;\xi_{i_t^{(b)}})\|^{p}\bigr]\le G^{p}$ for all $t$ and $b$.
Let $A(S)$ denote the output of Algorithm~\ref{alg_nsgd-b} on dataset $S$.
Then we have
\begin{align*}
\E_{S,A}\bigl[\|\nabla F(A(S))\|\bigr]
\le &
\frac{\Delta}{\eta T}
+ \frac{L\eta}{2}
+ 4GB^{-\frac{p-1}{p}} + 8L \eta T\sqrt{\frac{BT}{n}}
+ C_p\sigma_p n^{-\frac{p-1}{p}},
\end{align*}
where $C_p$ is the constant in Theorem~\ref{thm:gen_error}.
\end{theorem}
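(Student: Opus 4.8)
The plan is to use the standard decomposition
\[
\E_{S,A}\bigl[\|\nabla F(A(S))\|\bigr] \le \E_{S,A}\bigl[\|\nabla F_S(A(S))\|\bigr] + \E_{S,A}\bigl[\|\nabla F(A(S)) - \nabla F_S(A(S))\|\bigr]
\]
and to bound the two terms separately. The generalization term is handled entirely by results already in hand: by Proposition~\ref{prop_stab_nsgd-b} with the constant stepsize $\eta_t=\eta$, Algorithm~\ref{alg_nsgd-b} is $\epsilon_b$-uniformly stable in gradients with $\epsilon_b \le 2L\eta T\sqrt{BT/n}$, and since Assumption~\ref{ass_pbcm} holds, Theorem~\ref{thm:gen_error} gives $\E_{S,A}[\|\nabla F(A(S)) - \nabla F_S(A(S))\|] \le 4\epsilon_b + C_p\sigma_p n^{-(p-1)/p} \le 8L\eta T\sqrt{BT/n} + C_p\sigma_p n^{-(p-1)/p}$. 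These are precisely the last two terms of the claimed bound, so it remains to establish the optimization bound $\E_{S,A}[\|\nabla F_S(A(S))\|] \le \Delta/(\eta T) + L\eta/2 + 4GB^{-(p-1)/p}$.

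For the optimization term I would run the classical normalized-SGD descent argument on the fixed objective $F_S$, which is $L$-smooth as an average of the $L$-smooth functions $f(\cdot;\xi_i)$ (Assumption~\ref{ass_smooth}). Since a normalized step satisfies $\|x_{t+1}-x_t\|=\eta$, the descent lemma yields $F_S(x_{t+1}) \le F_S(x_t) - \eta\bigl\langle \nabla F_S(x_t), g_t/\|g_t\|\bigr\rangle + L\eta^2/2$. Applying the elementary inequality $\bigl\langle a, g/\|g\|\bigr\rangle \ge \|a\| - 2\|g-a\|$ (valid for any $a$ and $g\ne 0$: add and subtract $g$ inside the inner product and use the triangle inequality together with Cauchy--Schwarz) with $a=\nabla F_S(x_t)$, $g=g_t$, this becomes $F_S(x_{t+1}) \le F_S(x_t) - \eta\|\nabla F_S(x_t)\| + 2\eta\|g_t-\nabla F_S(x_t)\| + L\eta^2/2$. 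Summing over $t=0,\dots,T-1$, using $F_S(x_T)\ge F_S^*$, dividing by $\eta T$, and taking expectation over the sampling and over $S$ --- recalling that $A(S)$ is sampled uniformly from $\{x_0,\dots,x_{T-1}\}$, so the left-hand average equals $\E_{S,A}[\|\nabla F_S(A(S))\|]$, while $\E_{S,A}[F_S(x_0)-F_S^*]=\Delta$ --- I obtain
\[
\E_{S,A}\bigl[\|\nabla F_S(A(S))\|\bigr] \le \frac{\Delta}{\eta T} + \frac{L\eta}{2} + \frac{2}{T}\sum_{t=0}^{T-1}\E\bigl[\|g_t-\nabla F_S(x_t)\|\bigr].
\]

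It remains to bound the mini-batch noise $\E[\|g_t-\nabla F_S(x_t)\|]$. Conditioning on $(x_t,S)$, the quantity $g_t-\nabla F_S(x_t)$ is the average of the $B$ i.i.d. mean-zero vectors $\nabla f(x_t;\xi_{i_t^{(b)}})-\nabla F_S(x_t)$, so by Jensen's inequality $\E\|g_t-\nabla F_S(x_t)\| \le (\E\|g_t-\nabla F_S(x_t)\|^p)^{1/p}$. Since heavy tails rule out expanding the squared norm, I would bound the $p$th-moment term via a von Bahr--Esseen-type inequality for sums of independent centered vectors in a Hilbert space, $\E\bigl\|\sum_{b=1}^B Y_b\bigr\|^p \le c_p\sum_{b=1}^B\E\|Y_b\|^p$ for $p\in(1,2]$, combined with the centering estimate $\E\|\nabla f(x_t;\xi_{i_t^{(b)}})-\nabla F_S(x_t)\|^p \le 2^p G^p$ and the assumed bound $\E_{i_t^{(b)}}[\|\nabla f(x_t;\xi_{i_t^{(b)}})\|^p]\le G^p$. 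Tracking the constants yields $\E[\|g_t-\nabla F_S(x_t)\|] \le 2GB^{-(p-1)/p}$, hence $\frac{2}{T}\sum_{t=0}^{T-1}\E[\|g_t-\nabla F_S(x_t)\|] \le 4GB^{-(p-1)/p}$; substituting into the previous display and adding the generalization term completes the proof.

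I expect the noise estimate of the last paragraph to be the crux. In the bounded-variance setting one simply uses $\E\|g_t-\nabla F_S(x_t)\|^2 = \frac1B\E\|\nabla f(x_t;\xi_{i_t^{(1)}})-\nabla F_S(x_t)\|^2$, but here Assumption~\ref{ass_pbcm} (and the auxiliary $G^p$ bound) provide only a $p$th moment, so this computation must be replaced by an $L^p$ moment inequality that still produces the sharp $B^{-(p-1)/p}$ batch improvement with controlled constants. Two secondary points also require care: the center $\nabla F_S(x_t)$ is data-dependent, which is handled by conditioning on $S$ and invoking the moment assumption pointwise along the trajectory; and the moment assumption is stated for the raw gradient norm rather than for the centered noise, which costs the factor $2^p$ recorded in the centering estimate above.
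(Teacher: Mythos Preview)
Your proposal is correct and follows the paper's proof essentially step for step: the same optimization--generalization split, Proposition~\ref{prop_stab_nsgd-b} with Theorem~\ref{thm:gen_error} for the generalization term, the descent lemma with the normalized inner-product inequality (the paper's Lemma~\ref{lem_inner_product_normalized}) for optimization, and an $L^p$ inequality for centered sums (the paper's Lemma~\ref{lem_mds_pmoment}, your von Bahr--Esseen) together with the $2^pG^p$ centering estimate for the mini-batch noise. One caveat on constants: with $c_p=2$ the per-step noise bound works out to $2^{1+1/p}GB^{-(p-1)/p}\le 4GB^{-(p-1)/p}$, not the $2GB^{-(p-1)/p}$ you claim; the paper likewise derives $4G$ per step and then records $4G$ (rather than $8G$) after multiplying by $2$ in the final substitution, so the stated $4G$ coefficient is loose by a factor of two in both arguments.
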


\begin{corollary}
\label{cor_risk_nsgd_b_rate}
Under the assumptions of Theorem~\ref{thm_risk_nsgd-b}, choose $T \asymp n^{\frac{2(p-1)}{7p-6}}$, $\eta \asymp n^{-\frac{p-1}{7p-6}}$, and $B \asymp n^{\frac{p}{7p-6}}$.
Then we have
\begin{align*}
\E_{S,A}\bigl[\|\nabla F(A(S))\|\bigr]
= O\Bigl(n^{-\frac{p-1}{7p-6}}\Bigr).
\end{align*}
\end{corollary}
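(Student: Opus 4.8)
The plan is purely a matter of substituting the prescribed schedule into the five‑term bound of Theorem~\ref{thm_risk_nsgd-b} and verifying that every term is $O\bigl(n^{-\frac{p-1}{7p-6}}\bigr)$. I would write $r:=\frac{p-1}{7p-6}$ for the target exponent, so that the stated choices read $T\asymp n^{2r}$, $\eta\asymp n^{-r}$ and $B\asymp n^{pr/(p-1)}$ (indeed $\frac{p}{7p-6}=\frac{p}{p-1}\cdot r$), all well defined for $n$ large since these exponents are positive. Since $f$ takes values in $\mathbb{R}_+$ we have $F_S^*\ge 0$, hence $\Delta\le\E_{S}[F_S(0)]=\E_{\xi}[f(0;\xi)]$, which we treat as an $O(1)$ constant, as we do for $L,G,\sigma_p,C_p$.

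I would then bound the five terms one by one. First, $\eta T\asymp n^{r}$, so $\Delta/(\eta T)=O(n^{-r})$. Second, $L\eta/2\asymp n^{-r}$ directly. Third, by the choice of $B$, $B^{-\frac{p-1}{p}}\asymp n^{-\frac{p-1}{p}\cdot\frac{pr}{p-1}}=n^{-r}$, so $4GB^{-\frac{p-1}{p}}=O(n^{-r})$. Fourth, one computes $BT/n\asymp n^{\,pr/(p-1)+2r-1}$, and using $7p-6=6(p-1)+p$ the exponent simplifies to $-4r$, whence $\sqrt{BT/n}\asymp n^{-2r}$ and $8L\eta T\sqrt{BT/n}\asymp n^{r}\cdot n^{-2r}=n^{-r}$. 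Fifth, the statistical term $C_p\sigma_p n^{-\frac{p-1}{p}}$ satisfies $\frac{p-1}{p}\ge\frac{p-1}{7p-6}=r$ for every $p\in(1,2]$ (equivalently $7p-6\ge p$), so it is $O(n^{-r})$ as well, and in fact of strictly smaller order. Summing the five $O(n^{-r})$ contributions yields $\E_{S,A}\bigl[\|\nabla F(A(S))\|\bigr]=O\bigl(n^{-\frac{p-1}{7p-6}}\bigr)$.

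There is no real obstacle beyond this bookkeeping; the only point worth a sentence is how the schedule is obtained. I would set $T\asymp n^{a}$, $\eta\asymp n^{-b}$, $B\asymp n^{c}$, read off the exponents of the four algorithmic terms as $b-a$, $-b$, $-\frac{(p-1)c}{p}$ and $a-b+\frac{c+a-1}{2}$, and impose that all four equal $-r$ with $r$ as large as possible. Solving $b-a=-b=-r$ gives $b=r,\ a=2r$; then $-\frac{(p-1)c}{p}=-r$ gives $c=\frac{pr}{p-1}$; substituting into the fourth exponent and requiring it to be $\le-r$ forces $r\bigl(3+\frac{p}{2(p-1)}\bigr)\le\frac12$, i.e.\ $r\cdot\frac{7p-6}{2(p-1)}\le\frac12$, i.e.\ $r\le\frac{p-1}{7p-6}$. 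Taking $r=\frac{p-1}{7p-6}$ recovers exactly the stated $T,\eta,B$, and the fifth (statistical) term never binds because its exponent $\frac{p-1}{p}$ strictly exceeds $r$.
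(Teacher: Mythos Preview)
Your proposal is correct and is precisely the intended argument: the paper does not spell out a separate proof for this corollary, and the result follows exactly by substituting the stated $T,\eta,B$ into the bound of Theorem~\ref{thm_risk_nsgd-b} and checking each of the five terms is $O\bigl(n^{-\frac{p-1}{7p-6}}\bigr)$, which you have done carefully (your exponent bookkeeping for the fourth term and the verification that the statistical term $n^{-\frac{p-1}{p}}$ is dominated are both accurate). Your added paragraph deriving the schedule by balancing the four algorithmic exponents is a nice complement and is also correct.
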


\begin{remark}
Compared with clipped SGD (Corollary~\ref{cor_risk_clipped_sgd_rate}), NSGD-B (Corollary~\ref{cor_risk_nsgd_b_rate}) achieves a sharper population risk bound. The stability bound for clipped SGD scales linearly with $\gamma$, whereas the stability bound for NSGD-B scales with $\sqrt{B}$, which leads to a better result when balancing stability and optimization error.
\end{remark}

\subsection{Normalized SGD with Momentum}
\begin{algorithm}[ht]
\caption{Normalized SGD with momentum (NSGD-M)}
\label{alg_nsgd-m}
\begin{algorithmic}[1]
\REQUIRE Initial point $x_0$, stepsizes $\{\eta_t\}_{t=0}^{T-1}$, momentum parameter $\beta$, and $m_{-1} = 0$
\FOR{$t = 0,1,\dots, T-1$}
    \STATE Sample $i_t$ uniformly from $[n]$
    \STATE $m_t = \beta m_{t-1} + (1-\beta)\nabla f(x_t;\xi_{i_t})$
    \STATE $x_{t+1} = x_t - \eta_t \cdot \dfrac{m_t}{\|m_t\|}$
\ENDFOR
\STATE \textbf{Output:} Uniformly sample from $\{x_0,x_1,\dots,x_{T-1}\}$
\end{algorithmic}
\end{algorithm}
Another approach to address the possible nonconvergence of vanilla NSGD is to incorporate momentum. In contrast to NSGD-B, which increases the per-iteration sample size through independent sampling, NSGD-M (Algorithm~\ref{alg_nsgd-m}) uses past stochastic gradients to make the gradient estimate more stable.

\begin{proposition}[Stability of NSGD-M]
\label{prop_stab_nsgd-m}
Assume \mbox{$x_0=0$} and Assumption~\ref{ass_smooth} holds. Then Algorithm~\ref{alg_nsgd-m} is $\epsilon_{m}$-uniformly stable in gradients with
\begin{align*}
\epsilon_m \le 2L \sqrt{\frac{T}{n}} \, \sum_{t=0}^{T-1}\eta_t.
\end{align*}
\end{proposition}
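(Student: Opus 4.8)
The plan is to follow the same perturbation-tracking strategy used for clipped SGD (Proposition~\ref{prop_stab_clipped_sgd}) and NSGD-B (Proposition~\ref{prop_stab_nsgd-b}): fix neighboring datasets $S$ and $S'$ differing in one index, run Algorithm~\ref{alg_nsgd-m} on both with the same sampling sequence $\{i_t\}$ and the same randomness, and control the growth of $\|x_t - x_t'\|$ together with the momentum discrepancy $\|m_t - m_t'\|$. Because the final output is a uniformly random iterate, it suffices to bound $\mathbb{E}_A[\|x_t - x_t'\|]$ for each $t$ and then convert to a gradient-stability bound via $L$-smoothness, exactly as in the other two proofs. The key structural observation is that the normalized step $m_t/\|m_t\|$ has norm exactly $1$, so the per-step displacement is $\eta_t$ regardless of the gradient magnitude; the subtlety, as with NSGD-B, is that the normalization map $u \mapsto u/\|u\|$ is only $2/\|u\|$-Lipschitz, so one must be careful not to divide by a momentum term that could be small.

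First I would write the coupled recursions. On the roughly $1 - 1/n$ fraction of steps where $i_t$ indexes a coordinate on which $S$ and $S'$ agree, we have $\nabla f(x_t;\xi_{i_t}) - \nabla f(x_t';\xi_{i_t}')$ controlled by $L\|x_t - x_t'\|$ via Assumption~\ref{ass_smooth}; on the remaining steps (probability $1/n$ at each iteration) the two gradients are arbitrary but their normalized momentum directions still differ by at most $2$ in norm since each has unit norm. So $\|x_{t+1} - x_{t+1}'\| \le \|x_t - x_t'\| + \eta_t\bigl\| m_t/\|m_t\| - m_t'/\|m_t'\| \bigr\|$, and the normalized-direction difference is handled by the standard bound $\bigl\| a/\|a\| - b/\|b\| \bigr\| \le 2\|a-b\|/\max\{\|a\|,\|b\|\}$ on the "good" steps and by the trivial bound $2$ on the one "bad" step. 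Unrolling the momentum recursion $m_t = \beta m_{t-1} + (1-\beta)\nabla f(x_t;\xi_{i_t})$ gives $m_t - m_t'$ as a geometrically weighted sum of past gradient differences, which telescopes so that the accumulated effect is again governed by $\sum_{s\le t} L\|x_s - x_s'\|$ on good steps. The cleanest route, and the one I expect the authors take, is to avoid tracking $\|m_t\|$ from below explicitly: bound the normalized-direction difference on good steps by $\tfrac{2}{1-\beta}$ times the relative gradient difference and absorb constants, or—even simpler—just use that the displacement contributed by the single bad step across all $T$ iterations and all possible bad-index positions, in expectation over $i_t$, is $O(\sqrt{T/n}\sum_t \eta_t)$ after a Cauchy–Schwarz / martingale-type argument identical to the one in Proposition~\ref{prop_stab_nsgd-b}.

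Putting this together: the expected argument distance satisfies a recursion of the form $\mathbb{E}[\|x_{t+1}-x_{t+1}'\|] \le (1 + \eta_t L \cdot c_\beta)\,\mathbb{E}[\|x_t-x_t'\|] + \tfrac{1}{n}\cdot 2\eta_t$ up to the momentum smoothing, and summing/iterating yields $\mathbb{E}_A[\|A(S)-A(S')\|] \le 2\sqrt{T/n}\sum_{t=0}^{T-1}\eta_t$ after the same stepsize bookkeeping used for NSGD-B (note the bound is identical to the $B=1$ case of Proposition~\ref{prop_stab_nsgd-b}, which is consistent with momentum not changing the per-step displacement). Finally, $L$-smoothness upgrades argument stability to gradient stability: $\sup_\xi \mathbb{E}_A[\|\nabla f(A(S);\xi) - \nabla f(A(S');\xi)\|^2] \le L^2 \,\mathbb{E}_A[\|A(S)-A(S')\|^2]$, and after controlling the second moment in parallel (the bad step contributes a squared term, handled by the same $T/n$ counting), one obtains $\epsilon_m \le 2L\sqrt{T/n}\sum_{t=0}^{T-1}\eta_t$. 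The main obstacle I anticipate is making the normalization-Lipschitz step rigorous without a uniform lower bound on $\|m_t\|$: the trick is that the $2/\max\{\|m_t\|,\|m_t'\|\}$ factor multiplies $\|m_t - m_t'\|$, and since $m_t - m_t'$ is itself a $(1-\beta)$-weighted sum of gradient differences that each scale with the gradient magnitudes, the potentially-large denominator cancels against a potentially-large numerator — so one should bound the ratio structurally rather than bounding numerator and denominator separately.
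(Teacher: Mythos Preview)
Your primary plan---tracking $\|x_t-x_t'\|$ step by step through the normalized updates and controlling the direction discrepancy via a Lipschitz bound on $u\mapsto u/\|u\|$---is both harder than necessary and has a real gap. The obstacle you flag is genuine: without a lower bound on $\|m_t\|$, the factor $2/\max\{\|m_t\|,\|m_t'\|\}$ is uncontrolled, and your proposed cancellation does not go through. On good steps, $m_t-m_t'$ is a geometric sum of terms bounded by $L\|x_s-x_s'\|$ (via smoothness), not by the gradient magnitudes themselves, so there is nothing in the numerator to offset a small denominator. Moreover, a recursion of the shape $\mathbb{E}[\|x_{t+1}-x_{t+1}'\|]\le(1+\eta_t L c_\beta)\,\mathbb{E}[\|x_t-x_t'\|]+\tfrac{2\eta_t}{n}$ would accumulate an exponential-in-$T$ factor, which is not present in the stated bound.

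The paper's proof avoids the normalization map entirely by conditioning on the shared sampling sequence $I(A)=\{i_t\}_{t=0}^{T-1}$. On the event $\{n\notin I(A)\}$ the two runs never touch the differing sample, so $A(S)=A(S')$. On $\{n\in I(A)\}$ one uses only the crude bound: each normalized step has norm at most $\eta_t$ and $x_0=0$, hence $\|A(S)\|,\|A(S')\|\le\sum_t\eta_t$ and $\|A(S)-A(S')\|^2\le 4\bigl(\sum_t\eta_t\bigr)^2$. Since $\mathbb{P}(n\in I(A))\le T/n$ by a union bound, this gives $\mathbb{E}_A[\|A(S)-A(S')\|^2]\le 4\bigl(\sum_t\eta_t\bigr)^2\cdot T/n$, and $L$-smoothness converts this to the gradient-stability bound. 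The argument is literally identical for NSGD-B, NSGD-M, and NSGD-CM: the only input is that each step has norm at most $\eta_t$, so momentum and clipping play no role in the stability bound. You gesture toward this route (``even simpler\ldots identical to the one in Proposition~\ref{prop_stab_nsgd-b}'') but then fall back to the recursion; the paper commits to the crude conditioning argument throughout.
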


\begin{theorem}[Population gradient bound of NSGD-M]
\label{thm_risk_nsgd-m}
Consider Algorithm~\ref{alg_nsgd-m} with a constant stepsize $\eta_t=\eta$.
Assume $x_0=0$, Assumptions~\ref{ass_pbcm} and \ref{ass_smooth} hold, and there exists $G>0$ such that
$\E_{i_t}\bigl[\|\nabla f(x_t;\xi_{i_t})\|^{p}\bigr]\le G^{p}$ for all $t$.
Let $A(S)$ denote the output of Algorithm~\ref{alg_nsgd-m} on dataset $S$.
Then we have
\begin{align*}
&\E_{S,A}\bigl[\|\nabla F(A(S))\|\bigr]
\le
\frac{\Delta}{\eta T}
+ \frac{L\eta}{2}
+ 8G(1-\beta)^{\frac{p-1}{p}}  +\frac{4G}{T} + \frac{4G\beta}{(1-\beta)T}
+ \frac{2L\eta \beta}{1-\beta}
+ 8L \eta T\sqrt{\frac{T}{n}}
+ C_p\sigma_p n^{-\frac{p-1}{p}},
\end{align*}
where $C_p$ is the constant in Theorem~\ref{thm:gen_error}.
\end{theorem}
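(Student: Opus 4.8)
The plan is to bound the population gradient norm by splitting it, exactly as in Section~\ref{sec4}, into an optimization error and a generalization error,
$\E_{S,A}[\|\nabla F(A(S))\|] \le \E_{S,A}[\|\nabla F_S(A(S))\|] + \E_{S,A}[\|\nabla F(A(S))-\nabla F_S(A(S))\|]$,
and to treat the two pieces separately. The generalization term is immediate from the machinery already in place: Proposition~\ref{prop_stab_nsgd-m} shows that NSGD-M with constant stepsize $\eta_t=\eta$ is $\epsilon_m$-uniformly stable in gradients with $\epsilon_m \le 2L\eta T\sqrt{T/n}$ (using $\sum_t \eta_t = \eta T$), and plugging this into Theorem~\ref{thm:gen_error} (which applies since Assumption~\ref{ass_pbcm} gives $p\in(1,2]$) yields $\E_{S,A}[\|\nabla F(A(S))-\nabla F_S(A(S))\|] \le 4\epsilon_m + C_p\sigma_p n^{-(p-1)/p} \le 8L\eta T\sqrt{T/n} + C_p\sigma_p n^{-(p-1)/p}$, which accounts for the last two terms of the claimed bound. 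All remaining work is the optimization error $\E_{S,A}[\|\nabla F_S(A(S))\|]$.

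For the optimization error I would run a descent analysis on the empirical risk $F_S$, which is $L$-smooth under Assumption~\ref{ass_smooth}. The normalized update $x_{t+1}=x_t-\eta\, m_t/\|m_t\|$ and the descent lemma give $F_S(x_{t+1}) \le F_S(x_t) - \eta\langle \nabla F_S(x_t), m_t/\|m_t\|\rangle + \tfrac{L\eta^2}{2}$, and the elementary inequality $\langle a, b/\|b\|\rangle \ge \|a\| - 2\|a-b\|$ turns this into $F_S(x_{t+1}) \le F_S(x_t) - \eta\|\nabla F_S(x_t)\| + 2\eta\|m_t - \nabla F_S(x_t)\| + \tfrac{L\eta^2}{2}$. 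Telescoping over $t=0,\dots,T-1$, using $F_S(x_T)\ge F_S^*$, dividing by $\eta T$, and recalling that the output $A(S)$ is sampled uniformly from $\{x_0,\dots,x_{T-1}\}$ gives $\E_{S,A}[\|\nabla F_S(A(S))\|] \le \tfrac{\Delta}{\eta T} + \tfrac{L\eta}{2} + \tfrac{2}{T}\sum_{t=0}^{T-1}\E\|m_t - \nabla F_S(x_t)\|$. So the proof reduces to bounding the momentum estimation error $\E\|m_t-\nabla F_S(x_t)\|$ uniformly in $t$.

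To control the momentum error I would unroll $m_t=(1-\beta)\sum_{k=0}^t\beta^{t-k}\nabla f(x_k;\xi_{i_k})$ and decompose $m_t-\nabla F_S(x_t)$ into (i) a noise term $(1-\beta)\sum_{k=0}^t\beta^{t-k}\zeta_k$ with $\zeta_k:=\nabla f(x_k;\xi_{i_k})-\nabla F_S(x_k)$, (ii) a drift term $(1-\beta)\sum_{k=0}^t\beta^{t-k}(\nabla F_S(x_k)-\nabla F_S(x_t))$, and (iii) an initialization-bias term $-\beta^{t+1}\nabla F_S(x_t)$. Term (ii) is handled by $L$-smoothness together with $\|x_k-x_t\|\le (t-k)\eta$ (each normalized step has length $\eta$), giving $L\eta(1-\beta)\sum_{j\ge 0}j\beta^j=\tfrac{L\eta\beta}{1-\beta}$, whose average contributes $\tfrac{2L\eta\beta}{1-\beta}$; term (iii) is bounded by $\beta^{t+1}\|\nabla F_S(x_t)\|\le \beta^{t+1}G$ (using $\|\nabla F_S(x_t)\|\le(\E_{i}\|\nabla f(x_t;\xi_i)\|^p)^{1/p}\le G$), and averaging $\beta^{t+1}$ over $t$ produces the $O(G/T)$ and $O(G\beta/((1-\beta)T))$ terms. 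Term (i) is the crux: the $\zeta_k$ are conditionally mean-zero given the past but, under heavy-tailed noise, possess only a bounded $p$th moment $\E\|\zeta_k\|^p\lesssim G^p$ (which follows from the assumed $\E_{i_t}\|\nabla f(x_t;\xi_{i_t})\|^p\le G^p$ and Jensen), so the bounded-variance trick of expanding a squared norm is unavailable. Instead I would invoke a martingale $L^p$ inequality valid for $p\in(1,2]$ (von Bahr--Esseen type, exploiting $p$-uniform smoothness of Euclidean space): $\E\|(1-\beta)\sum_{k\le t}\beta^{t-k}\zeta_k\|^p \lesssim \sum_{k\le t}(1-\beta)^p\beta^{p(t-k)}\E\|\zeta_k\|^p \le (1-\beta)^{p-1}G^p$, where the last step uses $1-\beta^p\ge 1-\beta$; hence $\E\|(\mathrm{i})\|\lesssim G(1-\beta)^{(p-1)/p}$, which after averaging yields the $8G(1-\beta)^{(p-1)/p}$ term. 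Substituting the bounds on (i)--(iii) into the displayed inequality and adding the generalization term from the first paragraph gives the stated bound.

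The main obstacle is term (i): obtaining the sharp $(1-\beta)^{(p-1)/p}$ scaling requires the correct $L^p$ moment inequality for Hilbert-space-valued martingale differences with $p\in(1,2]$ and careful geometric-series bookkeeping of the momentum weights $(1-\beta)\beta^{t-k}$, and the resulting estimate must hold uniformly in $t$ so that it survives averaging over the uniformly sampled output iterate; by contrast, the descent lemma, the drift and initialization terms, and the invocation of Proposition~\ref{prop_stab_nsgd-m} in Theorem~\ref{thm:gen_error} are routine.
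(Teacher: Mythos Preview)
Your proposal is correct and follows essentially the same route as the paper: split into optimization plus generalization error, invoke Proposition~\ref{prop_stab_nsgd-m} in Theorem~\ref{thm:gen_error} for the latter, use the descent lemma with Lemma~\ref{lem_inner_product_normalized} to reduce the former to controlling $\mathbb{E}\|m_t-\nabla F_S(x_t)\|$, unroll the momentum recursion, and handle the noise sum via the martingale $L^p$ inequality (the paper's Lemma~\ref{lem_mds_pmoment}, which is exactly your ``von Bahr--Esseen type'' bound).

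The only cosmetic difference is in how $\delta_t=m_t-\nabla F_S(x_t)$ is decomposed. The paper writes the recursion $\delta_t=\beta\delta_{t-1}+\beta(\nabla F_S(x_{t-1})-\nabla F_S(x_t))+(1-\beta)\zeta_t$ and unrolls to isolate a $\beta^t\delta_0$ term (bounded by $2G\beta^t$), a one-step drift sum, and a noise sum over $\tau\ge 1$; this is where the $\frac{4G}{T}$ and $\frac{4G\beta}{(1-\beta)T}$ terms come from. You instead subtract $\nabla F_S(x_t)$ directly from each summand in the unrolled $m_t$, obtaining a noise sum over $k\ge 0$, a drift term $(1-\beta)\sum_k\beta^{t-k}(\nabla F_S(x_k)-\nabla F_S(x_t))$, and an initialization bias $-\beta^{t+1}\nabla F_S(x_t)$. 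Your drift bound $L\eta(1-\beta)\sum_j j\beta^j=\frac{L\eta\beta}{1-\beta}$ matches the paper's, and your term (iii) in fact yields only $\frac{2G\beta}{(1-\beta)T}$ after averaging (slightly tighter than the stated bound, hence the theorem still follows). Both decompositions are equivalent and lead to the same final estimate.
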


\begin{corollary}
\label{cor_risk_nsgd_m_rate}
Under the assumptions of Theorem~\ref{thm_risk_nsgd-m}, choose $T \asymp n^{\frac{3p-2}{7p-6}}$, $\eta \asymp n^{-\frac{2p-1}{7p-6}}$, and $1-\beta \asymp n^{-\frac{p}{7p-6}}$.
Then we have
\begin{align*}
\E_{S,A}\bigl[\|\nabla F(A(S))\|\bigr]
= O\Bigl(n^{-\frac{p-1}{7p-6}}\Bigr).
\end{align*}
\end{corollary}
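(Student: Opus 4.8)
The plan is simply to specialize the bound of Theorem~\ref{thm_risk_nsgd-m} to the prescribed schedule and verify, term by term, that each of its eight summands is $O(n^{-(p-1)/(7p-6)})$; since $\Delta, L, G, \sigma_p, C_p$ are all $O(1)$, only the powers of $n$ need tracking. Write $q := 7p-6$, which satisfies $q>0$ and $q \ge p$ for $p\in(1,2]$. Under the schedule $T \asymp n^{(3p-2)/q}$, $\eta \asymp n^{-(2p-1)/q}$, $1-\beta \asymp n^{-p/q}$, so in particular $1-\beta \to 0$ and hence $\beta \in [0,1)$ for all large $n$ (so that the algorithm and Theorem~\ref{thm_risk_nsgd-m} apply), and $\beta/(1-\beta) \le 1/(1-\beta) \asymp n^{p/q}$.

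The first thing I would record are the three recurring orders $1/(\eta T) \asymp n^{-(p-1)/q}$, $(1-\beta)^{(p-1)/p} \asymp n^{-(p-1)/q}$, and $\eta/(1-\beta) \asymp n^{-(p-1)/q}$; these already reveal how the exponents were chosen, namely by balancing the dominant terms $\Delta/(\eta T)$, $8G(1-\beta)^{(p-1)/p}$, $2L\eta\beta/(1-\beta)$ and $8L\eta T\sqrt{T/n}$, which yields three equations in the three log-exponents with the stated unique solution. With these in hand the eight terms split cleanly: $\Delta/(\eta T)$, $8G(1-\beta)^{(p-1)/p}$ and $2L\eta\beta/(1-\beta)$ are each $\Theta(n^{-(p-1)/q})$ and thus set the rate; $L\eta/2 \asymp n^{-(2p-1)/q}$, $4G/T \asymp n^{-(3p-2)/q}$, and $4G\beta/((1-\beta)T) \le 4G/((1-\beta)T) \asymp n^{-2(p-1)/q}$ are all lower order because $2p-1$, $3p-2$, $2(p-1)$ are each $\ge p-1$ for $p\in(1,2]$; and $C_p\sigma_p n^{-(p-1)/p} = O(n^{-(p-1)/q})$ because $q \ge p$.

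The only summand whose order is not read off immediately is the stability/generalization--coupling term $8L\eta T\sqrt{T/n} = 8L\,\eta\,T^{3/2}n^{-1/2}$, and for it I would carry out the exponent arithmetic explicitly: the exponent of $n$ is $-\tfrac{2p-1}{q} + \tfrac32\cdot\tfrac{3p-2}{q} - \tfrac12 = \tfrac{-2(2p-1)+3(3p-2)}{2q} - \tfrac12 = \tfrac{5p-4}{2q} - \tfrac12 = \tfrac{5p-4-q}{2q} = \tfrac{-2(p-1)}{2q} = -\tfrac{p-1}{q}$, so this term is also $\Theta(n^{-(p-1)/q})$. Summing the eight contributions gives $\E_{S,A}[\|\nabla F(A(S))\|] = O(n^{-(p-1)/(7p-6)})$. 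I do not expect a genuine obstacle, as this is a parameter-tuning corollary of Theorem~\ref{thm_risk_nsgd-m}; the step requiring the most care is precisely the exponent computation for the coupling term, together with confirming that the $n^{p/q}$ blow-up of $1/(1-\beta)$ appearing in the two momentum-related terms is absorbed by the smallness of $\eta$ (respectively $1/T$).
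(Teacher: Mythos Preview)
Your proposal is correct and matches the paper's approach: the corollary is stated without a separate proof, as it follows directly from Theorem~\ref{thm_risk_nsgd-m} by substituting the given parameter choices and verifying each summand is $O(n^{-(p-1)/(7p-6)})$. Your term-by-term exponent computations are accurate, including the coupling term $8L\eta T\sqrt{T/n}$ and the check $q=7p-6\ge p$ for the $C_p\sigma_p n^{-(p-1)/p}$ contribution.
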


\begin{remark}
NSGD-M (Corollary~\ref{cor_risk_nsgd_m_rate}) and NSGD-B (Corollary~\ref{cor_risk_nsgd_b_rate}) achieve the same risk bound in terms of $n$. Although the stability bound for NSGD-M does not involve the $\sqrt{B}$ factor, the momentum parameter $\beta$ limits further improvement of the rate.
\end{remark}

\subsection{Normalized SGD with Clipping and Momentum}
\begin{algorithm}[ht]
\caption{Normalized SGD with clipping and momentum (NSGD-CM)}
\label{alg_nsgd-cm}
\begin{algorithmic}[1]
\REQUIRE Initial point $x_0$, stepsizes $\{\eta_t\}_{t=0}^{T-1}$, momentum parameter $\beta$, clipping parameter $\gamma$, and $m_{-1} = 0$
\FOR{$t = 0,1,\dots, T-1$}
    \STATE Sample $i_t$ uniformly from $[n]$
    \STATE $g_t = \mathrm{clip}_{\gamma} \bigl( \nabla f(x_t;\xi_{i_t}) \bigr)$
    \STATE $m_t = \beta m_{t-1} + (1-\beta)g_t$
    \STATE $x_{t+1} = x_t - \eta_t\cdot \dfrac{m_t}{\|m_t\|}$
\ENDFOR
\STATE \textbf{Output:} Uniformly sample from $\{x_0,x_1,\dots,x_{T-1}\}$
\end{algorithmic}
\end{algorithm}

\citet{cutkosky2021high} introduced normalized SGD with clipping and momentum (Algorithm~\ref{alg_nsgd-cm}), which integrates clipping with momentum and gradient normalization, and this method was also studied in \cite{liu2023breaking,sun2025revisiting}.

\begin{proposition}[Stability of NSGD-CM]
\label{prop_stab_nsgd-cm}
Assume \mbox{$x_0=0$} and Assumption~\ref{ass_smooth} holds. Then Algorithm~\ref{alg_nsgd-cm} is $\epsilon_{cm}$-uniformly stable in gradients with
\begin{align*}
\epsilon_{cm} \le 2L\sqrt{\frac{T}{n}} \, \sum_{t=0}^{T-1}\eta_t.
\end{align*}
\end{proposition}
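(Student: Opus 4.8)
The plan is to prove the stronger property of uniform \emph{argument} stability and then convert it via $L$-smoothness, exactly as for the other normalized variants; in fact the clipping operation plays no role, so the argument will mirror that of Proposition~\ref{prop_stab_nsgd-m}. Fix neighboring datasets $S$ and $S'$ differing only in the $j$-th sample, and couple the two runs of Algorithm~\ref{alg_nsgd-cm} by using the same sampled indices $i_0,\dots,i_{T-1}$ and the same output index $R$. Denote the two trajectories by $(x_t,m_t)$ and $(x_t',m_t')$, set $\delta_t:=\|x_t-x_t'\|$ and $E:=\sum_{t=0}^{T-1}\eta_t$. Since $\|\nabla f(A(S);\xi)-\nabla f(A(S');\xi)\|\le L\|A(S)-A(S')\|$ for all $\xi$ by Assumption~\ref{ass_smooth}, it suffices to show $\mathbb{E}_A\bigl[\|A(S)-A(S')\|^2\bigr]\le 4E^2 T/n$, which then gives $\epsilon_{cm}^2\le 4L^2 E^2 T/n$.

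First I would record two facts that drive the estimate. (i) Normalization forces $\|x_{t+1}-x_t\|=\eta_t$ at every step (the increment is $\eta_t$ times a unit vector, irrespective of $m_t$ or the clipping threshold), so with $x_0=0$ we get $\|x_t\|\le E$ and hence the deterministic bound $\delta_t\le 2E$ for all $t$. (ii) Let $\tau:=\min\{t:i_t=j\}$; for $t\le\tau$ no draw has yet touched the differing sample, and a short induction on $t$ — using that $g_t=\mathrm{clip}_\gamma(\nabla f(x_t;\xi_{i_t}))$ depends on the dataset only through $\xi_{i_t}$, that $\xi_{i_t}=\xi_{i_t}'$ whenever $i_t\neq j$, and that both runs start from the same $(x_0,m_{-1})$ — shows $x_t=x_t'$ and $m_t=m_t'$ for all $t\le\tau$, so $\delta_t\neq 0$ forces $\tau<t$. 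Combining these with the union bound $\mathbb{P}(\tau<t)\le\mathbb{P}\bigl(j\in\{i_0,\dots,i_{T-1}\}\bigr)\le T/n$ yields $\mathbb{E}_A[\delta_t^2]=\mathbb{E}_A[\delta_t^2\mathbf{1}\{\tau<t\}]\le 4E^2 T/n$ for each $t$. Finally, averaging over the output index, $\mathbb{E}_A[\|A(S)-A(S')\|^2]=\tfrac1T\sum_{t=0}^{T-1}\mathbb{E}_A[\delta_t^2]\le 4E^2 T/n$, and taking square roots gives $\epsilon_{cm}\le 2L\sqrt{T/n}\sum_{t=0}^{T-1}\eta_t$.

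The delicate point is the synchronization claim in (ii): one must track the momentum buffers, not just the iterates, and check that they remain equal up to time $\tau$ — once this is in place, the rest is just the deterministic displacement bound plus a one-line union bound. Two minor caveats: the normalized step requires $m_t\neq 0$, which holds almost surely (or can be assumed), and the constant can be tightened from $2$ to $\sqrt2$ by using $\mathbb{P}(\tau<t)\le t/n$ and summing, but the cruder uniform bound $\mathbb{P}(\tau<t)\le T/n$ already suffices for the stated claim. Note that clipping enters nowhere in the estimate, which is exactly why $\epsilon_{cm}$ coincides with the bound $\epsilon_m$ of Proposition~\ref{prop_stab_nsgd-m} and does not carry the extra $\gamma$ factor appearing in the clipped-SGD bound of Proposition~\ref{prop_stab_clipped_sgd}.
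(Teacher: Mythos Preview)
Your proof is correct and follows essentially the same route as the paper's: reduce to argument stability via $L$-smoothness, use the normalized step to bound $\|x_t\|\le\sum_{s<T}\eta_s$ deterministically, observe that the two coupled trajectories coincide whenever the differing index is never sampled, and finish with the union bound $\mathbb{P}(j\in I(A))\le T/n$. Your extra care in tracking the momentum buffers to justify the synchronization before $\tau$, and your remark on sharpening the constant, are harmless elaborations of the same argument.
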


\begin{theorem}[Population gradient bound of NSGD-CM]
\label{thm_risk_nsgd-cm}
Consider Algorithm~\ref{alg_nsgd-cm} with a constant stepsize $\eta_t=\eta$.
Assume $x_0=0$, Assumptions~\ref{ass_pbcm} and \ref{ass_smooth} hold, and there exists $G>0$ such that
$\E_{i_t}\bigl[\|\nabla f(x_t;\xi_{i_t})\|^{p}\bigr]\le G^{p}$ for all~$t$.
Let $A(S)$ denote the output of Algorithm~\ref{alg_nsgd-cm} on dataset $S$.
Then we have
\begin{align*}
\E_{S,A}\bigl[\|\nabla F(A(S))\|\bigr]
& \le 
\frac{\Delta}{\eta T}
+ \frac{L\eta}{2}
+ \frac{2L\eta}{1-\beta}
+ 2G^p \gamma^{1-p}  + 8G(1-\beta)^{\frac{p-1}{p}} \\
& \quad + \frac{4G}{T} + \frac{4G\beta}{(1-\beta)T} + 8L \eta T\sqrt{\frac{T}{n}}
+ C_p\sigma_p n^{-\frac{p-1}{p}},
\end{align*}
where $C_p$ is the constant in Theorem~\ref{thm:gen_error}.
\end{theorem}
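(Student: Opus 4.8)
The plan is to combine the generalization bound of Theorem~\ref{thm:gen_error} with the stability estimate of Proposition~\ref{prop_stab_nsgd-cm} and a new optimization-error bound for NSGD-CM, using the decomposition
$\E_{S,A}[\|\nabla F(A(S))\|] \le \E_{S,A}[\|\nabla F_S(A(S))\|] + \E_{S,A}[\|\nabla F(A(S)) - \nabla F_S(A(S))\|]$
from the problem setup. The second term is handled immediately: Proposition~\ref{prop_stab_nsgd-cm} gives $\epsilon_{cm} \le 2L\eta T\sqrt{T/n}$ for constant stepsize, so Theorem~\ref{thm:gen_error} yields a contribution $8L\eta T\sqrt{T/n} + C_p\sigma_p n^{-\frac{p-1}{p}}$, which accounts for the last two displayed terms. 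Everything else in the bound is the optimization error $\E[\|\nabla F_S(A(S))\|]$.

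For the optimization error, I would run the standard momentum-normalized-SGD descent analysis on the empirical objective $F_S$, which is $L$-smooth by Assumption~\ref{ass_smooth}. The key steps: (i) by $L$-smoothness and the update $x_{t+1} = x_t - \eta m_t/\|m_t\|$, derive the descent inequality $F_S(x_{t+1}) \le F_S(x_t) - \eta\|\nabla F_S(x_t)\| + 2\eta\|\nabla F_S(x_t) - m_t\| + \tfrac{L\eta^2}{2}$, using the elementary fact $\langle a, b/\|b\|\rangle \ge \|a\| - 2\|a-b\|$. (ii) Telescope over $t=0,\dots,T-1$ and divide by $\eta T$ to get $\frac{1}{T}\sum_t \|\nabla F_S(x_t)\| \le \frac{\Delta}{\eta T} + \frac{L\eta}{2} + \frac{2}{T}\sum_t \E\|\nabla F_S(x_t) - m_t\|$, which matches the output rule (uniform sampling over iterates gives exactly this average). (iii) Bound the momentum error $\E\|\nabla F_S(x_t) - m_t\|$ by unrolling $m_t = (1-\beta)\sum_{k\le t}\beta^{t-k} g_k$ (with $g_k$ the clipped gradient) and splitting into a clipping-bias term, a variance-type term controlled by the momentum averaging, and a drift term from $\|\nabla F_S(x_k) - \nabla F_S(x_t)\| \le L\eta\sum_{j=k}^{t-1}1 = L\eta(t-k)$ since normalized steps have length $\eta$. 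The clipping bias uses $\|\nabla f - \mathrm{clip}_\gamma(\nabla f)\| \le \|\nabla f\|^p/\gamma^{p-1}$ together with the moment bound $\E\|\nabla f(x_t;\xi_{i_t})\|^p \le G^p$, giving a $G^p\gamma^{1-p}$ contribution; the averaging of the centered clipped noise over the geometric momentum weights yields a term of order $G(1-\beta)^{\frac{p-1}{p}}$ via Jensen and $p$-BCM (using $\|\mathrm{clip}_\gamma(u)\|\le\|u\|$ and a moment comparison); and the geometric-weight drift sum $\sum_k \beta^{t-k}(t-k) \le \beta/(1-\beta)^2$ produces the $\frac{L\eta\beta}{1-\beta}$ and $\frac{G\beta}{(1-\beta)T}$ terms after accounting for the transient $m_{-1}=0$ initialization, which contributes the extra $\frac{G}{T}$ term.

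The main obstacle is step (iii): carefully bounding $\E\|\nabla F_S(x_t) - m_t\|$ under heavy-tailed noise. Unlike the bounded-variance case one cannot take second moments freely; the clipped increments $g_k$ have bounded second moment ($\le\gamma^2$) but one wants a bound scaling with $\sigma_p$ and $(1-\beta)$, so the argument must interpolate between the $p$-BCM moment control and the $L^2$ behavior of the clipped, momentum-averaged noise — essentially re-deriving, on the empirical risk, the kind of estimate used in \citet{cutkosky2021high} and \citet{sun2025revisiting}, but keeping all constants explicit to land exactly on the stated coefficients. Once the momentum-error lemma is in place, assembling the descent recursion, the output rule, and the stability/generalization contribution is routine bookkeeping.
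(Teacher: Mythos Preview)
Your outline matches the paper's proof essentially step by step: the generalization/stability split, the normalized-step descent inequality $F_S(x_{t+1})\le F_S(x_t)-\eta\|\nabla F_S(x_t)\|+2\eta\|m_t-\nabla F_S(x_t)\|+\tfrac{L\eta^2}{2}$, the unrolling of $\delta_t:=m_t-\nabla F_S(x_t)$, and the three-way split into clipping bias, centered noise, and drift are exactly what the paper does.

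Two refinements on step~(iii). First, the centered-noise term cannot be handled ``via Jensen'' alone: the triangle inequality on $(1-\beta)\sum_\tau\beta^{t-\tau}(g_\tau-\E[g_\tau\mid\mathcal F_\tau])$ loses the martingale cancellation and gives only $O(G)$, not $O\bigl(G(1-\beta)^{(p-1)/p}\bigr)$. The paper instead applies the $p$th-moment martingale-difference inequality $\E\bigl\|\sum_\tau X_\tau\bigr\|^p\le 2\sum_\tau\E\|X_\tau\|^p$ (Lemma~\ref{lem_mds_pmoment}) to $X_\tau=\beta^{t-\tau}\bigl(g_\tau-\E[g_\tau\mid\mathcal F_\tau]\bigr)$, obtaining $\bigl(2\sum_\tau\beta^{(t-\tau)p}\cdot 2^pG^p\bigr)^{1/p}\le 4G(1-\beta^p)^{-1/p}$, and then uses $1-\beta^p\ge 1-\beta$ to land on $4G(1-\beta)^{(p-1)/p}$ after multiplying by $(1-\beta)$. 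Second, this entire optimization-error analysis uses only the $G$-moment hypothesis $\E\|\nabla f(x_t;\xi_{i_t})\|^p\le G^p$ (together with $\|g_\tau\|\le\|\nabla f(x_\tau;\xi_{i_\tau})\|$), not Assumption~\ref{ass_pbcm}; there is no interpolation with $p$-BCM or with the $L^2$ behavior of the clipped gradients. The $p$-BCM condition enters the final bound solely through Theorem~\ref{thm:gen_error}. With these two points in place, the rest of your plan goes through verbatim to give the stated inequality.
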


\begin{corollary}
\label{cor_risk_nsgd_cm_rate}
Under the assumptions of Theorem~\ref{thm_risk_nsgd-cm}, choose $T \asymp n^{\frac{3p-2}{7p-6}}$,
$\eta \asymp n^{-\frac{2p-1}{7p-6}}$,
$1-\beta \asymp n^{-\frac{p}{7p-6}}$, and
$\gamma \asymp n^{\frac{1}{7p-6}}$.
Then we have
\begin{align*}
\E_{S,A}\bigl[\|\nabla F(A(S))\|\bigr]
= O\Bigl(n^{-\frac{p-1}{7p-6}}\Bigr).
\end{align*}
\end{corollary}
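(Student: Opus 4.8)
The plan is to substitute the prescribed parameter scalings into the population gradient bound of Theorem~\ref{thm_risk_nsgd-cm} and to verify that each of its finitely many terms is $O\bigl(n^{-(p-1)/(7p-6)}\bigr)$. Throughout, write $q := 7p-6$, which satisfies $q \ge 1$ for every $p\in(1,2]$, so all the displayed exponents are well defined; in particular the exponent $-p/q$ in the choice of $1-\beta$ is negative, hence $1-\beta \asymp n^{-p/q} \in (0,1)$ for all sufficiently large $n$ and Algorithm~\ref{alg_nsgd-cm} is well posed.

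First I would record the two composite quantities that govern most of the terms. From $T \asymp n^{(3p-2)/q}$ and $\eta \asymp n^{-(2p-1)/q}$ one gets $\eta T \asymp n^{(3p-2-(2p-1))/q} = n^{(p-1)/q}$, while $T/n \asymp n^{(3p-2-q)/q} = n^{-4(p-1)/q}$, so $\sqrt{T/n} \asymp n^{-2(p-1)/q}$ and therefore $\eta T\sqrt{T/n} \asymp n^{-(p-1)/q}$. Using these, the five leading terms evaluate to $\Delta/(\eta T) \asymp n^{-(p-1)/q}$; $2L\eta/(1-\beta) \asymp n^{-(2p-1)/q + p/q} = n^{-(p-1)/q}$; $2G^p\gamma^{1-p} \asymp n^{(1-p)/q} = n^{-(p-1)/q}$; $8G(1-\beta)^{(p-1)/p} \asymp n^{-(p/q)(p-1)/p} = n^{-(p-1)/q}$; and $8L\eta T\sqrt{T/n} \asymp n^{-(p-1)/q}$. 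Thus each of these is of the claimed order, which is exactly how the exponents were picked: by equating these five quantities and solving the resulting linear system for the log-exponents of $T,\eta,1-\beta,\gamma$.

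It then remains to check that the other terms decay at least as fast. We have $L\eta/2 \asymp n^{-(2p-1)/q}$ with $2p-1 \ge p-1$; $4G/T \asymp n^{-(3p-2)/q}$ with $3p-2 \ge p-1$ whenever $p\ge 1/2$; $4G\beta/((1-\beta)T) \le 4G/((1-\beta)T) \asymp n^{(p-(3p-2))/q} = n^{-2(p-1)/q}$ with $2(p-1) \ge p-1$; and the generalization term $C_p\sigma_p n^{-(p-1)/p}$ satisfies $(p-1)/p \ge (p-1)/q$ since $p \le q = 7p-6$, i.e.\ $p \ge 1$. Hence all four of these are $O\bigl(n^{-(p-1)/q}\bigr)$, in fact of strictly smaller order for $p\in(1,2)$. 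Summing the constantly many terms yields $\E_{S,A}\bigl[\|\nabla F(A(S))\|\bigr] = O\bigl(n^{-(p-1)/(7p-6)}\bigr)$.

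There is no substantive obstacle here: given Theorem~\ref{thm_risk_nsgd-cm}, the argument is pure bookkeeping. The only points needing a moment's care are the side condition $1-\beta\in(0,1)$ for large $n$ and confirming that the dominated terms are indeed dominated uniformly over the whole range $p\in(1,2]$, each such dominance reducing to an elementary inequality such as $p\ge 1/2$ or $p\ge 1$. I would also note, as in the other corollaries, that the hidden constants depend on $L$, $G$, $\Delta$, $\sigma_p$, and $C_p$ but not on $n$.
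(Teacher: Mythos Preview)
Your proposal is correct and is exactly the routine substitution argument the paper implicitly relies on; the paper does not write out a proof for this corollary, treating it as an immediate consequence of Theorem~\ref{thm_risk_nsgd-cm} once the stated parameter choices are plugged in. Your term-by-term verification, including the checks that the subdominant terms $L\eta/2$, $4G/T$, $4G\beta/((1-\beta)T)$, and $C_p\sigma_p n^{-(p-1)/p}$ are all $O(n^{-(p-1)/(7p-6)})$ on the full range $p\in(1,2]$, is precisely the bookkeeping that justifies the claimed rate.
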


\begin{remark}
Compared with clipped SGD (Corollary~\ref{cor_risk_clipped_sgd_rate}), incorporating normalization and momentum yields sharper stability and generalization bounds. In particular, the stability term no longer scales linearly with the clipping threshold $\gamma$.
By contrast, introducing clipping into NSGD-M does not improve the population risk bound.
\end{remark}

\section{Conclusion}
This paper studies stability based generalization for stochastic nonconvex optimization under heavy-tailed noise. Under the bounded $p$th centered moment condition, a generalization framework is established that links nonconvex generalization error to uniform stability in gradients through a truncation and tail decomposition, and it recovers the bounded variance result when $p=2$. The framework is applied to representative methods for heavy-tailed optimization. For gradient clipping, the analysis characterizes how stability and population guarantees depend on the clipping threshold. For gradient normalization, we establish stability and risk bounds for the mini-batch and momentum variants of normalized SGD. For future work, it is interesting to study the error bounds in the nonsmooth setting and to develop lower bounds for learning under heavy-tailed noise.

\bibliographystyle{plainnat}
\bibliography{reference}

\newpage
\appendix

\section{Useful Lemmas}
In this section, we present several lemmas that will be repeatedly used in the subsequent proofs. We begin with basic properties of the clipping operator.

\begin{lemma}[Basic properties of the clipping operator]
\label{lem:clip_props}
Let $\gamma>0$ and define $\mathrm{clip}_{\gamma} (\cdot)$ by \eqref{eq:clip}. For any $p\in(1,2]$, the following properties hold.

(a) For any $u\in\mathbb{R}^d$, one has
\begin{align*}
\|\mathrm{clip}_{\gamma}(u)\|^2 \le \gamma^{2-p}\|u\|^p .
\end{align*}

(b) For any $u\in\mathbb{R}^d$, one has
\begin{align*}
\|u-\mathrm{clip}_{\gamma}(u)\| \le \frac{\|u\|^{p}}{\gamma^{p-1}} .
\end{align*}

(c) The mapping $\mathrm{clip}_{\gamma}(\cdot)$ is $1$-Lipschitz, i.e., for any $u,v\in\mathbb{R}^d$, it holds
\begin{align*}
\|\mathrm{clip}_{\gamma}(u)-\mathrm{clip}_{\gamma}(v)\| \le \|u-v\| .
\end{align*}
\end{lemma}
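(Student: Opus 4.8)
The plan is to prove each of the three properties in Lemma~\ref{lem:clip_props} directly from the definition \eqref{eq:clip}, treating the cases $\|u\| \le \gamma$ and $\|u\| > \gamma$ separately where needed. All three are elementary, but part (c) requires a small amount of care.

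\textbf{Parts (a) and (b).} For (a), if $\|u\| \le \gamma$ then $\mathrm{clip}_{\gamma}(u) = u$, so $\|\mathrm{clip}_{\gamma}(u)\|^2 = \|u\|^2 = \|u\|^{p} \|u\|^{2-p} \le \|u\|^p \gamma^{2-p}$ since $2-p \ge 0$ and $\|u\| \le \gamma$. If $\|u\| > \gamma$ then $\|\mathrm{clip}_{\gamma}(u)\| = \gamma$, so $\|\mathrm{clip}_{\gamma}(u)\|^2 = \gamma^2 = \gamma^{2-p}\gamma^{p} \le \gamma^{2-p}\|u\|^p$. For (b), if $\|u\| \le \gamma$ the left side is zero and the bound is trivial; if $\|u\| > \gamma$ then $u - \mathrm{clip}_{\gamma}(u) = u(1 - \gamma/\|u\|)$, so $\|u - \mathrm{clip}_{\gamma}(u)\| = \|u\| - \gamma \le \|u\| = \|u\|^{p}\|u\|^{1-p} \le \|u\|^{p}\gamma^{1-p} = \|u\|^p/\gamma^{p-1}$, using $1-p < 0$ and $\|u\| > \gamma$. (One can also note $\|u\| - \gamma \le \|u\|\mathbf{1}\{\|u\|>\gamma\}$ as an intermediate step, matching the proof sketch in Section~\ref{sec4}.)

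\textbf{Part (c).} This is the one mildly nontrivial part, since the clipping map is radial and one must handle the interaction of the two cases. The cleanest route is to write $\mathrm{clip}_{\gamma}(u) = c(\|u\|)\,u$ where $c(r) = \min\{1, \gamma/r\}$ for $r>0$ (and $c(0)u = 0$). I would argue by cases on which of $u,v$ lie inside the ball of radius $\gamma$. If both are inside, the map is the identity and the claim is immediate. If both are outside, then $\mathrm{clip}_{\gamma}(u) - \mathrm{clip}_{\gamma}(v) = \gamma(u/\|u\| - v/\|v\|)$, and one uses the standard fact that the map $w \mapsto \gamma w/\|w\|$ is $1$-Lipschitz on $\{\|w\| \ge \gamma\}$ (equivalently, the radial projection onto the sphere of radius $\gamma$ is nonexpansive outside that sphere — a short computation expanding $\|u/\|u\| - v/\|v\|\|^2$ and using $\|u\|,\|v\| \ge \gamma$). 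If $u$ is inside and $v$ is outside (the remaining case by symmetry), write $\mathrm{clip}_{\gamma}(u) - \mathrm{clip}_{\gamma}(v) = u - \gamma v/\|v\|$; letting $\bar v := \gamma v/\|v\|$ be the projection of $v$ onto the ball $\{\|w\|\le\gamma\}$, one has $\|u - \bar v\| \le \|u - v\|$ because $\bar v$ is the metric projection of $v$ onto a convex set containing $u$, so projection is nonexpansive and in particular moves $v$ no farther from any point of the set. Alternatively, and perhaps most uniformly, I can invoke that $\mathrm{clip}_{\gamma}$ is exactly the Euclidean projection onto the closed ball $\{w : \|w\| \le \gamma\}$, which is a nonempty closed convex set, and metric projection onto any such set is $1$-Lipschitz; this handles all cases at once. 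I expect part (c) to be the main (though still minor) obstacle, essentially because it is the only statement that is not a one-line case check and requires recalling the nonexpansiveness of projection onto a convex set.
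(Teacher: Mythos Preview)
Your proofs of (a) and (b) are correct and essentially identical to the paper's: the same two-case split and the same arithmetic. For (c), your proposal is also correct, but the route differs from the paper. The paper proves $1$-Lipschitzness by a fully explicit case analysis: in the mixed case it expands $\|u-v\|^2 - \|u-\gamma v/\|v\|\|^2$ and shows the difference factors as $(\|v\|-\gamma)\bigl(\|v\|+\gamma-2\langle u,v\rangle/\|v\|\bigr)\ge 0$, and in the ``both outside'' case it expands both squared norms and reduces to $(\|u\|-\|v\|)^2\ge 0$. Your alternative---observing that $\mathrm{clip}_{\gamma}$ is exactly the Euclidean projection onto the closed ball $\{w:\|w\|\le\gamma\}$ and invoking nonexpansiveness of projection onto a nonempty closed convex set---is cleaner and handles all cases at once; your case-by-case variant is also fine, with the mixed case using the projection inequality $\|u-P_C(v)\|\le\|u-v\|$ for $u\in C$ rather than the paper's explicit expansion. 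Both approaches are standard; the convex-projection argument is shorter, while the paper's computation is entirely self-contained and avoids citing any external fact.
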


\begin{proof}
For statement (a), note that $\|\mathrm{clip}_{\gamma}(u)\|=\min\{\|u\|,\gamma\}$. If $\|u\|\le\gamma$, then it holds
\begin{align*}
\|\mathrm{clip}_{\gamma}(u)\|^2
=\|u\|^2
=\|u\|^p\|u\|^{2-p}
\le \gamma^{2-p}\|u\|^p.
\end{align*}
If $\|u\|>\gamma$, then
\begin{align*}
\|\mathrm{clip}_{\gamma}(u)\|^2
=\gamma^2
=\gamma^{2-p}\gamma^p
\le \gamma^{2-p}\|u\|^p.
\end{align*}

For statement (b), the claim is trivial when $\|u\|\le\gamma$. If $\|u\|>\gamma$, then $\mathrm{clip}_{\gamma}(u)=\gamma u/\|u\|$. Hence, it holds
\begin{align*}
\|u-\mathrm{clip}_{\gamma}(u)\|
=\left\|u-\gamma\frac{u}{\|u\|}\right\|
=\|u\|-\gamma
\le \|u\|
=\frac{\|u\|^p}{\gamma^{p-1}}\left(\frac{\gamma}{\|u\|}\right)^{p-1}
\le \frac{\|u\|^p}{\gamma^{p-1}},
\end{align*}
where the last inequality uses $\|u\|>\gamma$ and $p>1$.

For statement (c), we prove $\|\mathrm{clip}_{\gamma}(u)-\mathrm{clip}_{\gamma}(v)\|\le \|u-v\|$ by a case analysis.

If $\|u\|\le \gamma$ and $\|v\|\le \gamma$, then $\mathrm{clip}_{\gamma}(u)=u$ and $\mathrm{clip}_{\gamma}(v)=v$, and the claim follows.

If $\|u\|\le \gamma<\|v\|$, then $\mathrm{clip}_{\gamma}(u)=u$ and $\mathrm{clip}_{\gamma}(v)=\gamma v/\|v\|$. Expanding and simplifying, we have
\begin{align*}
\|u-v\|^2-\left\|u-\gamma\frac{v}{\|v\|}\right\|^2
&=\|v\|^2-\gamma^2-2\left(1-\frac{\gamma}{\|v\|}\right)\langle u,v\rangle \\
&=(\|v\|-\gamma)\left(\|v\|+\gamma-2\frac{\langle u,v\rangle}{\|v\|}\right).
\end{align*}
Since $\langle u,v\rangle/\|v\|\le \|u\|\le \gamma$, the second factor is at least $\|v\|-\gamma>0$, and hence the right-hand side is nonnegative. This implies $\left\|u-\gamma\frac{v}{\|v\|}\right\|\le \|u-v\|$. The case $\|v\|\le \gamma<\|u\|$ follows by symmetry.

If $\|u\|>\gamma$ and $\|v\|>\gamma$, then $\mathrm{clip}_{\gamma}(u)=\gamma u/\|u\|$ and $\mathrm{clip}_{\gamma}(v)=\gamma v/\|v\|$. Expanding both sides gives
\begin{align*}
\|u-v\|^2-\left\|\gamma\frac{u}{\|u\|}-\gamma\frac{v}{\|v\|}\right\|^2
&=\|u\|^2+\|v\|^2-2\gamma^2-2\langle u,v\rangle\left(1-\frac{\gamma^2}{\|u\| \|v\|}\right).
\end{align*}
Here $\|u\| \|v\|>\gamma^2$, so $1-\gamma^2/(\|u\| \|v\|)>0$. Using $\langle u,v\rangle\le \|u\| \|v\|$, we obtain
\begin{align*}
-2\langle u,v\rangle\left(1-\frac{\gamma^2}{\|u\| \|v\|}\right)
\ge -2\|u\| \|v\|+2\gamma^2,
\end{align*}
and hence the difference is at least $\|u\|^2+\|v\|^2-2\|u\| \|v\|=(\|u\|-\|v\|)^2\ge 0$. This implies $\mathrm{clip}_{\gamma}(\cdot)$ is $1$-Lipschitz.
\end{proof}

The following Lemmas~\ref{lem_inner_product_normalized} and \ref{lem_mds_pmoment}, play an important role in the analysis of normalized methods.

\begin{lemma}\citep[Lemma~7]{hubler2025gradient}
\label{lem_inner_product_normalized}
For any $u, v \in \mathbb{R}^d$ with $v \ne 0$, it holds that
\begin{align*}
\left\langle u, \frac{v}{\|v\|} \right\rangle \ge \|u\| - 2\|u-v\|.
\end{align*}
\end{lemma}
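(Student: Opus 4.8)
The plan is to prove the inequality by the elementary device of adding and subtracting $v$ inside the inner product, so that one factor becomes exactly $v$ (whose inner product with $v/\|v\|$ we can evaluate) and the other becomes the error $u-v$ (which we control by Cauchy--Schwarz). Concretely, I would write
\begin{align*}
\left\langle u, \frac{v}{\|v\|} \right\rangle
= \left\langle v, \frac{v}{\|v\|} \right\rangle + \left\langle u-v, \frac{v}{\|v\|} \right\rangle
= \|v\| + \left\langle u-v, \frac{v}{\|v\|} \right\rangle.
\end{align*}
Since $v/\|v\|$ is a unit vector, Cauchy--Schwarz gives $\langle u-v, v/\|v\|\rangle \ge -\|u-v\|$, hence $\langle u, v/\|v\|\rangle \ge \|v\| - \|u-v\|$.

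The second step is to replace $\|v\|$ by $\|u\|$ at the cost of another $\|u-v\|$ term, using the triangle inequality $\|u\| \le \|v\| + \|u-v\|$, i.e.\ $\|v\| \ge \|u\| - \|u-v\|$. Combining the two displays yields
\begin{align*}
\left\langle u, \frac{v}{\|v\|} \right\rangle \ge \|v\| - \|u-v\| \ge \bigl(\|u\| - \|u-v\|\bigr) - \|u-v\| = \|u\| - 2\|u-v\|,
\end{align*}
which is the claim. The same chain of inequalities also covers the degenerate cases (e.g.\ $u=0$ or $u=v$) without modification, so no separate case analysis is needed.

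There is no real obstacle here: the statement is a two-line consequence of Cauchy--Schwarz and the triangle inequality, and the only "idea" is the symmetric split $u = v + (u-v)$ that isolates the aligned component. If a sharper constant were desired one could instead project $u$ onto $\mathrm{span}(v)$ and its orthogonal complement, but since the downstream applications (the stability and risk analyses of the normalized methods) only use the stated form with the factor $2$, I would keep the argument as above for brevity.
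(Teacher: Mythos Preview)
Your proof is correct. The paper does not prove this lemma itself but simply cites it from \cite{hubler2025gradient}; your argument via the split $u=v+(u-v)$, Cauchy--Schwarz on the error term, and the triangle inequality $\|v\|\ge\|u\|-\|u-v\|$ is the standard two-line derivation and matches what one finds in the cited reference.
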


\begin{lemma}\citep[Lemma~10]{hubler2025gradient}
\label{lem_mds_pmoment}
Let $p \in (1, 2]$, and let $X_1, \dots, X_n \in \mathbb{R}^d$ be a martingale difference sequence. Then we have
\begin{align*}
\mathbb{E}\left[\left\|\sum_{i=1}^n X_i\right\|^p\right] \le 2 \sum_{i=1}^n \mathbb{E}\left[\|X_i\|^p\right].
\end{align*}
\end{lemma}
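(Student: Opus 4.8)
The plan is to derive the bound from a single pointwise smoothness inequality for the map $\Phi(x)=\|x\|^p$ combined with the martingale property, thereby avoiding any expansion of a squared norm (which is unavailable here since $p<2$). Write $\mathcal{F}_k=\sigma(X_1,\dots,X_k)$ and $S_k=\sum_{i=1}^k X_i$ with $S_0=0$, so that $\mathbb{E}[X_{k+1}\mid\mathcal{F}_k]=0$ and $S_k$ is $\mathcal{F}_k$-measurable. I may assume $\mathbb{E}\|X_i\|^p<\infty$ for all $i$, since otherwise the right-hand side is infinite; an easy induction then gives $\mathbb{E}\|S_k\|^p<\infty$.

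The key lemma is the inequality
\[
\|a+b\|^p\le\|a\|^p+p\|a\|^{p-2}\langle a,b\rangle+2\|b\|^p
\qquad(\star)
\]
for every $p\in(1,2]$ and $a,b\in\mathbb{R}^d$, with the middle term read as $0$ when $a=0$. Granting $(\star)$, I would set $a=S_k$ and $b=X_{k+1}$ and take $\mathbb{E}[\,\cdot\mid\mathcal{F}_k]$. The linear term equals $p\langle\|S_k\|^{p-2}S_k,\,X_{k+1}\rangle$ with an $\mathcal{F}_k$-measurable first argument, so it vanishes in conditional expectation by the martingale-difference property; its integrability follows from H\"older, $\mathbb{E}[\|S_k\|^{p-1}\|X_{k+1}\|]\le(\mathbb{E}\|S_k\|^p)^{(p-1)/p}(\mathbb{E}\|X_{k+1}\|^p)^{1/p}$. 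This yields $\mathbb{E}[\|S_{k+1}\|^p\mid\mathcal{F}_k]\le\|S_k\|^p+2\,\mathbb{E}[\|X_{k+1}\|^p\mid\mathcal{F}_k]$; taking total expectations and telescoping from $k=0$ to $n-1$ gives $\mathbb{E}\|S_n\|^p\le 2\sum_{i=1}^n\mathbb{E}\|X_i\|^p$, which is the claim.

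It remains to prove $(\star)$, which I regard as the main obstacle. By the $p$-homogeneity of both sides I may assume $\|a\|=1$. Decomposing $b=\beta a+b^{\perp}$ with $b^{\perp}\perp a$ and writing $r=\|b^{\perp}\|$, the inequality becomes the two-variable scalar statement $\big((1+\beta)^2+r^2\big)^{p/2}\le 1+p\beta+2\big(\beta^2+r^2\big)^{p/2}$ for $\beta\in\mathbb{R}$ and $r\ge 0$. I would first dispatch the case $r=0$, i.e. the one-dimensional inequality $|1+\beta|^p\le 1+p\beta+2|\beta|^p$, which is elementary: $\phi(\beta)=|1+\beta|^p-1-p\beta$ is nonnegative and convex with $\phi(0)=\phi'(0)=0$, and a direct comparison against $2|\beta|^p$ separately on $\beta\ge 0$ and $\beta<0$ closes it. To pass to $r>0$, I would differentiate the difference $L(r)-R(r)$ of the two sides in $r$: its sign is governed by $\big((1+\beta)^2+r^2\big)^{p/2-1}-2\big(\beta^2+r^2\big)^{p/2-1}$, and since $p/2-1\le 0$ the map $t\mapsto(t+r^2)^{p/2-1}$ is nonincreasing. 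Hence for $\beta\ge-\tfrac12$ (where $(1+\beta)^2\ge\beta^2$) the difference is nonincreasing in $r$, so its maximum is at $r=0$ and the scalar case applies; the remaining band $\beta<-\tfrac12$ is handled by a separate elementary estimate of the finite maximum over $r$.

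Finally I would record two checks for the write-up: the convention at $a=0$ is consistent, since both the middle term and the inequality degenerate to $\|b\|^p\le 2\|b\|^p$; and the constant $2$ is genuinely necessary, as $a=-2b$ already defeats constant $1$ in $(\star)$, matching the fact that the final constant $2$ is inherited directly from the coefficient of $\|b\|^p$ after the linear term is killed by conditioning. As a shortcut, since $(\star)$ is exactly the Hilbert-space form of the von Bahr--Esseen smoothing lemma, one may instead cite it and proceed directly to the conditioning-and-telescoping argument.
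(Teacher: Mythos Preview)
The paper does not give its own proof of this lemma; it is quoted from \citet[Lemma~10]{hubler2025gradient} and used as a black box. Your proposal therefore supplies strictly more than the paper does, and there is nothing in the paper to compare your argument against.

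Your approach---proving the pointwise inequality $(\star)$ and then using the martingale-difference property to annihilate the first-order term under conditioning---is the standard route to this von Bahr--Esseen type bound. The conditioning-and-telescoping step is clean and correct, including the integrability check via H\"older and the convention $\|a\|^{p-2}\langle a,b\rangle:=0$ at $a=0$.

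The only spot that should be tightened in a write-up is the region $\beta<-\tfrac12$ in your verification of $(\star)$, which you presently dismiss as ``a separate elementary estimate of the finite maximum over $r$.'' A direct closure is available without any optimisation in $r$: for $\beta<-\tfrac12$ one has $(1+\beta)^2<\beta^2$, hence
\[
L(r)=\bigl((1+\beta)^2+r^2\bigr)^{p/2}\le(\beta^2+r^2)^{p/2},
\]
so that
\[
R(r)-L(r)\ge 1+p\beta+(\beta^2+r^2)^{p/2}\ge 1+p\beta+|\beta|^p.
\]
Writing $u=|\beta|$, the function $u\mapsto u^p-pu+1$ is convex with minimum value $2-p\ge 0$ at $u=1$, so the right side is nonnegative. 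With this in place (and your clean treatment of $\beta\ge-\tfrac12$ via monotonicity in $r$ plus the one-dimensional case), your proof of $(\star)$ is complete.
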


\section{The Proof of Theorem~\ref{thm:gen_error}}
\label{app_b}
Let $S' = \{ \xi_1', \ldots, \xi_n' \}$ be drawn independently from $\mathcal{D}$.
For each $i \in \{1, \ldots, n\}$, define
\begin{align*}
S^{(i)} = \{ \xi_1, \dots, \xi_{i-1}, \xi_i', \xi_{i+1}, \dots, \xi_n \}.
\end{align*}
\textbf{Step 1: Decomposition and truncation.} Following existing works \cite{bousquet2020sharper,lei2023stability}, we have the following decomposition
\begin{align}
\label{eq_decomp_ghost_1}
n \Bigl( \nabla F(A(S)) - \nabla F_S(A(S)) \Bigr)
& = n \mathbb{E}_{\xi} \bigl[ \nabla f(A(S); \xi) \bigr] - \sum_{i=1}^n \nabla f(A(S); \xi_i) \notag\\
&= \sum_{i=1}^n \mathbb{E}_{\xi, \xi_i'} \Bigl[ \nabla f\bigl(A(S); \xi\bigr) - \nabla f\bigl(A(S^{(i)}); \xi\bigr) \Bigr] \nonumber \\
&\quad + \sum_{i=1}^n \mathbb{E}_{\xi_i'} \Bigl[ \mathbb{E}_{\xi} \bigl[ \nabla f\bigl(A(S^{(i)}); \xi\bigr) \bigr] - \nabla f\bigl(A(S^{(i)}); \xi_i\bigr) \Bigr] \nonumber \\
&\quad + \sum_{i=1}^n \mathbb{E}_{\xi_i'} \Bigl[ \nabla f\bigl(A(S^{(i)}); \xi_i\bigr) - \nabla f\bigl(A(S); \xi_i\bigr) \Bigr].
\end{align}

We define
\begin{align*}
z_i(S) := \mathbb{E}_{\xi_i'} \Bigl[ \mathbb{E}_{\xi} \bigl[ \nabla f\bigl(A(S^{(i)}); \xi\bigr) \bigr] - \nabla f\bigl(A(S^{(i)}); \xi_i\bigr) \Bigr].
\end{align*}
Taking norms on both sides of \eqref{eq_decomp_ghost_1}, applying the triangle inequality, and then taking expectation yield
\begin{align*}
n   \mathbb{E}_{S,A} \Bigl[ \bigl\| \nabla F(A(S)) - \nabla F_S(A(S)) \bigr\| \Bigr]
&\le \sum_{i=1}^n \mathbb{E}_{S,A,\xi,\xi_i'} \Bigl[ \bigl\| \nabla f\bigl(A(S); \xi\bigr) - \nabla f\bigl(A(S^{(i)}); \xi\bigr) \bigr\| \Bigr] \\
&\quad + \mathbb{E}_{S,A} \Bigl[ \Bigl\| \sum_{i=1}^n z_i(S) \Bigr\| \Bigr] \\
&\quad + \sum_{i=1}^n \mathbb{E}_{S,A,\xi_i'} \Bigl[ \bigl\| \nabla f\bigl(A(S^{(i)}); \xi_i\bigr) - \nabla f\bigl(A(S); \xi_i\bigr) \bigr\| \Bigr].
\end{align*}
By Jensen's inequality and Cauchy--Schwarz, together with the $\epsilon$-uniform stability in gradients, each term in the first and third sums is bounded by $\epsilon$, which gives
\begin{align}
\label{eq_z_left}
n   \mathbb{E}_{S,A} \Bigl[ \bigl\| \nabla F(A(S)) - \nabla F_S(A(S)) \bigr\| \Bigr]
\le 2 n \epsilon + \mathbb{E}_{S,A} \Bigl[ \Bigl\| \sum_{i=1}^n z_i(S) \Bigr\| \Bigr].
\end{align}

Then we will bound $\mathbb{E}_{S,A} \Bigl[ \Bigl\| \sum_{i=1}^n z_i(S) \Bigr\| \Bigr]$.
For $\tau > 0$, we define
\begin{align*}
T_{\tau}(x; \xi) := \mathrm{clip}_{\tau}\bigl(\nabla f(x; \xi) - \nabla F(x)\bigr),
\qquad
M_{\tau}(x) := \mathbb{E}_{\xi}\bigl[ T_{\tau}(x; \xi) \bigr],
\qquad
\tilde T_{\tau}(x; \xi) := T_{\tau}(x; \xi) - M_{\tau}(x),
\end{align*}
\begin{align*}
z_i^{(\tau)}(S) := - \mathbb{E}_{\xi_i'} \Bigl[ \tilde T_{\tau}\bigl(A(S^{(i)}); \xi_i\bigr) \Bigr],
\quad \text{and} \quad
r_i^{(\tau)}(S) := z_i(S) - z_i^{(\tau)}(S).
\end{align*}
Then we have
\begin{align*}
\Bigl\| \sum_{i=1}^n z_i(S) \Bigr\|
\le
\Bigl\| \sum_{i=1}^n z_i^{(\tau)}(S) \Bigr\|
+
\Bigl\| \sum_{i=1}^n r_i^{(\tau)}(S) \Bigr\|.
\end{align*}

\textbf{Step 2: Bound $\E_{S,A}\Bigl[\Bigl\|\sum_{i=1}^n z_i^{(\tau)}(S)\Bigr\|\Bigr]$.}
    
Note that $\mathbb{E}_{\xi_i}\bigl[z_i^{(\tau)}(S)\bigr] = 0$.
By expanding the squared norm, we have
\begin{align*}
\mathbb{E}_{S,A}\Bigl[\Bigl\| \sum_{i=1}^n z_i^{(\tau)}(S) \Bigr\|^2\Bigr]
=
\mathbb{E}_{S,A}\Bigl[\sum_{i=1}^n \bigl\| z_i^{(\tau)}(S) \bigr\|^2\Bigr]
+
\mathbb{E}_{S,A}\Bigl[\sum_{i \ne j} \bigl\langle z_i^{(\tau)}(S), z_j^{(\tau)}(S) \bigr\rangle\Bigr].
\end{align*}

Let $S'' = \{ \xi_1'', \dots, \xi_n'' \}$, where $\xi_k'' \sim \mathcal{D}$ i.i.d.
Define
\begin{align*}
S_j & := \{ \xi_1, \dots, \xi_{j-1}, \xi_j'', \xi_{j+1}, \dots, \xi_n \}, \\
S_j^{(i)} & := \{ \xi_1, \dots, \xi_{i-1}, \xi_i', \xi_{i+1}, \dots, \xi_{j-1}, \xi_j'', \xi_{j+1}, \dots, \xi_n \}.
\end{align*}

We first bound $\mathbb{E}_{S,A}\bigl[\langle z_i^{(\tau)}(S), z_j^{(\tau)}(S) \rangle\bigr]$ for $i \ne j$.
Using the fact that $\mathbb{E}_{\xi_j}\bigl[z_j^{(\tau)}(S)\bigr] = 0$, we have
\begin{align*}
\mathbb{E}_{S,A}\bigl[\langle z_i^{(\tau)}(S_j), z_j^{(\tau)}(S) \rangle\bigr]
= \mathbb{E}_{S,A}\Bigl[\mathbb{E}_{\xi_j}\bigl[\langle z_i^{(\tau)}(S_j), z_j^{(\tau)}(S) \rangle\bigr]\Bigr]
= 0.
\end{align*}
Similarly, we have
\begin{align*}
\mathbb{E}_{S,A}\bigl[\langle z_i^{(\tau)}(S), z_j^{(\tau)}(S_i) \rangle\bigr] = 0
\quad \text{and} \quad
\mathbb{E}_{S,A}\bigl[\langle z_i^{(\tau)}(S_j), z_j^{(\tau)}(S_i) \rangle\bigr] = 0.
\end{align*}
Therefore,
\begin{align*}
\mathbb{E}_{S,A}\bigl[\langle z_i^{(\tau)}(S), z_j^{(\tau)}(S) \rangle\bigr]
&= \mathbb{E}_{S,A}\Bigl[\Bigl\langle z_i^{(\tau)}(S) - z_i^{(\tau)}(S_j), \ z_j^{(\tau)}(S) - z_j^{(\tau)}(S_i) \Bigr\rangle\Bigr] \\
&\le \frac{1}{2}\mathbb{E}_{S,A}\bigl[\| z_i^{(\tau)}(S) - z_i^{(\tau)}(S_j) \|^2\bigr]
+ \frac{1}{2}\mathbb{E}_{S,A}\bigl[\| z_j^{(\tau)}(S) - z_j^{(\tau)}(S_i) \|^2\bigr],
\end{align*}
where we used $\langle a,b\rangle \le \frac{1}{2}\|a\|^2 + \frac{1}{2}\|b\|^2$.

Moreover, by Jensen's inequality, we have
\begin{align*}
\| z_i^{(\tau)}(S) - z_i^{(\tau)}(S_j) \|^2
&= \Bigl\| \mathbb{E}_{\xi_i'}\bigl[ \tilde T_{\tau}(A(S^{(i)}); \xi_i) - \tilde T_{\tau}(A(S_j^{(i)}); \xi_i) \bigr] \Bigr\|^2 \\
&\le \mathbb{E}_{\xi_i'}\bigl[ \| \tilde T_{\tau}(A(S^{(i)}); \xi_i) - \tilde T_{\tau}(A(S_j^{(i)}); \xi_i) \|^2 \bigr].
\end{align*}

Furthermore, one has
\begin{align*}
\mathbb{E}_{\xi_i}\bigl[ \| \tilde T_{\tau}(A(S^{(i)}); \xi_i) - \tilde T_{\tau}(A(S_j^{(i)}); \xi_i) \|^2 \bigr]
\le 4   \mathbb{E}_{\xi_i}\bigl[ \| T_{\tau}(A(S^{(i)}); \xi_i) - T_{\tau}(A(S_j^{(i)}); \xi_i) \|^2 \bigr],
\end{align*}
where we used $\| u+v\| ^2 \le 2\|u\|^2 + 2\|v\|^2$ and Jensen's inequality for $M_{\tau}(x) = \mathbb{E}_{\xi}[T_{\tau}(x;\xi)]$.

By the definition of $T_{\tau}$ and Lemma~\ref{lem:clip_props} (c), we have
\begin{align*}
& \quad \| T_{\tau}(A(S^{(i)}); \xi_i) - T_{\tau}(A(S_j^{(i)}); \xi_i) \|^2 \\
& \le \bigl\| (\nabla f(A(S^{(i)}); \xi_i) - \nabla F(A(S^{(i)})))
- (\nabla f(A(S_j^{(i)}); \xi_i) - \nabla F(A(S_j^{(i)}))) \bigr\|^2.
\end{align*}
Taking expectation over $\xi_i$ and using $\mathbb{E}\|X - \mathbb{E}X\|^2 \le \mathbb{E}\|X\|^2$ yield
\begin{align*}
\mathbb{E}_{\xi_i}\bigl[ \| T_{\tau}(A(S^{(i)}); \xi_i) - T_{\tau}(A(S_j^{(i)}); \xi_i) \|^2 \bigr]
\le
\mathbb{E}_{\xi_i}\bigl[ \| \nabla f(A(S^{(i)}); \xi_i) - \nabla f(A(S_j^{(i)}); \xi_i) \|^2 \bigr].
\end{align*}
Consequently, it holds
\begin{align*}
\mathbb{E}_{S,A}\bigl[\langle z_i^{(\tau)}(S), z_j^{(\tau)}(S) \rangle\bigr]
&\le 4   \mathbb{E}_{S,A}\mathbb{E}_{\xi_i'}\mathbb{E}_{\xi_i}\bigl[ \| \nabla f(A(S^{(i)}); \xi_i) - \nabla f(A(S_j^{(i)}); \xi_i) \|^2 \bigr] \\
&\le 4 \epsilon^2,
\end{align*}
where the last inequality follows from the $\epsilon$-uniform stability in gradients, since $S^{(i)}$ and $S_j^{(i)}$ are neighboring datasets.

Then we bound $\mathbb{E}_{S,A}\bigl[\sum_{i=1}^n \|z_i^{(\tau)}(S)\|^2\bigr]$.
By Jensen's inequality, we have
\begin{align*}
\mathbb{E}_{S,A}\bigl[\|z_i^{(\tau)}(S)\|^2\bigr]
& \le \mathbb{E}_{S,A}\mathbb{E}_{\xi_i'}\bigl[\|\tilde T_{\tau}(A(S^{(i)}); \xi_i)\|^2\bigr] \\
& = \mathbb{E}_{S,A}\mathbb{E}_{\xi_i'}\bigl[\|T_{\tau}(A(S^{(i)}); \xi_i) - M_{\tau}(A(S^{(i)}))\|^2\bigr] \\
& \le \mathbb{E}_{S,A}\mathbb{E}_{\xi_i'}\bigl[\|T_{\tau}(A(S^{(i)}); \xi_i)\|^2\bigr],
\end{align*}
where the last inequality uses $\mathbb{E}\|X-\mathbb{E}X\|^2 \le \mathbb{E}\|X\|^2$.

By Lemma~\ref{lem:clip_props} (a), it holds that
\begin{align*}
\|T_{\tau}(A(S^{(i)}); \xi_i)\|^2
&= \bigl\|\mathrm{clip}_{\tau}\bigl(\nabla f(A(S^{(i)}); \xi_i) - \nabla F(A(S^{(i)}))\bigr)\bigr\|^2 \\
&\le \tau^{2-p}\bigl\|\nabla f(A(S^{(i)}); \xi_i) - \nabla F(A(S^{(i)}))\bigr\|^p.
\end{align*}
Consequently, it holds
\begin{align}
\label{eq_z_tau}
\mathbb{E}_{S,A}\bigl[\|z_i^{(\tau)}(S)\|^2\bigr]
&\le \tau^{2-p}\mathbb{E}_{S,A}\mathbb{E}_{\xi_i'}\mathbb{E}_{\xi_i}\bigl[\|\nabla f(A(S^{(i)}); \xi_i) - \nabla F(A(S^{(i)}))\|^p\bigr] \notag\\
&\le \tau^{2-p}\sigma_p^p,
\end{align}
where the last inequality follows from Assumption~\ref{ass_pbcm}.

Combining \eqref{eq_z_tau} and the previous estimate
\begin{align*}
\mathbb{E}_{S,A}\bigl[\langle z_i^{(\tau)}(S), z_j^{(\tau)}(S)\rangle\bigr] \le 4\epsilon^2, \qquad i\ne j,
\end{align*}
we obtain
\begin{align*}
\mathbb{E}\Bigl[\Bigl\|\sum_{i=1}^n z_i^{(\tau)}(S)\Bigr\|\Bigr]
\le \sqrt{\mathbb{E}\Bigl[\Bigl\|\sum_{i=1}^n z_i^{(\tau)}(S)\Bigr\|^2\Bigr]}
\le \sqrt{n\tau^{2-p}\sigma_p^p + 4n(n-1)\epsilon^2},
\end{align*}
where the first inequality uses Jensen's inequality.

Dividing both sides by $n$ yields
\begin{align}
\label{eq_sum_z_tau}
\frac{1}{n}\mathbb{E}\Bigl[\Bigl\|\sum_{i=1}^n z_i^{(\tau)}(S)\Bigr\|\Bigr]
\le \sqrt{\frac{\tau^{2-p}\sigma_p^p}{n} + 4\frac{n-1}{n}\epsilon^2}
\le \sqrt{\frac{\tau^{2-p}\sigma_p^p}{n}} + 2\epsilon,
\end{align}
where we used $\sqrt{a+b}\le \sqrt{a}+\sqrt{b}$ and $\sqrt{4\frac{n-1}{n}\epsilon^2}\le 2\epsilon$.

\textbf{Step 3: Bound $\E_{S,A}\Bigl[\Bigl\|\sum_{i=1}^n r_i^{(\tau)}(S)\Bigr\|\Bigr]$.}

Recall that
\begin{align*}
& \quad r_i^{(\tau)}(S) = z_i(S) - z_i^{(\tau)}(S) \\
&= -\mathbb{E}_{\xi_i'}\Bigl[\bigl(\nabla f(A(S^{(i)}); \xi_i) - \nabla F(A(S^{(i)}))\bigr) - \bigl(T_{\tau}(A(S^{(i)}); \xi_i) - M_{\tau}(A(S^{(i)}))\bigr)\Bigr] \\
&= -\mathbb{E}_{\xi_i'}\Bigl[\bigl(\nabla f(A(S^{(i)}); \xi_i) - \nabla F(A(S^{(i)}))\bigr) - T_{\tau}(A(S^{(i)}); \xi_i)\Bigr] \\
&\quad + \mathbb{E}_{\xi_i'}\mathbb{E}_{\xi}\Bigl[T_{\tau}(A(S^{(i)}); \xi) - \bigl(\nabla f(A(S^{(i)}); \xi) - \nabla F(A(S^{(i)}))\bigr)\Bigr].
\end{align*}
Therefore, by the triangle inequality and Jensen's inequality,
\begin{align*}
\mathbb{E}_{\xi_i}\bigl[\|r_i^{(\tau)}(S)\|\bigr]
\le 2   \mathbb{E}_{\xi_i'}\mathbb{E}_{\xi}\Bigl[\bigl\|\bigl(\nabla f(A(S^{(i)}); \xi) - \nabla F(A(S^{(i)}))\bigr) - T_{\tau}(A(S^{(i)}); \xi)\bigr\|\Bigr].
\end{align*}

Applying Lemma~\ref{lem:clip_props} (b) with $u = \nabla f(A(S^{(i)}); \xi) - \nabla F(A(S^{(i)}))$, we obtain
\begin{align}
\label{eq_sum_r}
\mathbb{E}_{S,A}\bigl[\|r_i^{(\tau)}(S)\|\bigr]
&\le \frac{2}{\tau^{p-1}}   \mathbb{E}_{S,A}\mathbb{E}_{\xi_i'}\mathbb{E}_{\xi}\Bigl[\bigl\|\nabla f(A(S^{(i)}); \xi) - \nabla F(A(S^{(i)}))\bigr\|^{p}\Bigr] \notag\\
&\le \frac{2\sigma_p^{p}}{\tau^{p-1}},
\end{align}
where the last inequality follows from Assumption~\ref{ass_pbcm}.

\textbf{Step 4: Choose the optimal $\tau$.} 
Combining the bounds on $\Bigl\|\sum\limits_{i=1}^n z_i^{(\tau)}(S)\Bigr\|$ and $\Bigl\|\sum\limits_{i=1}^n r_i^{(\tau)}(S)\Bigr\|$ (\eqref{eq_sum_z_tau} and \eqref{eq_sum_r}), by the triangle inequality and Jensen's inequality, we have
\begin{align*}
\frac{1}{n}\mathbb{E}_{S,A}\Bigl[\Bigl\|\sum_{i=1}^n z_i(S)\Bigr\|\Bigr]
&\le \frac{1}{n}\mathbb{E}_{S,A}\Bigl[\Bigl\|\sum_{i=1}^n z_i^{(\tau)}(S)\Bigr\|\Bigr]
+ \frac{1}{n}\mathbb{E}_{S,A}\Bigl[\Bigl\|\sum_{i=1}^n r_i^{(\tau)}(S)\Bigr\|\Bigr] \\
&\le \sqrt{\frac{\tau^{2-p}\sigma_p^p}{n}} + 2\epsilon + \frac{2\sigma_p^p}{\tau^{p-1}}.
\end{align*}

If $p=2$, letting $\tau \to \infty$ yields
\begin{align*}
\frac{1}{n}\mathbb{E}_{S,A}\Bigl[\Bigl\|\sum_{i=1}^n z_i(S)\Bigr\|\Bigr]
\le \frac{\sigma_2}{\sqrt{n}} + 2\epsilon.
\end{align*}

If $p \in (1,2)$, define
\begin{align*}
\phi(\tau) := \sqrt{\frac{\sigma_p^p}{n}}   \tau^{\frac{2-p}{2}} + 2\sigma_p^p \tau^{1-p}.
\end{align*}
Then
\begin{align*}
\phi'(\tau) = \sqrt{\frac{\sigma_p^p}{n}} \cdot \frac{2-p}{2}\tau^{-\frac{p}{2}} + 2\sigma_p^p(1-p)\tau^{-p}.
\end{align*}
Setting $\phi'(\tau)=0$ gives
\begin{align*}
\tau_*
= \left(\frac{4(p-1)}{2-p}\sqrt{n\sigma_p^p}\right)^{\frac{2}{p}}.
\end{align*}
Substituting $\tau_*$ yields
\begin{align*}
\phi(\tau_*)
= \frac{p}{2(p-1)}\left(\frac{4(p-1)}{2-p}\right)^{\frac{2-p}{p}} \sigma_p n^{-\frac{p-1}{p}}
= C_p \sigma_p n^{-\frac{p-1}{p}}.
\end{align*}
Therefore, for $p \in (1,2]$, it holds
\begin{align}
\label{eq_sum_z}
\frac{1}{n}\mathbb{E}_{S,A}\Bigl[\Bigl\|\sum_{i=1}^n z_i(S)\Bigr\|\Bigr]
\le C_p \sigma_p n^{-\frac{p-1}{p}} + 2\epsilon.
\end{align}
Substituting \eqref{eq_sum_z} into \eqref{eq_z_left} completes the proof.

\section{The Proofs for Section~\ref{sec5}}
In this section, we provide the proofs of propositions and theorems in Section~\ref{sec5}.

\subsection{The Proof of Proposition~\ref{prop_stab_clipped_sgd}}

Without loss of generality, assume $S$ and $S'$ differ only at the $n$-th sample, i.e., $\xi_n \ne \xi_n'$. By Assumption~\ref{ass_smooth}, for any $\xi$,
\begin{align}
\label{eq_stability_smooth}
\mathbb{E}_{A}\Bigl[\bigl\|\nabla f(A(S);\xi)-\nabla f(A(S');\xi)\bigr\|^2\Bigr]
\le L^2 \mathbb{E}_{A}\bigl[\|A(S)-A(S')\|^2\bigr].
\end{align}

Let $\{x_t\}_{t=0}^{T}$ and $\{x_t'\}_{t=0}^{T}$ be the iterates of Algorithm~\ref{alg_clipped_sgd} on $S$ and $S'$, respectively, with $x_0=x_0'=0$.
By the update rule and $\|\mathrm{clip}_{\gamma}(v)\|\le \gamma$, we have for all $t$,
\begin{align*}
\|x_{t+1}-x_t\|=\eta_t\bigl\|\mathrm{clip}_{\gamma}(\nabla f(x_t;\xi_{i_t}))\bigr\|\le \gamma\eta_t.
\end{align*}
Hence, by the triangle inequality, we have
\begin{align*}
\|x_t\|\le \gamma\sum_{s=0}^{t-1}\eta_s
\quad \text{and} \quad
\|x_t'\|\le \gamma\sum_{s=0}^{t-1}\eta_s.
\end{align*}
Therefore, it holds that
\begin{align*}
\|A(S)\|\le \gamma\sum_{t=0}^{T-1}\eta_t
\quad \text{and} \quad
\|A(S')\|\le \gamma\sum_{t=0}^{T-1}\eta_t.
\end{align*}

Let $I(A):=\{i_t\}_{t=0}^{T-1}$ be the index sequence sampled by the algorithm.
It holds that $A(S)=A(S')$ on the event $\{n\notin I(A)\}$, and hence
\begin{align*}
\mathbb{E}_{A}\bigl[\|A(S)-A(S')\|^2\bigr]
&= \mathbb{E}_{A}\bigl[\|A(S)-A(S')\|^2 \mid n\notin I(A)\bigr]\mathbb{P}(n\notin I(A)) \\
&\quad + \mathbb{E}_{A}\bigl[\|A(S)-A(S')\|^2 \mid n\in I(A)\bigr]\mathbb{P}(n\in I(A)) \\
&= \mathbb{E}_{A}\bigl[\|A(S)-A(S')\|^2 \mid n\in I(A)\bigr]\mathbb{P}(n\in I(A)) \\
&\le 2\gamma^2\Bigl(\sum_{t=0}^{T-1}\eta_t\Bigr)^2 \mathbb{P}(n\in I(A)) + 2\gamma^2\Bigl(\sum_{t=0}^{T-1}\eta_t\Bigr)^2 \mathbb{P}(n\in I(A)) \\
&\le 4\gamma^2\Bigl(\sum_{t=0}^{T-1}\eta_t\Bigr)^2 \mathbb{P}(n\in I(A)).
\end{align*}
Since $i_t$ is sampled uniformly from $[n]$ at each iteration, we have
\begin{align*}
\mathbb{P}(n\in I(A)) \le \sum_{t=0}^{T-1}\mathbb{P}(i_t=n)=\frac{T}{n}.
\end{align*}
Combining the above bounds yields
\begin{align}
\label{eq_stability_clip_uniform}
\mathbb{E}_{A}\bigl[\|A(S)-A(S')\|^2\bigr]
\le 4\gamma^2\Bigl(\sum_{t=0}^{T-1}\eta_t\Bigr)^2\frac{T}{n}.
\end{align}
Substituting \eqref{eq_stability_clip_uniform} into \eqref{eq_stability_smooth} completes the proof.

\subsection{The Proof of Theorem~\ref{thm_risk_clipped_sgd}}

By Theorem~\ref{thm:gen_error} and Proposition~\ref{prop_stab_clipped_sgd}, we have
\begin{align*}
\mathbb{E}_{S,A}\bigl[\|\nabla F(A(S))\|\bigr]
\le \mathbb{E}_{S,A}\bigl[\|\nabla F_S(A(S))\|\bigr]
+ 8L\gamma \eta T\sqrt{\frac{T}{n}}
+ C_p\sigma_p n^{-\frac{p-1}{p}}.
\end{align*}

We next establish an upper bound on the optimization error of clipped SGD. Let $\mathcal{F}_t $ be the sigma-field generated by $\{S,\ x_0,\ i_0,\dots,i_{t-1}\}$.
Define $g_t := \mathrm{clip}_{\gamma}(\nabla f(x_t;\xi_{i_t}))$.
By $L$-smoothness of $F_S$, we have
\begin{align*}
F_S(x_{t+1})
&\le F_S(x_t) + \langle \nabla F_S(x_t), x_{t+1}-x_t\rangle + \frac{L}{2}\|x_{t+1}-x_t\|^2 \\
&= F_S(x_t) - \eta \langle \nabla F_S(x_t), g_t\rangle + \frac{L\eta^2}{2}\|g_t\|^2 .
\end{align*}
Taking conditional expectation given $\mathcal{F}_t$ yields
\begin{align*}
\mathbb{E}\bigl[F_S(x_{t+1})\mid \mathcal{F}_t\bigr]
&\le F_S(x_t) - \eta \bigl\langle \nabla F_S(x_t), \mathbb{E}[g_t\mid \mathcal{F}_t]\bigr\rangle
+ \frac{L\eta^2}{2}\mathbb{E}\bigl[\|g_t\|^2\mid \mathcal{F}_t\bigr] \\
&= F_S(x_t) - \eta\|\nabla F_S(x_t)\|^2
- \eta\bigl\langle \nabla F_S(x_t), \mathbb{E}[g_t\mid \mathcal{F}_t]-\nabla F_S(x_t)\bigr\rangle
+ \frac{L\eta^2}{2}\mathbb{E}\bigl[\|g_t\|^2\mid \mathcal{F}_t\bigr] \\
&\le F_S(x_t) - \frac{\eta}{2}\|\nabla F_S(x_t)\|^2
+ \frac{\eta}{2}\bigl\|\mathbb{E}[g_t\mid \mathcal{F}_t]-\nabla F_S(x_t)\bigr\|^2
+ \frac{L\eta^2}{2}\mathbb{E}\bigl[\|g_t\|^2\mid \mathcal{F}_t\bigr],
\end{align*}
where the last inequality uses $-\langle a,b\rangle \le \frac{1}{2}\|a\|^2+\frac{1}{2}\|b\|^2$.

Taking expectation and summing over $t=0,\dots,T-1$, we obtain
\begin{align*}
\sum_{t=0}^{T-1}\frac{\eta}{2}\mathbb{E}\bigl[\|\nabla F_S(x_t)\|^2\bigr]
&\le \mathbb{E}\bigl[F_S(x_0)-F_S(x_T)\bigr]
+ \sum_{t=0}^{T-1}\frac{\eta}{2}\mathbb{E}\bigl[\|\mathbb{E}[g_t\mid \mathcal{F}_t]-\nabla F_S(x_t)\|^2\bigr] \\
&\quad + \sum_{t=0}^{T-1}\frac{L\eta^2}{2}\mathbb{E}\bigl[\|g_t\|^2\bigr] \\
&\le \Delta
+ \sum_{t=0}^{T-1}\frac{\eta}{2}\mathbb{E}\bigl[\|\mathbb{E}[g_t\mid \mathcal{F}_t]-\nabla F_S(x_t)\|^2\bigr]
+ \sum_{t=0}^{T-1}\frac{L\eta^2}{2}\mathbb{E}\bigl[\|g_t\|^2\bigr].
\end{align*}

By Lemma~\ref{lem:clip_props} (a), it holds that
\begin{align*}
\mathbb{E}\bigl[\|g_t\|^2\bigr]
= \mathbb{E}\bigl[\|\mathrm{clip}_{\gamma}(\nabla f(x_t;\xi_{i_t}))\|^2\bigr]
\le \gamma^{2-p}\mathbb{E}\bigl[\|\nabla f(x_t;\xi_{i_t})\|^p\bigr]
\le \gamma^{2-p}G^p.
\end{align*}

Noting that $\nabla F_S(x_t)=\mathbb{E}[\nabla f(x_t;\xi_{i_t})\mid \mathcal{F}_t]$, we have
\begin{align*}
\mathbb{E}[g_t\mid \mathcal{F}_t]-\nabla F_S(x_t)
= \mathbb{E}\bigl[\mathrm{clip}_{\gamma}(\nabla f(x_t;\xi_{i_t}))-\nabla f(x_t;\xi_{i_t})\mid \mathcal{F}_t\bigr].
\end{align*}
Using $\| \mathbb{E}[X]\|\le \mathbb{E}[\|X\|]$ and Lemma~\ref{lem:clip_props} (b), we obtain
\begin{align*}
\bigl\|\mathbb{E}[g_t\mid \mathcal{F}_t]-\nabla F_S(x_t)\bigr\|
&\le \mathbb{E}\bigl[\|\mathrm{clip}_{\gamma}(\nabla f(x_t;\xi_{i_t}))-\nabla f(x_t;\xi_{i_t})\|\mid \mathcal{F}_t\bigr] \\
&\le \frac{1}{\gamma^{p-1}}\mathbb{E}\bigl[\|\nabla f(x_t;\xi_{i_t})\|^p\mid \mathcal{F}_t\bigr]
\le \frac{G^p}{\gamma^{p-1}}.
\end{align*}
Therefore, we have
\begin{align*}
\mathbb{E}\bigl[\|\mathbb{E}[g_t\mid \mathcal{F}_t]-\nabla F_S(x_t)\|^2\bigr]
\le G^{2p}\gamma^{2-2p}.
\end{align*}

Substituting the above bounds gives
\begin{align*}
\sum_{t=0}^{T-1}\frac{\eta}{2}\mathbb{E}\bigl[\|\nabla F_S(x_t)\|^2\bigr]
\le \Delta + \sum_{t=0}^{T-1}\frac{\eta}{2}G^{2p}\gamma^{2-2p}
+ \sum_{t=0}^{T-1}\frac{L\eta^2}{2}\gamma^{2-p}G^p .
\end{align*}
It follows that
\begin{align*}
\mathbb{E}_{S,A}\bigl[\|\nabla F_S(A(S))\|^2\bigr]
\le \frac{2\Delta}{\eta T} + G^{2p}\gamma^{2-2p} + L\eta \gamma^{2-p}G^p .
\end{align*}
By Jensen's inequality and the fact that $\sqrt{x+y+z}\le \sqrt{x}+\sqrt{y}+\sqrt{z}$ for $x,y,z\ge 0$, we have
\begin{align*}
\mathbb{E}_{S,A}\bigl[\|\nabla F_S(A(S))\|\bigr]
\le \sqrt{\frac{2\Delta}{\eta T}} + G^{p}\gamma^{1-p} + \sqrt{L\eta G^{p}\gamma^{2-p}} .
\end{align*}
Combining this bound with the generalization decomposition at the beginning completes the proof of Theorem~\ref{thm_risk_clipped_sgd}.

\subsection{The Proof of Proposition~\ref{prop_stab_nsgd-b}}
Without loss of generality, assume $S$ and $S'$ differ only at the $n$-th sample.
By $L$-smoothness, for any $\xi$, one has
\begin{align*}
\mathbb{E}_{A}\Bigl[\bigl\|\nabla f(A(S);\xi)-\nabla f(A(S');\xi)\bigr\|^2\Bigr]
\le L^2 \mathbb{E}_{A}\bigl[\|A(S)-A(S')\|^2\bigr].
\end{align*}
For NSGD-B, the update is normalized, hence $\|x_{t+1}-x_t\|\le \eta_t$ and thus
\begin{align*}
\|A(S)\|\le \sum_{t=0}^{T-1}\eta_t
\quad \text{and} \quad
\|A(S')\|\le \sum_{t=0}^{T-1}\eta_t.
\end{align*}
Let $I(A)=\{i_t^{(b)}: t=0,\dots,T-1,\ b=1,\dots,B\}$ be the sampled indices.
It holds that $A(S)=A(S')$ on the event $\{n\notin I(A)\}$, and hence
\begin{align*}
\mathbb{E}_{A}\bigl[\|A(S)-A(S')\|^2\bigr]
&\le 4\Bigl(\sum_{t=0}^{T-1}\eta_t\Bigr)^2\mathbb{P}(n\in I(A))
\le 4\Bigl(\sum_{t=0}^{T-1}\eta_t\Bigr)^2\frac{BT}{n},
\end{align*}
where we used $\mathbb{P}(n\in I(A))\le \sum_{t=0}^{T-1}\sum_{b=1}^B \mathbb{P}(i_t^{(b)}=n)=BT/n$.
Combining the above inequalities completes the proof.

\subsection{The Proof of Theorem~\ref{thm_risk_nsgd-b}}

By Theorem~\ref{thm:gen_error} and Proposition~\ref{prop_stab_nsgd-b}, we have
\begin{align*}
\mathbb{E}_{S,A}\bigl[\|\nabla F(A(S))\|\bigr]
\le \mathbb{E}_{S,A}\bigl[\|\nabla F_S(A(S))\|\bigr]
+ 8L \eta T\sqrt{\frac{BT}{n}}
+ C_p\sigma_p n^{-\frac{p-1}{p}}.
\end{align*}

We next establish an upper bound on the optimization error of mini-batch normalized SGD. Let $\mathcal{F}_t$ be the sigma-field generated by $\{S,\ \{i_s^{(b)}\}_{0 \le s \le t-1,\ 1 \le b \le B} \}$. By $L$-smoothness of $F_S$ and the update rules, we have
\begin{align*}
F_S(x_{t+1})
&\le F_S(x_t)+\bigl\langle \nabla F_S(x_t),x_{t+1}-x_t\bigr\rangle+\frac{L}{2}\|x_{t+1}-x_t\|^2 \\
&= F_S(x_t)-\eta\left\langle \nabla F_S(x_t),\frac{g_t}{\|g_t\|}\right\rangle+\frac{L\eta^2}{2},
\end{align*}
where we set $g_t/\|g_t\|=0$ when $g_t=0$. Applying Lemma~\ref{lem_inner_product_normalized} with $u=\nabla F_S(x_t)$ and $v=g_t$ yields
\begin{align*}
F_S(x_{t+1})
\le F_S(x_t)-\eta\|\nabla F_S(x_t)\|+2\eta\|g_t-\nabla F_S(x_t)\|+\frac{L\eta^2}{2}.
\end{align*}
Summing over $t=0,\dots,T-1$ and using $F_S(x_T)\ge F_S^ *$ give
\begin{align*}
\sum_{t=0}^{T-1}\|\nabla F_S(x_t)\|
\le \frac{F_S(x_0)-F_S^ *}{\eta}+2\sum_{t=0}^{T-1}\|g_t-\nabla F_S(x_t)\|+\frac{L\eta T}{2}.
\end{align*}
Taking expectation and writing $\Delta:=\mathbb{E}_{S}\bigl[F_S(x_0)-F_S^ *\bigr]$ lead to
\begin{align}
\label{eq:nsgd-b_key_exp}
\frac{1}{T}\sum_{t=0}^{T-1}\mathbb{E}_{S,A}\bigl[\|\nabla F_S(x_t)\|\bigr]
\le \frac{\Delta}{\eta T}+\frac{2}{T}\sum_{t=0}^{T-1}\mathbb{E}_{S,A}\bigl[\|g_t-\nabla F_S(x_t)\|\bigr]+\frac{L\eta}{2}.
\end{align}

It remains to bound $\mathbb{E}_{S,A}\bigl[\|g_t-\nabla F_S(x_t)\|\bigr]$. We define
\begin{align*}
X_b:=\nabla f(x_t;\xi_{i_t^{(b)}})-\nabla F_S(x_t),
\qquad 
g_t-\nabla F_S(x_t)=\frac{1}{B}\sum_{b=1}^B X_b,
\end{align*}
and  $\mathcal{G}_{t,b}:=\sigma(\mathcal{F}_t,i_t^{(1)},\dots,i_t^{(b)})$. Then $\{X_b\}_{b=1}^B$ is a martingale difference sequence with respect to $\{\mathcal{G}_{t,b}\}_{b=0}^B$, since
\begin{align*}
\mathbb{E}\bigl[X_b\mid \mathcal{G}_{t,b-1}\bigr]
=\mathbb{E}\bigl[\nabla f(x_t;\xi_{i_t^{(b)}})\mid \mathcal{F}_t\bigr]-\nabla F_S(x_t)
=\nabla F_S(x_t)-\nabla F_S(x_t)
=0.
\end{align*}
Therefore, by Jensen's inequality and Lemma~\ref{lem_mds_pmoment}, we have
\begin{align*}
\mathbb{E}\bigl[\|g_t-\nabla F_S(x_t)\|\mid \mathcal{F}_t\bigr]
&= \frac{1}{B} \mathbb{E}\Bigl[\Bigl\|\sum_{b=1}^B X_b\Bigr\|\ \Bigm|\ \mathcal{F}_t\Bigr] \\
&\le \frac{1}{B} \mathbb{E}\Bigl[\Bigl\|\sum_{b=1}^B X_b\Bigr\|^p\ \Bigm|\ \mathcal{F}_t\Bigr]^{1/p} \\
&\le \frac{1}{B} \Bigl(2\sum_{b=1}^B \mathbb{E}\bigl[\|X_b\|^p\mid \mathcal{F}_t\bigr]\Bigr)^{1/p}.
\end{align*}
To bound $\mathbb{E}[\|X_b\|^p\mid \mathcal{F}_t]$, we use $\|a-b\|^p\le 2^{p-1}(\|a\|^p+\|b\|^p)$ and $\|\mathbb{E}[Z\mid \mathcal{F}_t]\|^p\le \mathbb{E}[\|Z\|^p\mid \mathcal{F}_t]$ to obtain
\begin{align*}
\mathbb{E}\bigl[\|X_b\|^p\mid \mathcal{F}_t\bigr]
&\le 2^{p-1}\Bigl(\mathbb{E}\bigl[\|\nabla f(x_t;\xi_{i_t^{(b)}})\|^p\mid \mathcal{F}_t\bigr]+\|\nabla F_S(x_t)\|^p\Bigr) \\
&= 2^{p-1}\Bigl(\mathbb{E}\bigl[\|\nabla f(x_t;\xi_{i_t^{(b)}})\|^p\mid \mathcal{F}_t\bigr]+\bigl\|\mathbb{E}[\nabla f(x_t;\xi_{i_t^{(b)}})\mid \mathcal{F}_t]\bigr\|^p\Bigr) \\
&\le 2^{p-1}\Bigl(\mathbb{E}\bigl[\|\nabla f(x_t;\xi_{i_t^{(b)}})\|^p\mid \mathcal{F}_t\bigr]+\mathbb{E}\bigl[\|\nabla f(x_t;\xi_{i_t^{(b)}})\|^p\mid \mathcal{F}_t\bigr]\Bigr) \\
&\le 2^{p}G^p.
\end{align*}
Substituting back yields
\begin{align*}
\mathbb{E}\bigl[\|g_t-\nabla F_S(x_t)\|\mid \mathcal{F}_t\bigr]
&\le \frac{1}{B}\Bigl(2B\cdot 2^{p}G^p\Bigr)^{1/p}
\le 4 G B^{-\frac{p-1}{p}}.
\end{align*}
Taking expectation over $(S,A)$ gives
\begin{align}
\label{eq:nsgd-b_noise}
\mathbb{E}_{S,A}\bigl[\|g_t-\nabla F_S(x_t)\|\bigr]\le 4 G B^{-\frac{p-1}{p}}.
\end{align}
Substituting \eqref{eq:nsgd-b_noise} into \eqref{eq:nsgd-b_key_exp} yields
\begin{align*}
\mathbb{E}_{S,A}\bigl[\|\nabla F_S(A(S))\|\bigr] = \frac{1}{T}\sum_{t=0}^{T-1}\mathbb{E}_{S,A}\bigl[\|\nabla F_S(x_t)\|\bigr]
\le \frac{\Delta}{\eta T}+ 4 G B^{-\frac{p-1}{p}}+\frac{L\eta}{2}.
\end{align*}
Combining this bound with the generalization decomposition at the beginning completes the proof of Theorem~\ref{thm_risk_nsgd-b}.

\subsection{The Proof of Proposition~\ref{prop_stab_nsgd-m}}
Without loss of generality, assume $S$ and $S'$ differ only at the $n$-th sample.
By $L$-smoothness, for any $\xi$, one has
\begin{align*}
\mathbb{E}_{A}\Bigl[\bigl\|\nabla f(A(S);\xi)-\nabla f(A(S');\xi)\bigr\|^2\Bigr]
\le L^2 \mathbb{E}_{A}\bigl[\|A(S)-A(S')\|^2\bigr].
\end{align*}
Let $\{x_t\}_{t=0}^{T}$ and $\{x_t'\}_{t=0}^{T}$ be the iterates generated on $S$ and $S'$, with $x_0=x_0'=0$.
For NSGD-M, the update is normalized, hence $\|x_{t+1}-x_t\|\le \eta_t$ and thus
\begin{align*}
\|A(S)\|\le \sum_{t=0}^{T-1}\eta_t
\quad \text{and} \quad
\|A(S')\|\le \sum_{t=0}^{T-1}\eta_t.
\end{align*}
Let $I(A)=\{i_t\}_{t=0}^{T-1}$ be the index sequence sampled by the algorithm.
It holds that $A(S)=A(S')$ on the event $\{n\notin I(A)\}$, and hence
\begin{align*}
\mathbb{E}_{A}\bigl[\|A(S)-A(S')\|^2\bigr]
&\le 4\Bigl(\sum_{t=0}^{T-1}\eta_t\Bigr)^2\mathbb{P}(n\in I(A))
\le 4\Bigl(\sum_{t=0}^{T-1}\eta_t\Bigr)^2\frac{T}{n},
\end{align*}
where we used $\mathbb{P}(n\in I(A))\le \sum_{t=0}^{T-1}\mathbb{P}(i_t=n)=T/n$.
Combining the above inequalities completes the proof.

\subsection{The Proof of Theorem~\ref{thm_risk_nsgd-m}}
By Theorem~\ref{thm:gen_error} and Proposition~\ref{prop_stab_nsgd-m}, we have
\begin{align*}
\mathbb{E}_{S,A}\bigl[\|\nabla F(A(S))\|\bigr]
\le \mathbb{E}_{S,A}\bigl[\|\nabla F_S(A(S))\|\bigr]
+ 8L \eta T\sqrt{\frac{T}{n}}
+ C_p\sigma_p n^{-\frac{p-1}{p}}.
\end{align*}

We next establish an upper bound on the optimization error of normalized SGD with momentum.
Let $\mathcal{F}_t$ be the sigma-field generated by $\{S,i_0,\dots,i_{t-1}\}$. By $L$-smoothness of $F_S$ and the update rules, we have
\begin{align*}
F_S(x_{t+1})
&\le F_S(x_t)+\bigl\langle \nabla F_S(x_t),x_{t+1}-x_t\bigr\rangle+\frac{L}{2}\|x_{t+1}-x_t\|^2 \\
&= F_S(x_t)-\eta\left\langle \nabla F_S(x_t),\frac{m_t}{\|m_t\|}\right\rangle+\frac{L\eta^2}{2},
\end{align*}
Applying Lemma~\ref{lem_inner_product_normalized} with $u=\nabla F_S(x_t)$ and $v=m_t$ yields
\begin{align*}
F_S(x_{t+1})
\le F_S(x_t)-\eta\|\nabla F_S(x_t)\|+2\eta\|m_t-\nabla F_S(x_t)\|+\frac{L\eta^2}{2}.
\end{align*}
Summing over $t=0,\dots,T-1$, using $F_S(x_T)\ge F_S^*$, and taking $\mathbb{E}_{S,A}$ give
\begin{align}
\label{eq:nsgd-m_key}
\frac{1}{T}\sum_{t=0}^{T-1}\mathbb{E}_{S,A}\bigl[\|\nabla F_S(x_t)\|\bigr]
\le \frac{\Delta}{\eta T}+\frac{2}{T}\sum_{t=0}^{T-1}\mathbb{E}_{S,A}\bigl[\|m_t-\nabla F_S(x_t)\|\bigr]+\frac{L\eta}{2},
\end{align}
where $\Delta:=\mathbb{E}_{S}[F_S(x_0)-F_S^*]$.

Define $\delta_t:=m_t-\nabla F_S(x_t)$ and $\zeta_t:=\nabla f(x_t;\xi_{i_t})-\nabla F_S(x_t)$. Then $\mathbb{E}[\zeta_t\mid \mathcal{F}_t]=0$, and by $\|a-b\|^p\le 2^{p-1}(\|a\|^p+\|b\|^p)$, we have
\begin{align}
\label{eq:nsgd-m_zeta_pmoment}
\mathbb{E}\bigl[\|\zeta_t\|^p\mid \mathcal{F}_t\bigr]
\le 2^{p-1}\Bigl(\mathbb{E}\bigl[\|\nabla f(x_t;\xi_{i_t})\|^p\mid \mathcal{F}_t\bigr]+\|\nabla F_S(x_t)\|^p\Bigr)
\le 2^{p}G^p.
\end{align}

We next bound $\mathbb{E}_{S,A}[\|\delta_t\|]$. For $t\ge 1$, using $m_t=\beta m_{t-1}+(1-\beta)\nabla f(x_t;\xi_{i_t})$ gives
\begin{align*}
\delta_t
=\beta \delta_{t-1}
+\beta\bigl(\nabla F_S(x_{t-1})-\nabla F_S(x_t)\bigr)
+(1-\beta)\zeta_t.
\end{align*}
Iterating the above equation from $\delta_0$ yields, for all $t\ge 1$,
\begin{align}
\label{eq:nsgd-m_delta_unroll}
\delta_t
= \beta^{t}\delta_0
+(1-\beta)\sum_{\tau=1}^{t}\beta^{t-\tau}\zeta_\tau
+\beta\sum_{\tau=1}^{t}\beta^{t-\tau}\bigl(\nabla F_S(x_{\tau-1})-\nabla F_S(x_\tau)\bigr).
\end{align}
For $t=0$, since $m_{-1}=0$, we have
\begin{align*}
\delta_0
=m_0-\nabla F_S(x_0)
=(1-\beta)\zeta_0-\beta\nabla F_S(x_0).
\end{align*}
Using \eqref{eq:nsgd-m_zeta_pmoment} at $t=0$, Jensen's inequality, we obtain
\begin{align*}
\begin{split}    
\mathbb{E}_{S,A}\bigl[\|\delta_0\|\bigr]
&\le (1-\beta) \mathbb{E}_{S,A}\bigl[\|\zeta_0\|\bigr]+\beta \mathbb{E}_{S,A}\bigl[\|\nabla F_S(x_0)\|\bigr] \nonumber\\
&\le (1-\beta) \mathbb{E}_{S,A}\bigl[\|\zeta_0\|^p\bigr]^{1/p}+\beta G
\le (1-\beta)\cdot 2G+\beta G
\le 2G.
\end{split}
\end{align*}

We now bound the two sums in \eqref{eq:nsgd-m_delta_unroll}. For the drift term, by $L$-smoothness and $\|x_\tau-x_{\tau-1}\|\le \eta$, one has
\begin{align*}
\left\|\beta\sum_{\tau=1}^{t}\beta^{t-\tau}\bigl(\nabla F_S(x_{\tau-1})-\nabla F_S(x_\tau)\bigr)\right\|
\le \beta\sum_{\tau=1}^{t}\beta^{t-\tau}\cdot L\|x_{\tau-1}-x_\tau\|
\le L\eta \beta\sum_{k=0}^{t-1}\beta^{k}
\le \frac{L\eta \beta}{1-\beta}.
\end{align*}
For the noise term, define $Y_\tau:=\beta^{t-\tau}\zeta_\tau$ for $\tau=1,\dots,t$. Since $\mathbb{E}[\zeta_\tau\mid \mathcal{F}_\tau]=0$ and $\beta^{t-\tau}$ is deterministic, $\{Y_\tau\}_{\tau=1}^{t}$ is a martingale difference sequence with respect to the filtration $\{\mathcal{F}_\tau\}_{\tau=1}^{t}$. By Jensen's inequality, Lemma~\ref{lem_mds_pmoment}, and \eqref{eq:nsgd-m_zeta_pmoment}, one has
\begin{align*}
\mathbb{E}_{S,A}\left[\left\|\sum_{\tau=1}^{t}\beta^{t-\tau}\zeta_\tau\right\|\right]
&\le \mathbb{E}_{S,A}\left[\left\|\sum_{\tau=1}^{t}\beta^{t-\tau}\zeta_\tau\right\|^p\right]^{1/p} \nonumber\\
&\le \left(2\sum_{\tau=1}^{t}\beta^{(t-\tau)p} \mathbb{E}_{S,A}\bigl[\|\zeta_\tau\|^p\bigr]\right)^{1/p}
\le \left(2\sum_{\tau=1}^{t}\beta^{(t-\tau)p}\cdot 2^{p}G^p\right)^{1/p} \nonumber\\
&=2^{1+\frac1p}G\left(\sum_{k=0}^{t-1}\beta^{kp}\right)^{1/p}
\le 4G(1-\beta^p)^{-1/p}.
\end{align*}
Combining above results yields, for all $t\ge 1$, it holds that
\begin{align*}
\mathbb{E}_{S,A}\bigl[\|\delta_t\|\bigr]
\le 2G \beta^{t}
+4G(1-\beta)(1-\beta^p)^{-1/p}
+\frac{L\eta \beta}{1-\beta}.
\end{align*}
Since $1-\beta^p\ge 1-\beta$ for $\beta\in[0,1)$ and $p\in(1,2]$, we have
\begin{align*}
(1-\beta)(1-\beta^p)^{-1/p}\le (1-\beta)^{\frac{p-1}{p}}.
\end{align*}
Thus, for all $t\ge 1$, one has
\begin{align}
\label{eq:nsgd-m_delta_bound_simplified}
\mathbb{E}_{S,A}\bigl[\|m_t-\nabla F_S(x_t)\|\bigr]
\le 2G \beta^{t}+4G(1-\beta)^{\frac{p-1}{p}}+\frac{L\eta \beta}{1-\beta},
\end{align}
and for $t=0$, we have $\mathbb{E}_{S,A}[\|m_0-\nabla F_S(x_0)\|]\le 2G$.

Summing \eqref{eq:nsgd-m_delta_bound_simplified} over $t=1,\dots,T-1$ and using $\sum_{t=1}^{T-1}\beta^t\le \frac{\beta}{1-\beta}$, we obtain
\begin{align}
\label{eq:nsgd-m_delta_sum}
\sum_{t=0}^{T-1}\mathbb{E}_{S,A}\bigl[\|m_t-\nabla F_S(x_t)\|\bigr]
\le 2G+\frac{2G\beta}{1-\beta}+4G(1-\beta)^{\frac{p-1}{p}}T+\frac{L\eta \beta}{1-\beta}T.
\end{align}
Substituting \eqref{eq:nsgd-m_delta_sum} into \eqref{eq:nsgd-m_key} yields
\begin{align*}
\mathbb{E}_{S,A}\bigl[\|\nabla F_S(A(S))\|\bigr]
\le \frac{\Delta}{\eta T}
+\frac{4G}{T}+\frac{4G\beta}{(1-\beta)T}
+8G(1-\beta)^{\frac{p-1}{p}}
+\frac{2L\eta \beta}{1-\beta}
+\frac{L\eta}{2}.
\end{align*}
Combining this bound with the generalization decomposition at the beginning completes the proof of Theorem~\ref{thm_risk_nsgd-m}.

\subsection{The Proof of Proposition~\ref{prop_stab_nsgd-cm}}
Without loss of generality, assume $S$ and $S'$ differ only at the $n$-th sample.
By $L$-smoothness, for any $\xi$, one has
\begin{align*}
\mathbb{E}_{A}\Bigl[\bigl\|\nabla f(A(S);\xi)-\nabla f(A(S');\xi)\bigr\|^2\Bigr]
\le L^2 \mathbb{E}_{A}\bigl[\|A(S)-A(S')\|^2\bigr].
\end{align*}
Let $\{x_t\}_{t=0}^{T}$ and $\{x_t'\}_{t=0}^{T}$ be the iterates generated on $S$ and $S'$, with $x_0=x_0'=0$.
For NSGD-CM, the update is normalized, hence $\|x_{t+1}-x_t\|\le \eta_t$ and thus
\begin{align*}
\|A(S)\|\le \sum_{t=0}^{T-1}\eta_t,
\quad \text{and} \quad
\|A(S')\|\le \sum_{t=0}^{T-1}\eta_t.
\end{align*}
Let $I(A)=\{i_t\}_{t=0}^{T-1}$ be the index sequence sampled by the algorithm.
It holds that $A(S)=A(S')$ on the event $\{n\notin I(A)\}$, and hence
\begin{align*}
\mathbb{E}_{A}\bigl[\|A(S)-A(S')\|^2\bigr]
&\le 4\Bigl(\sum_{t=0}^{T-1}\eta_t\Bigr)^2\mathbb{P}(n\in I(A))
\le 4\Bigl(\sum_{t=0}^{T-1}\eta_t\Bigr)^2\frac{T}{n},
\end{align*}
where we used $\mathbb{P}(n\in I(A))\le \sum_{t=0}^{T-1}\mathbb{P}(i_t=n)=T/n$.
Combining the above inequalities completes the proof.

\subsection{The Proof of Theorem~\ref{thm_risk_nsgd-cm}}

By Theorem~\ref{thm:gen_error} and Proposition~\ref{prop_stab_nsgd-cm}, we have
\begin{align*}
\mathbb{E}_{S,A}\bigl[\|\nabla F(A(S))\|\bigr]
\le \mathbb{E}_{S,A}\bigl[\|\nabla F_S(A(S))\|\bigr]
+ 8L\gamma  \eta T\sqrt{\frac{T}{n}}
+ C_p\sigma_p n^{-\frac{p-1}{p}}.
\end{align*}

We next establish an upper bound on the optimization error of NSGD-CM. Let $\mathcal{F}_t$ be the sigma-field generated by $\{S,i_0,\dots,i_{t-1}\}$. Recall that
\begin{align*}
g_t:=\mathrm{clip}_\gamma\bigl(\nabla f(x_t;\xi_{i_t})\bigr),\qquad 
m_t:=\beta m_{t-1}+(1-\beta)g_t,\qquad 
x_{t+1}=x_t-\eta  \frac{m_t}{\|m_t\|},
\end{align*}
and $m_{-1}=0$. By $L$-smoothness of $F_S$, it holds that
\begin{align*}
F_S(x_{t+1})
&\le F_S(x_t)+\langle \nabla F_S(x_t),x_{t+1}-x_t\rangle+\frac{L}{2}\|x_{t+1}-x_t\|^2 \\
&= F_S(x_t)-\eta\left\langle \nabla F_S(x_t),\frac{m_t}{\|m_t\|}\right\rangle+\frac{L\eta^2}{2} \\
&\le F_S(x_t)-\eta\|\nabla F_S(x_t)\|+2\eta\|m_t-\nabla F_S(x_t)\|+\frac{L\eta^2}{2},
\end{align*}
where the last inequality uses Lemma~\ref{lem_inner_product_normalized} with $u=\nabla F_S(x_t)$ and $v=m_t$. Summing over $t=0,\dots,T-1$, using $F_S(x_T)\ge F_S^ *$, and taking $\mathbb{E}_{S,A}$ yield
\begin{align}
\label{eq:nsgdcm_basic}
\frac{1}{T}\sum_{t=0}^{T-1}\mathbb{E}_{S,A}\bigl[\|\nabla F_S(x_t)\|\bigr]
\le \frac{\Delta}{\eta T}
+\frac{2}{T}\sum_{t=0}^{T-1}\mathbb{E}_{S,A}\bigl[\|m_t-\nabla F_S(x_t)\|\bigr]
+\frac{L\eta}{2},
\end{align}
where $\Delta:=\mathbb{E}_{S}[F_S(x_0)-F_S^ *]$. 

Define $\delta_t:=m_t-\nabla F_S(x_t)$. For $t\ge 1$, using $m_t=\beta m_{t-1}+(1-\beta)g_t$ gives
\begin{align*}
\delta_t
=\beta  \delta_{t-1}
+\beta\bigl(\nabla F_S(x_{t-1})-\nabla F_S(x_t)\bigr)
+(1-\beta)\bigl(g_t-\nabla F_S(x_t)\bigr).
\end{align*}
Iterating the above recursion from $\delta_0$ yields, for all $t\ge 1$, it holds that
\begin{align}
\label{eq:nsgdcm_delta_unroll}
\delta_t
= \beta^{t}\delta_0
+(1-\beta)\sum_{\tau=1}^{t}\beta^{t-\tau}\bigl(g_\tau-\nabla F_S(x_\tau)\bigr)
+\beta\sum_{\tau=1}^{t}\beta^{t-\tau}\bigl(\nabla F_S(x_{\tau-1})-\nabla F_S(x_\tau)\bigr).
\end{align}
For $t=0$, since $m_{-1}=0$ and $m_0=(1-\beta)g_0$, we have
\begin{align*}
\delta_0=(1-\beta)g_0-\nabla F_S(x_0).
\end{align*}
Using $\|g_0\|\le \|\nabla f(x_0;\xi_{i_0})\|$ and Jensen's inequality, we obtain
\begin{align}
\label{eq:nsgdcm_delta0_bound}
\mathbb{E}_{S,A}\bigl[\|\delta_0\|\bigr]
\le (1-\beta)  \mathbb{E}_{S,A}\bigl[\|g_0\|\bigr]+\mathbb{E}_{S,A}\bigl[\|\nabla F_S(x_0)\|\bigr]
\le (1-\beta)G+G
\le 2G.
\end{align}

We now bound $\mathbb{E}_{S,A}[\|\delta_t\|]$ for $t\ge 1$ by controlling the two sums in \eqref{eq:nsgdcm_delta_unroll}. For the drift term, by $L$-smoothness and $\|x_\tau-x_{\tau-1}\|\le \eta$, one has
\begin{align*}
\left\|\beta\sum_{\tau=1}^{t}\beta^{t-\tau}\bigl(\nabla F_S(x_{\tau-1})-\nabla F_S(x_\tau)\bigr)\right\|
\le L\eta  \beta\sum_{k=0}^{t-1}\beta^{k}
\le \frac{L\eta  \beta}{1-\beta}.
\end{align*}

For the noise term, we decompose
\begin{align*}
g_\tau-\nabla F_S(x_\tau)
=\bigl(g_\tau-\mathbb{E}[g_\tau\mid \mathcal{F}_\tau]\bigr)
+\bigl(\mathbb{E}[g_\tau\mid \mathcal{F}_\tau]-\nabla F_S(x_\tau)\bigr).
\end{align*}
Then it holds
\begin{align*}
& \quad \mathbb{E}_{S,A}   \left[\left\|\sum_{\tau=1}^{t}\beta^{t-\tau}\bigl(g_\tau-\nabla F_S(x_\tau)\bigr)\right\|\right] \\
& \le
\mathbb{E}_{S,A}   \left[\left\|\sum_{\tau=1}^{t}\beta^{t-\tau}\bigl(g_\tau-\mathbb{E}[g_\tau\mid \mathcal{F}_\tau]\bigr)\right\|\right]
+ \sum_{\tau=1}^{t}\beta^{t-\tau}  \mathbb{E}_{S,A}\bigl[\|\mathbb{E}[g_\tau\mid \mathcal{F}_\tau]-\nabla F_S(x_\tau)\|\bigr].
\end{align*}

For the martingale term, set $X_\tau:=\beta^{t-\tau}\bigl(g_\tau-\mathbb{E}[g_\tau\mid \mathcal{F}_\tau]\bigr)$ for $\tau=1,\dots,t$. Then $\mathbb{E}[X_\tau\mid \mathcal{F}_\tau]=0$ and $\{X_\tau\}_{\tau=1}^{t}$ is a martingale difference sequence. By Jensen's inequality and Lemma~\ref{lem_mds_pmoment}, we have
\begin{align*}
\mathbb{E}_{S,A}   \left[\left\|\sum_{\tau=1}^{t}\beta^{t-\tau}\bigl(g_\tau-\mathbb{E}[g_\tau\mid \mathcal{F}_\tau]\bigr)\right\|\right]
&\le \mathbb{E}_{S,A}   \left[\left\|\sum_{\tau=1}^{t}X_\tau\right\|^p\right]^{1/p}
\le \left(2\sum_{\tau=1}^{t}\mathbb{E}_{S,A}\bigl[\|X_\tau\|^p\bigr]\right)^{1/p}.
\end{align*}
Moreover, by Jensen's inequality, we have
\begin{align*}
\mathbb{E}\bigl[\|g_\tau-\mathbb{E}[g_\tau\mid \mathcal{F}_\tau]\|^p\mid \mathcal{F}_\tau\bigr]
\le 2^p  \mathbb{E}\bigl[\|g_\tau\|^p\mid \mathcal{F}_\tau\bigr]
\le 2^p  \mathbb{E}\bigl[\|\nabla f(x_\tau;\xi_{i_\tau})\|^p\mid \mathcal{F}_\tau\bigr]
\le 2^p G^p,
\end{align*}
and hence $\mathbb{E}_{S,A}[\|g_\tau-\mathbb{E}[g_\tau\mid \mathcal{F}_\tau]\|^p]\le 2^p G^p$. Therefore, it holds that
\begin{align*}
\mathbb{E}_{S,A}   \left[\left\|\sum_{\tau=1}^{t}\beta^{t-\tau}\bigl(g_\tau-\mathbb{E}[g_\tau\mid \mathcal{F}_\tau]\bigr)\right\|\right]
&\le 2^{1+\frac1p}G\left(\sum_{\tau=1}^{t}\beta^{(t-\tau)p}\right)^{1/p}
\le 4G(1-\beta^p)^{-1/p}.
\end{align*}

For the clipping bias term, using $\|\mathbb{E}[U]\|\le \mathbb{E}[\|U\|]$ and Lemma~\ref{lem:clip_props} (b), we obtain
\begin{align*}
\|\mathbb{E}[g_\tau\mid \mathcal{F}_\tau]-\nabla F_S(x_\tau)\|
&=\left\|\mathbb{E}\bigl[\mathrm{clip}_\gamma(\nabla f(x_\tau;\xi_{i_\tau}))-\nabla f(x_\tau;\xi_{i_\tau})\mid \mathcal{F}_\tau\bigr]\right\| \\
&\le \mathbb{E}\bigl[\|\mathrm{clip}_\gamma(\nabla f(x_\tau;\xi_{i_\tau}))-\nabla f(x_\tau;\xi_{i_\tau})\|\mid \mathcal{F}_\tau\bigr] \\
& \le \gamma^{1-p}  \mathbb{E}\bigl[\|\nabla f(x_\tau;\xi_{i_\tau})\|^p\mid \mathcal{F}_\tau\bigr] \\
& \le G^p\gamma^{1-p}.
\end{align*}
Therefore, it holds that
\begin{align*}
\sum_{\tau=1}^{t}\beta^{t-\tau}  \mathbb{E}_{S,A}\bigl[\|\mathbb{E}[g_\tau\mid \mathcal{F}_\tau]-\nabla F_S(x_\tau)\|\bigr]
\le G^p\gamma^{1-p}\sum_{\tau=1}^{t}\beta^{t-\tau}
\le \frac{G^p\gamma^{1-p}}{1-\beta}.
\end{align*}
Combining the two bounds and multiplying by $(1-\beta)$, we obtain
\begin{align*}
(1-\beta)  
\mathbb{E}_{S,A}   \left[\left\|\sum_{\tau=1}^{t}\beta^{t-\tau}\bigl(g_\tau-\nabla F_S(x_\tau)\bigr)\right\|\right]
\le 4G(1-\beta)(1-\beta^p)^{-1/p}+G^p\gamma^{1-p}.
\end{align*}
Since $1-\beta^p\ge 1-\beta$ for $\beta\in[0,1)$ and $p\in(1,2]$, it holds that
\begin{align*}
(1-\beta)(1-\beta^p)^{-1/p}\le (1-\beta)^{\frac{p-1}{p}}.
\end{align*}
Substituting the above bounds into \eqref{eq:nsgdcm_delta_unroll} and using \eqref{eq:nsgdcm_delta0_bound} yields
\begin{align}
\label{eq:nsgdcm_delta_bound}
\mathbb{E}_{S,A}\bigl[\|\delta_t\|\bigr]
\le 2G  \beta^t
+4G(1-\beta)^{\frac{p-1}{p}}
+G^p\gamma^{1-p}
+\frac{L\eta  \beta}{1-\beta},
\end{align}
for all $t\ge 1$ and for $t=0$, \eqref{eq:nsgdcm_delta0_bound} gives $\mathbb{E}_{S,A}[\|\delta_0\|]\le 2G$.

Summing \eqref{eq:nsgdcm_delta_bound} over $t=1,\dots,T-1$ and using $\sum_{t=1}^{T-1}\beta^t\le \frac{\beta}{1-\beta}$, we obtain
\begin{align}
\label{eq:nsgdcm_delta_sum}
\sum_{t=0}^{T-1}\mathbb{E}_{S,A}\bigl[\|m_t-\nabla F_S(x_t)\|\bigr]
\le 2G+\frac{2G\beta}{1-\beta}
+4G(1-\beta)^{\frac{p-1}{p}}T
+G^p\gamma^{1-p}T
+\frac{L\eta  \beta}{1-\beta}T.
\end{align}
Substituting \eqref{eq:nsgdcm_delta_sum} into \eqref{eq:nsgdcm_basic} yields
\begin{align*}
\mathbb{E}_{S,A}\bigl[\|\nabla F_S(A(S))\|\bigr]
\le \frac{\Delta}{\eta T}
+\frac{4G}{T}+\frac{4G\beta}{(1-\beta)T}
+8G(1-\beta)^{\frac{p-1}{p}}
+2G^p\gamma^{1-p}
+\frac{2L\eta  \beta}{1-\beta}
+\frac{L\eta}{2}.
\end{align*}
Combining this bound with the generalization decomposition at the beginning completes the proof of Theorem~\ref{thm_risk_nsgd-cm}.

\end{document}